\documentclass{article}

\usepackage[utf8]{inputenc}
\usepackage[T1]{fontenc}
\usepackage{lmodern}
\usepackage{alltt}
\usepackage[spanish,english]{babel}

\usepackage{amsmath, amssymb, amsthm, mathtools}
\usepackage{microtype}
\usepackage{graphicx}
\usepackage{subcaption}
\usepackage{booktabs} 
\usepackage{tabularx, colortbl, multirow, makecell}
\usepackage{pifont} 
\usepackage{pgfplots}
\usepgfplotslibrary{groupplots}
\pgfplotsset{compat=1.18}

\usepackage[accepted]{icml2026}

\usepackage[most]{tcolorbox}
\newcolumntype{Y}{>{\centering\arraybackslash}X}
\newcolumntype{C}{>{\centering\arraybackslash}X}
\tcbuselibrary{breakable, skins}

\newtcolorbox{QABlock}[1]{%
  enhanced, breakable, colback=white, colframe=black, boxrule=0.8pt, arc=2pt,
  left=6pt, right=6pt, bottom=6pt, top=10mm,
  overlay={%
    \fill[black] (frame.north west) rectangle ([yshift=-7mm]frame.north east);
    \node[anchor=west, text=white, font=\bfseries]
      at ([xshift=6pt,yshift=-3.5mm]frame.north west) {#1};
  },
}
\newcommand{\QAItem}[2]{\textbf{#1} & #2\\}

\usepackage{hyperref} 
\usepackage[capitalize,noabbrev]{cleveref} 

\usepackage{amsmath}
\usepackage{amssymb}
\usepackage{mathtools}
\usepackage{amsthm}
\usepackage{hyperref}
\usepackage{xurl}   

\usepackage[capitalize,noabbrev]{cleveref}

\theoremstyle{plain}
\newtheorem{theorem}{Theorem}[section]

\theoremstyle{definition}

\theoremstyle{remark}

\usepackage[textsize=tiny]{todonotes}

\icmltitlerunning{From Knowledge to Inference: Formalizing Specialized Public Health Reasoning on GlobalHealthAtlas}

\begin{document}

\twocolumn[
   \icmltitle{From Knowledge to Inference: Formalizing Specialized Public Health Reasoning on GlobalHealthAtlas}



  \icmlsetsymbol{equal}{*}

  \begin{icmlauthorlist}
    \icmlauthor{Zhaokun Yan}{equal,caict}
    \icmlauthor{Shan Xu}{equal,caict,fudan}
    \icmlauthor{Wuzheng Dong}{equal,caict}
    \icmlauthor{Zhaohan Liu}{equal,caict}
    \icmlauthor{Lijie Feng}{comp}
    \icmlauthor{Chengxiao Dai}{sydney}
    \icmlauthor{Tianqi Chen}{caict}
    \icmlauthor{Binfan Liu}{beiyou}
    \icmlauthor{Yunpu Ma}{lmu}
    \icmlauthor{Wenting Wei}{shanghai}
    \icmlauthor{Yingting Li}{caict}
    \icmlauthor{Yi Zhang}{caict}
    \icmlauthor{Tongning Wu}{caict}
  \end{icmlauthorlist}

  \icmlaffiliation{caict}{China Academy of Information and Communications Technology, Beijing, China}
  \icmlaffiliation{comp}{CRRC Industrial Academy Co., Ltd., Beijing, China}
  \icmlaffiliation{sydney}{The University of Sydney, Sydney, Australia}
  \icmlaffiliation{lmu}{Ludwig Maximilian University of Munich}
  \icmlaffiliation{shanghai}{Shanghai Artificial Intelligence Laboratory}
  \icmlaffiliation{beiyou}{Beijing University of Posts and Telecommunications, Beijing, China}
  \icmlaffiliation{fudan}{Shanghai Institute of Infectious Disease and Biosecurity, School of Public Health, Fudan University, Shanghai, China}

  \icmlcorrespondingauthor{Tongning Wu}{wutongning@caict.ac.cn}

  \icmlkeywords{Machine Learning, ICML}

  \vskip 0.3in
]



\printAffiliationsAndNotice{\icmlEqualContribution}

\begin{abstract}
Public health reasoning requires population level inference grounded in scientific evidence, expert consensus, and safety constraints. However, it remains underexplored as a structured machine learning problem with limited supervised signals and benchmarks. We introduce \textbf{GlobalHealthAtlas}, a large scale multilingual dataset of 280,210 instances spanning 15 public health domains and 17 languages. We further propose a large language model (LLM) assisted construction and quality control pipeline with retrieval, deduplication, evidence grounding checks, and label validation to improve consistency at scale. Finally, we present a domain aligned evaluator distilled from high confidence judgments of diverse LLMs to assess outputs along six dimensions: Accuracy, Reasoning, Completeness, Consensus Alignment, Terminology Norms, and Insightfulness. Together, these contributions enable reproducible training and evaluation of LLMs for safety critical public health reasoning beyond conventional QA benchmarks. We publicly release project codebase, evaluator, and model at: \url{https://github.com/Jan8217/GlobalHealthAtlas}, \url{https://huggingface.co/aerovane0/GlobalHealthAtlas\_Public\_Evaluator} and \url{https://huggingface.co/aerovane0/GlobalHealthAtlas\_Public\_Model}
\end{abstract}

\section{Introduction}
LLMs have made rapid progress in reasoning and general knowledge, achieving strong performance across a wide range of tasks, including several expert facing applications that even surpass human experts\cite{chowdhery2022palmscalinglanguagemodeling}. These advances have also led to notable progress in medical and clinical applications, including diagnosis support, report generation, and medical question answering\cite{kim2024healthllmlargelanguagemodels}. However, public health remains substantially underrepresented in the current LLM development and evaluation, particularly with respect to public health reasoning capabilities\cite{espinosa2024use,zhang2024question}.

The limitations primarily arise from two factors: (I) the lack of high quality data, which constrains effective model inference in public health, the meantime, narrow and imbalanced distributions, covering limited range of health topics\cite{du2024advancingrealtimepandemicforecasting}; (II) Absence of domain aligned evaluation frameworks, as existing metrics adapted from other fields fail to capture core characteristics of public health reasoning, such as population level inference, policy context, and intervention effectiveness. To substantiate these observations, we systematically survey representative medical and public health datasets, and summarize their key properties in Table~\ref{tab:comparison}, highlighting the limited coverage, scale, and evaluation support of existing resources\cite{naliyatthaliyazchayil2025evaluating}.

\begin{table*}[t] 
\centering
\caption{Comparison of representative medical and public health reasoning datasets.
\textbf{Diff.} indicates difficulty levels;
\textbf{Scale} denotes dataset size;
\textbf{Split} indicates the availability of training and test splits;
\textbf{Eval.} refers to the evaluation paradigm;
\textbf{Leak} indicates whether leakage detection is performed. \textbf{GlobalHealthAtlas} offers broader coverage and domain-specific evaluation framework for public health reasoning}
\label{tab:comparison}

\resizebox{\textwidth}{!}{%
\begin{tabular}{lccccccc}
\toprule
& \textbf{Diff.} & \textbf{Scale} & \textbf{Split} & \textbf{Domain} & \textbf{Language} & \textbf{Eval.} & \textbf{Leak} \\ 
\midrule

MedQA [\cite{jin2021disease}] & \text{\sffamily X} & 61097 & \checkmark & 8 & EN/ZH-S/ZH-T & Rule & \text{\sffamily X} \\

shibing624-medical [\cite{shibing624_medical_hf}] & \text{\sffamily X} & 2.43M & \checkmark & 4 & EN/ZH-S & \text{\sffamily X} & \text{\sffamily X} \\

medical-o1-reasoning-SFT [\cite{chen2024huatuogpt}] & \text{\sffamily X} & 40000 & \text{\sffamily X} & 6 & EN/ZH-S & Rule & \text{\sffamily X} \\

HealthSearchQA [\cite{singhal2023large}] & \text{\sffamily X} & 3375 & \checkmark & Open domain & EN & Human+MultiMedQA & \text{\sffamily X} \\

ApolloCorpora [\cite{wang2024apollolightweightmultilingualmedical}] & \text{\sffamily X} & 2.5B tokens & \checkmark & Open domain & 6 & Human+XMedBench & \checkmark \\

MMedC [\cite{qiu2024towards}] & \text{\sffamily X} & 25.5B tokens & \checkmark & Open domain & 6 & Rule+human & \text{\sffamily X} \\

AfriMed-QA [\cite{Nimo_2025}] & \text{\sffamily X} & 15275 & \text{\sffamily X} & 32 & EN & Rule+Human & \text{\sffamily X} \\

publichealth-qa [\cite{xing_han_lu_2024}] & \text{\sffamily X} & 896 & \text{\sffamily X} & 1 & 8 & \text{\sffamily X} & \text{\sffamily X} \\

Instruction-public-health-dataset [\cite{instruction_public_health_hf}] & \text{\sffamily X} & 748 & \text{\sffamily X} & 5 & EN & \text{\sffamily X} & \text{\sffamily X} \\

\midrule
\textbf{GlobalHealthAtlas(ours)} & \textbf{1,2,3} & \textbf{280210} & \checkmark & \textbf{15} & \textbf{17} & \begin{tabular}[c]{@{}l@{}}Spec. Model\end{tabular} & \checkmark \\ 
\bottomrule
\end{tabular}%
}
\end{table*}

To justify GlobalHealthAtlas, we conducted a domain performance gap analysis comparing clinically-aligned models on medical versus public health tasks. We evaluated Medical\_Model(8B), AntAngelMed, and BAICHUAN-M3. Although Medical\_Model performs well on clinical tests, its performance drops significantly on public health tasks. Even large scale models, like AntAngelMed and BAICHUAN-M3, don't saturate GlobalHealthAtlas or Instruction-public-health-dataset, as shown in Table~\ref{tab:model-performance}. These results show that existing clinical benchmarks fail to capture population-level inference, policy context, and intervention effectiveness, highlighting the need for a dedicated public health dataset\cite{dou2026beyond}.

\begin{table*}[htbp]
  \centering
  \caption{Comparison of representative models on clinical and public health reasoning tasks. For fair cross-domain comparison, Medical-Model(8B)was fine-tuned on mainstream clinical datasets (MedQA, medical\_o1\_sft\_CN+EN, and shibing624-medical\_train) with a 5:1 train test split, reserving 1/6 as an independent clinical test set (Medical\_test)}
  \label{tab:model-performance}
  \resizebox{\textwidth}{!}{%
  \begin{tabular}{llcccccc}
    \toprule
    & \textbf{GlobalHealthAtlas} & \textbf{\begin{tabular}[c]{@{}c@{}}GlobalHealthAtlas\\ SC\end{tabular}} & \textbf{\begin{tabular}[c]{@{}c@{}}Instruction-public-\\ health-dataset\end{tabular}} & \textbf{Medical\_test\_modified} & \textbf{\begin{tabular}[c]{@{}c@{}}Medical\_test\_modified\\ SC\end{tabular}} & \textbf{model size} & \textbf{domain} \\
    \midrule
    Public-Model & 6.532 & 89.02\% & 6.096 & 7.208 & 72.26\% & 8B & Public-health \\
    AntAngelMed & 6.449 & 86.13\% & 5.897 & 8.082 & 77.51\% & \begin{tabular}[c]{@{}c@{}}Moe-100B-\\ 6.1B\end{tabular} & \begin{tabular}[c]{@{}l@{}}Integrated Public Health\\ and Medicine\end{tabular} \\
    BAICHUAN-M3 & 5.970 & 78.34\% & 5.994 & 7.917 & 77.18\% & 235B & Medicine \\
    Medical\_Model & 5.250 & 61.14\% & 4.697 & 7.184 & 78.20\% & 8B & Medicine \\
    \bottomrule
  \end{tabular}%
  }
\end{table*}

\textbf{GlobalHealthAtlas} is introduced as a large scale resource containing 280,210 instances; it spans 15 public health domains and 17 languages while covering various task formats and three distinct difficulty levels.

To systematically assess GlobalHealthAtlas and the public health reasoning of LLMs, we introduce domain specific framework tailored to epidemiological constraints. Our approach evaluates outputs across six dimensions: \textbf{Accuracy, Reasoning, Completeness, Consensus Alignment, Terminology Norms, and Insightfulness}, using unified scoring guidelines. We operationalize the process via a fine tuned model trained on domain specific standards, ensuring consistent and scalable assessment.

We undertake extensive benchmarking on \textbf{GlobalHealthAtlas} across diverse public health settings. To systematically examine model behavior, we perform a series of robustness, ablation, and transfer experiments on the dataset, together with analyses across different difficulty levels. Based on these results, we summarize our main contributions as follows.

\begin{itemize}
    \item \textbf{Establishing \textbf{GlobalHealthAtlas} with a Generalizable AI4S Data Construction Paradigm.}  
    We establish a high fidelity multilingual public health dataset grounded in authoritative sources, comprising hundreds of thousands of carefully curated question answer pairs spanning diverse domains, languages, difficulty levels, and question types. Beyond the dataset itself, we distill a general and extensible 'AI for Science' (AI4S) data construction paradigm—a reproducible, AI-driven scientific pipeline designed to overcome domain-specific data scarcity through model-driven generation, controlled sampling, and multi-stage quality refinement. Detailed in Section~\ref{Evidence-Centric} and Appendix~\ref{app:pipeline_details}

    \item \textbf{Constructing Evaluator with Six Orthogonal Dimensions for Public Health Reasoning Assessment.}  
    We learn a dedicated evaluator through fine tuning qwen series models to support principled and reliable instruction based assessment in public health and general medical settings. The evaluation task is explicitly decomposed into six orthogonal dimensions, each addressing a fundamental risk factor in medical reasoning and decision making.

    \item \textbf{Tailoring LLMs Specifically for Public Health.}  
    Building upon the proposed dataset and evaluator, we construct \textbf{Public-Model}. Extensive empirical experiments demonstrate consistent and robust improvements in reasoning capability and reliability, providing compelling evidence that the proposed approach meaningfully advances LLMs reasoning in public health.
\end{itemize}

\section{Related Work}
\textbf{General Reasoning and Multidisciplinary Benchmarks} The initial evaluation of LLMs focused on foundational logical capabilities in hard sciences. Benchmarks like GSM8K \cite{cobbe2021trainingverifierssolvemath} and SciBench \cite{wang2024scibenchevaluatingcollegelevelscientific} established standards for assessing multi step arithmetic and college level scientific problem solving. To capture a broader spectrum of multidisciplinary knowledge, suites such as C-Eval \cite{huang2023cevalmultilevelmultidisciplinechinese} and CMMLU \cite{li2024cmmlumeasuringmassivemultitask} were developed to assess models within specific linguistic and cultural contexts. Recent studies further push the boundaries of reasoning by probing deeper logical presuppositions in Let's CONFER \cite{azin2025letsconferdatasetevaluating} and investigating models' persistent struggles with counterfactual scenarios \cite{yamin2025llms} and parametric knowledge generalization\cite{10.3389/fpubh.2025.1635381}.

\textbf{Domain-Specific Medical and Public Health Datasets} As models advance into professional domains, evaluation has shifted from clinical knowledge retrieval to practical public health applications. Early benchmarks like MedQA \cite{jin2020diseasedoespatienthave} and MedMCQA \cite{pal2022medmcqalargescalemultisubject} utilize professional exams to test medical expertise, which are now being augmented by specialized systems like Med-PaLM \cite{singhal2023expertlevelmedicalquestionanswering, nori2023generalistfoundationmodelsoutcompete}, AlpaCare \cite{zhang2025alpacareinstructiontunedlargelanguagemodels}, and advanced prompting frameworks \cite{maharjan2024openmedlmpromptengineeringoutperform, maharjan2024openmedlm}. Parallel efforts in public health emphasize explainable fact checking \cite{kotonya2020explainableautomatedfactcheckingpublic, Zarharan_2024}, real time population monitoring \cite{joshi2025aibasedpublichealthdata, harris2025evaluatinglargelanguagemodels}, and outreach understanding via PHORECAST \cite{qadri2025phorecastenablingaiunderstanding}. Integration of knowledge graphs \cite{Consoli_2025}, satellite based factors \cite{Wang_2025}, and curated web data \cite{felipe2025thebluescrubsv1comprehensivecuratedmedical, mota2024mmistccrccrealworldmedical} addresses the unique complexities of community level health communication \cite{qiu2025rephqaevaluatingreadabilitylarge, hoyt2025traditionalsurveillanceharnessingexpert}.

\textbf{Evolving Evaluation Paradigms and Data Integrity} Traditional objective metrics are increasingly being replaced by fine-grained evaluation paradigms. Frameworks like G-Eval \cite{liu2023gevalnlgevaluationusing}, MT-Bench \cite{zheng2023judging}, and AlignBench \cite{liu2024alignbenchbenchmarkingchinesealignment} leverage "LLM-as-a-judge" systems to capture human preference, supported by scalable feedback models such as Prometheus \cite{kim2024prometheusinducingfinegrainedevaluation, kim2024prometheus2} and UltraFeedback \cite{cui2024ultrafeedbackboostinglanguagemodels}. However, the reliability of these results is often threatened by data contamination. To ensure genuine reasoning, advanced "Dataset Inference" techniques proposed by Maini et al. \cite{maini2024llmdatasetinferencedid}, Zhou et al. \cite{zhou2025blackboxdatasetinferencellm}, and Xiong et al. \cite{xiong2025heythatsdatalabelonly} offer essential blackbox and label-only detection strategies to distinguish between memorized outputs and actual problem-solving capabilities\cite{zhang2026instruction}.

\section{The PUBLIC HEALTH Dataset: GlobalHealthAtlas}
\subsection{Overview}
\textbf{GlobalHealthAtlas} is a large scale public health dataset comprising 280,210 curated instances, each featuring complete Chain of Thought (CoT) reasoning. Synthesized from over 35,500 diverse sources from \textbf{WHO IRIS}, spanning 15 public health domains across 17 languages to facilitate systematic multilingual evaluation, details are shown in Fig.~\ref{fig:domain_language_distribution} and Fig.~\ref{fig:heatmap}.

\begin{figure*}[t]
    \centering
    \begin{minipage}[t]{0.46\textwidth}
        \centering
        \includegraphics[
            width=\linewidth,
            height=0.3\textheight,
            keepaspectratio
        ]{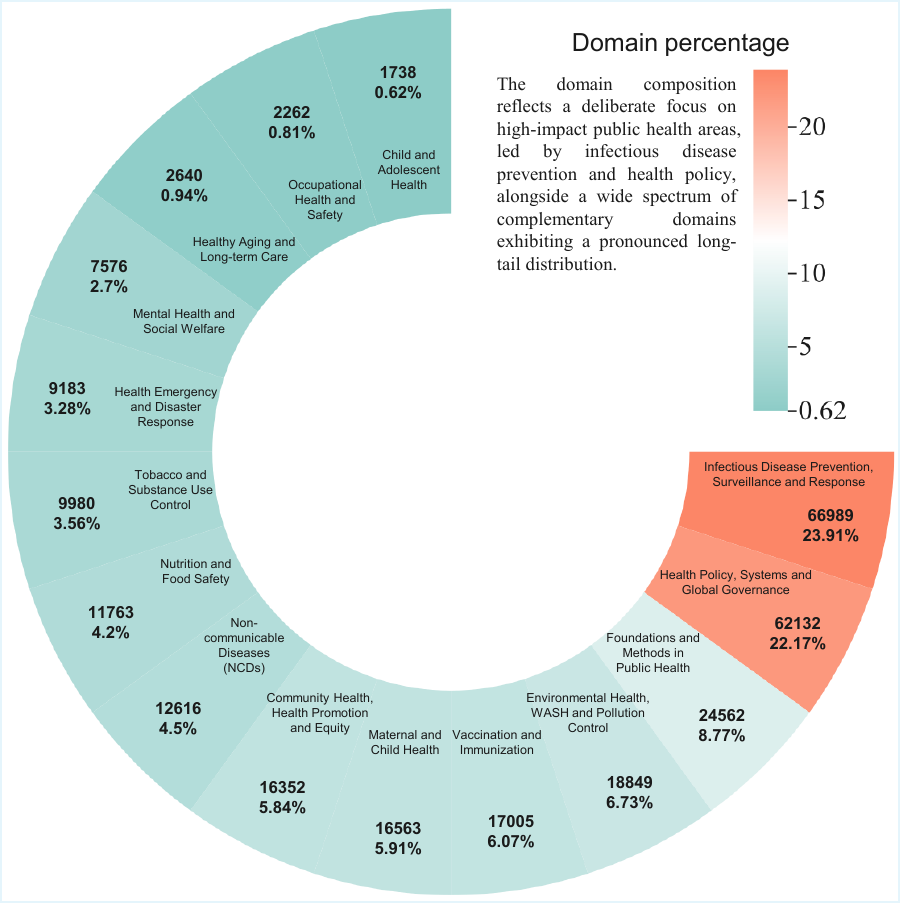}
    \end{minipage}\hfill
    \begin{minipage}[t]{0.46\textwidth}
        \centering
        \includegraphics[
            width=\linewidth,
            height=0.3\textheight,
            keepaspectratio
        ]{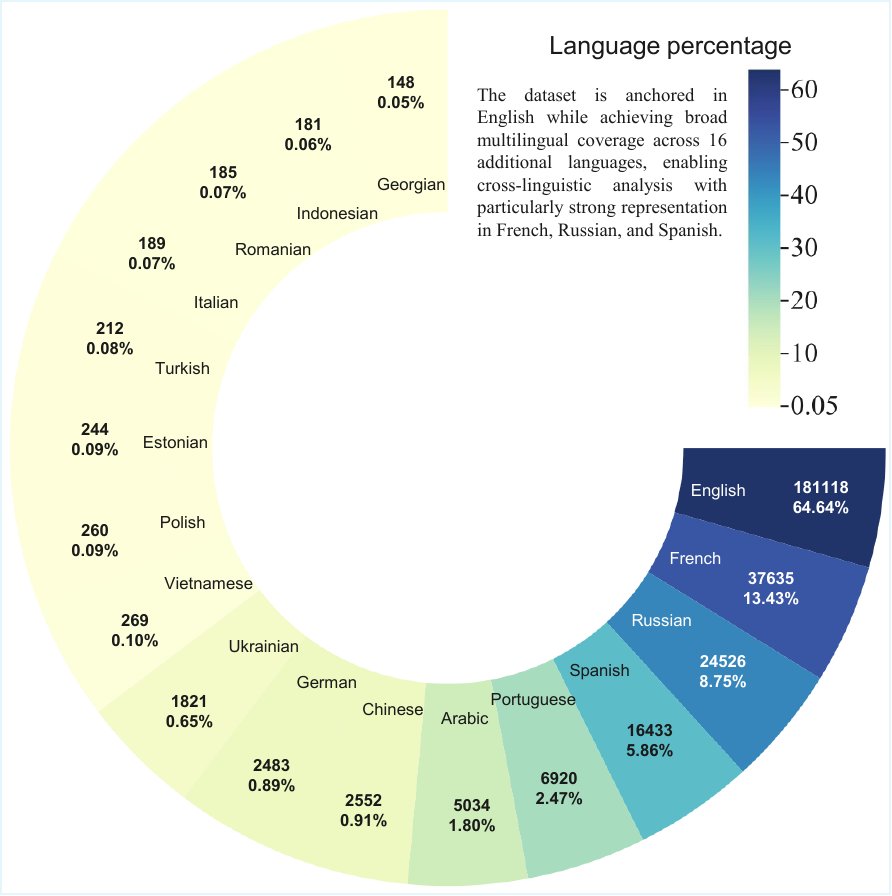}
    \end{minipage}
    \caption{
        \textbf{GlobalHealthAtlas} comprises a total of 280,210 instances, featuring Question-Answer (138,267) and Single-Choice (141,943) formats across three difficulty levels (A/B/C). The domain composition (left) prioritizes high-impact areas like infectious diseases and health policy, balanced by a diverse long-tail of 15 complementary fields. The linguistic distribution (right) is anchored in English and extends across 17 languages to facilitate cross-lingual public health reasoning.
    }
    \label{fig:domain_language_distribution}
\end{figure*}

The corpus features a near balanced task distribution, with 49.42\% in Question Answering format and 50.58\% in Single Choice format. We categorize difficulty into three distinct levels: Level A (Academic / Professional, 26.26\%) targeting graduate level epidemiology and policy; Level B (General Knowledge, 69.33\%) focusing on disease prevention and health logic; and Level C (Popular Science, 4.41\%) covering basic health literacy. 

To ensure high data quality and scientific rigor, we established a \textbf{stringent construction pipeline} involving multi stage filtering and systematic quality control, Further details regarding the construction protocol prompt and data partitioning are provided in Figure~{\ref{fig:pipeline}} in Appendix~\ref{app:pipeline_details}. Furthermore, we conducted proactive leakage detection across the entire corpus to guarantee the integrity of our evaluation benchmarks. 

\subsection{Evidence-Centric Data Engineering Pipeline}
\label{Evidence-Centric}

Heterogeneous public health PDFs are converted into structured Markdown through a deterministic parsing routine that preserves document hierarchies, including headings and list structures. The normalized Markdown is subsequently segmented into semantically coherent evidence chunks optimized for LLMs context constraints.

\subsubsection{Prompt Driven Synthesis Hub} 
The core of pipeline is prompt driven synthesis engine operating on two parallel tracks. Guided by expert designed instructions, the first track generates Question Answering or Single Choice pairs, while the second track distills complex reasoning trace (complexCOT). Each instance is annotated with multi dimensional metadata, including \emph{domain}, \emph{language}, \emph{difficulty}, and \emph{label}. The complete set of synthesis prompts is provided in Appendix~\ref{app:pipeline_details}.

\subsubsection{Quality Gate and Schema Enforcement}

To ensure scientific rigor, LLM as a judge quality gate driven by specialized scoring prompts and schema-level constraints are employed. Each candidate is adjudicated along four dimensions: Question and Option Quality ($Q$), Answer Quality ($A$), Text Relevance ($R$), and Overall Consistency ($C$), while the weighting schema is bifurcated according to item type. For single-choice questions, the final quality score is defined as:

\begin{equation}
S_{\mathrm{SC}} = 0.45Q + 0.25A + 0.20R + 0.10C
\end{equation}
where greater emphasis is placed on question formulation and structural validity. For question-answer items, the score is computed as:
\begin{equation}
S_{\mathrm{QA}} = 0.25Q + 0.35A + 0.25R + 0.15C
\end{equation}

Weights are prioritized to emphasize structural integrity ($Q$) and factual accuracy ($A$) as primary medical validity determinants. 7 senior global health experts, including WHO officials averaging over 8 years' experience, finalized the weights via structured voting. This consensus aligns our scoring of factual accuracy and reasoning with international public health standards. Strict zero-tolerance rules assign a score of 0 for domain irrelevance, language inconsistency, or missing materials; only candidates with $S \geq 4.5$ are retained. We theoretically validate this strategy in Appendix~\ref{app:prove_details1}, proving that threshold-based filtration bounds generalization risk by minimizing label noise. Evaluation criteria are detailed in Appendix~\ref{app:pipeline_details}. We also conduct a sensitivity analysis on the hyperparameters to assess their impact on model evaluation stability, as shown in Table~\ref{tab:sensitivity_sc} and Table~\ref{tab:sensitivity_qa}

Besides, GlobalHealthAtlas's raw materials are derived from authoritative official documents, primarily from WHO IRIS and other trusted public health sources, so LLMs are used mainly for structured extraction, format transformation, and candidate generation rather than fact invention or producing unconstrained knowledge. Moreover, evaluation dimensions and weighting coefficients are specified and corroborated with expert input, introducing exogenous human judgment into the quality control protocol.

We further conducted a dedicated human evaluation to provide manual attestation of dataset quality. Specifically, we sampled 5\% of the full dataset, covering all 15 domains and 17 languages, including 7,077 single choice items and 6,933 question answer items, for a total of 14,010 examples. 14 experts manually scored these samples on a 5-point scale using same dimensions, criteria, and weighting schemes as LLM-based quality gate. The resulting average human score was 4.503, which provides supportive evidence that GlobalHealthAtlas exhibits high quality and the automated scoring is broadly aligned with expert judgment.

\subsection{Data Split}
To facilitate model development and rigorous assessment, the corpus is partitioned into three distinct subsets, all of which uniformly include high quality answers and CoT to support robust public health reasoning. The training set comprises 247,599 instances, while the test set consists of 27,511 instances, 5,100 instances are reserved for the construction and calibration of the evaluation model. This cohesive inclusion of reasoning paths across all partitions ensures the necessary supervision for both fine tuning and the systematic evaluation of complex logic.

\subsection{Domain-Aligned Evaluator Construction and Verification}
A domain specific, fully fine tuned evaluation model is developed to implement LLM-as-a-Judge protocol. This scorer functions both as benchmark tool and as critical quality gate within the data engineering pipeline.

\subsubsection{Domain-Specific Multi-Dimensional Scoring Rubric}
To accurately capture the complexity of public health reasoning, a scoring system comprising six orthogonal dimensions is designed: \textbf{Accuracy} evaluates the correctness of core facts, numerical data, and named entities; \textbf{Reasoning} checks whether the causal inference chain adheres to epidemiological logic and identifies logical leaps or circular reasoning; \textbf{Completeness} measures whether the response covers all Key Information Points (KIPs) present in the reference answer; \textbf{Consensus Alignment} assesses adherence to authoritative scientific consensus and safety guidelines; \textbf{Terminology Norms} examines the density and precision of professional terminology usage; and \textbf{Insightfulness} evaluates whether the model explains the underlying mechanisms (the ''why'') rather than merely describing phenomena. The rubric serves as core knowledge system for the evaluation model and guides the subsequent annotation of the training corpus. Detailed definitions and criteria for the six scoring dimensions are provided in Appendix~\ref{app:Details_of_Evaluator}. 

\subsubsection{Construction Strategy for Evaluator Fine-Tuning Data}
High quality evaluation data construction pipeline integrates three key techniques for judgment capability and strong robustness:
\begin{itemize}
    \item \textbf{Multi-Source Heterogeneous Sampling}: To mitigate risk of single-pipeline bias and ensure model optimizes for ground-truth medical consensus rather than internal stylistic artifacts, diverse LLMs (e.g., qwen, deepseek, GPT) are used to generate responses. This prevents style overfitting and ensures robust, style-invariant judgment, as shown in table~\ref{tab:answer_stats} of Appendix~\ref{app:Answer_Construction}).
    \item \textbf{Hierarchical Consensus Filtering}: A strict three-tier mechanism consisting of cross scoring, automatic arbitration, and expert review is employed to ensure high confidence and ground truth labels. The formal scoring arbitration process is presented in Algorithm~\ref{alg:scoring_system}, and the proof of variance reduction is provided in Appendix~\ref{app:prove_details2}.
    \item \textbf{Adversarial Construction}: Robustness samples generated via adversarial perturbations are incorporated to enhance sensitivity to specious content and safety boundaries.
\end{itemize}

\begin{algorithm}[h]
\caption{Consensus-based Scoring Verification}
\label{alg:scoring_system}
\small
\begin{algorithmic}[1]
    \STATE Get scores $s_Q$ (qwen3-Max) and $s_D$ (deepseek-V3.2)
    \IF{$|s_Q - s_D| \le 2$}
        \STATE \textbf{Return} $s_Q$
    \ELSE
        \STATE Adjudicate via gpt-5, obtaining $s_{adj}$
        \IF{$s_{adj} \in \{s_Q, s_D\}$} \STATE \textbf{Return} $s_{adj}$
        \ELSE \STATE \textbf{Return} Human Review
        \ENDIF \ENDIF
\end{algorithmic}
\end{algorithm}

\subsubsection{Evaluator Model Fine-Tuning}

\begin{table*}[htbp]
\centering
\small
\caption{Comparison of Evaluator Standard Agreement and Stability. Agreement is measured against human experts (MAE, MSE, RMSE, ICC). Stability metrics include the rate of identical scores (IdenticalRate), average standard deviation (StdDev), and the ratio of score ranges exceeding 3 (RangeGT3Ratio). Detailed definitions are provided in Appendix~\ref{subsec:experiment_evaluator}. The $\downarrow$ and $\uparrow$ arrows indicate the direction of better performance. }
\label{tab:evaluator_agreement_stability}

\newcolumntype{Y}{>{\centering\arraybackslash}X}
\newcolumntype{Z}{>{\centering\arraybackslash}p{0.065\textwidth}}

\begin{tabularx}{\textwidth}{l|ZZZZ|YYY}
\toprule
\multirow{3}{*}{\textbf{Model}} 
& \multicolumn{4}{c|}{\textbf{Standard Agreement}} 
& \multicolumn{3}{c}{\textbf{Stability}} \\
\cmidrule(lr){2-5}\cmidrule(lr){6-8}

& \mbox{MAE $\downarrow$}
& \mbox{MSE $\downarrow$}
& \mbox{RMSE $\downarrow$}
& \mbox{ICC $\uparrow$}
& \mbox{IdenticalRate $\uparrow$}
& \mbox{StdDev $\downarrow$}
& \mbox{RangeGT3Ratio $\downarrow$} \\
\midrule

\textbf{Public-Evaluator(ours)} 
& \cellcolor{blue!20}\textbf{1.4259}
& \cellcolor{green!20}\textbf{4.1296}
& \cellcolor{blue!20}\textbf{1.9558}
& \cellcolor{green!20}\textbf{0.9735}
& \cellcolor{green!20}\textbf{0.5533}
& \cellcolor{green!20}\textbf{0.2772}
& \cellcolor{blue!20}\textbf{0.0367} \\

claude-3-5-sonnet-20241022 
& 1.5233 & 4.6500 & 2.1234 & 0.9186
& 0.3767 & 0.3180 & 0.0200 \\

gemini-3-flash-preview
& 2.0375 & 7.2687 & 2.6232 & 0.8970
& 0.3533 & 0.5837 & 0.1750 \\

gpt-5-mini 
& 1.4383 
& \cellcolor{blue!20}\textbf{4.1617} 
& \cellcolor{green!20}\textbf{1.9431} 
& \cellcolor{blue!20}\textbf{0.9733}
& 0.3067 & 0.3590 & 0.0200 \\
\midrule
deepseek-v3.2 
& 1.5883 & 4.8283 & 2.1459 & 0.9515
& 0.1250 & 0.7674 & 0.2050 \\

deepseek-r1 
& 1.4621 & 4.3514 & 2.086 & 0.9642 
& 0.285 & 0.4512 & 0.075 \\

kimi-k2-thinking 
& 1.5034 & 4.631 & 2.152 & 0.9578 
& 0.3433 & 0.3955 & 0.052 \\

glm-4.7 
& 1.5750 & 4.8050 & 2.1488 & 0.9512
& 0.4633 & 0.3226 & 0.0350 \\

qwen3-8b 
& 1.7340 & 5.9124 & 2.3722 & 0.9133
& 0.3264 & 0.4224 & 0.0451 \\

qwen-plus 
& \cellcolor{green!20}\textbf{1.4216} & 4.2550 & 2.0001 & 0.9586
& 0.3450 & 0.3825 & 0.0417 \\

qwen3-max 
& 1.4833 & 4.2767 & 2.0104 & 0.9593
& 0.3588 & \cellcolor{blue!20}\textbf{0.3182} & \cellcolor{green!20}\textbf{0.0153} \\

qwen-flash 
& 2.4033 & 10.4800 & 3.1473 & 0.8744
& 0.3667 & 0.7461 & 0.2744 \\

qwen3-32b 
& 1.4811 & 4.5190 & 2.0768 & 0.9519
& 0.1456 & 0.6827 & 0.1544 \\
\midrule
\makecell[l]{Clinical-Judge} 
& 2.0122 & 7.1094 & 2.6377 & 0.3591
& \cellcolor{blue!20}\textbf{0.4867} & 0.4810 & 0.1067 \\

\makecell[l]{Expert-Guided Evaluator} 
& 2.1633 & 9.7833 & 3.0840 & 0.8541
& 0.4233 & 0.4677 & 0.075 \\
\bottomrule
\end{tabularx}
\vspace{2pt}
\end{table*}

Based on the high-quality dataset constructed above, we employed Low-Rank Adaptation (LoRA) supervised fine-tuning on the open weight qwen3-8b model~\cite{hu2021loralowrankadaptationlarge} to train the scorer Public-Evaluator. Through this process, six dimensional scoring criteria and the discriminative capability for complex samples were implicitly encoded into the model parameters. The resulting scorer is capable of stably outputting professional scores aligned with public health standards without the need for complex prompt engineering. Detailed construction procedures and training parameters can be found in Appendix~\ref{app:scorerdesign}.

\subsubsection{Evaluator Consistency and Stability Verification}
To comprehensively verify the reliability of our scorer, we conducted rigorous testing focused on Standard Agreement and Stability. Specifically, agreement was assessed by benchmarking against 100 independent human-annotated scores, while stability was evaluated through ten independent inference runs. The results are summarized in Table~\ref{tab:evaluator_agreement_stability}. We compared \textbf{Public-Evaluator} against generic large language models using identical prompts, as well as specialized evaluators in the medical and public health domains, including Clinical-Judge~\cite{Croxford2025EvaluatingCA} (few-shot prompting) and Expert-Guided Evaluator~\cite{Zhou2026Automating} (structured reasoning).

In terms of agreement with expert judgment, our Public-Evaluator achieved the lowest Mean Absolute Error (MAE = 1.4259) and the highest Intraclass Correlation Coefficient (ICC = 0.9735), indicating that its judgment logic closely aligns with human experts. Furthermore, the scorer demonstrated exceptional robustness in repeated inference tests, achieving the highest Identical Rate (0.5533) and the lowest Score Standard Deviation (StdDev=0.2772). For further discussions regarding evaluator baselines, selection of Qwen3-8B, and analysis of potential stylistic overfitting are in Appendix~\ref{Stylistic-Overfitting}.

Finally, to explicitly address concerns regarding whether the Public-Evaluator over-penalizes valid but non-public-health-specific reasoning on out-of-distribution(OOD) medical tasks, we conducted a direct comparison experiment. We utilized our Public-Evaluator and a General-Medical Baseline Evaluator (GPT-5 prompted with standard clinical guidelines) to score 100 reasoning-intensive MedQA responses generated by GPT-5 using our 10-point scale. As detailed in Table~\ref{tab:ood_evaluation}, the results explicitly validate that our evaluator maintains high consistency with the general baseline and exhibits no over-penalization risk in the reasoning dimension.

\begin{table}[htbp] 
\centering
\resizebox{\linewidth}{!}{%
    \begin{tabular}{lcc}
    \toprule
    \textbf{Metric on OOD Data (MedQA)} & \textbf{Reasoning Dimension} & \textbf{Terminology Norms Dimension} \\
    \midrule
    Mean Score(Public-Evaluator)       & 8.12                         & 7.65                                 \\
    Mean Score(General-Evaluator)      & 8.18                         & 8.15                                 \\
    \textbf{Pearson Correlation ($r$)} & 0.83                         & 0.78                                 \\
    Over-Penalization Rate             & 0.00\%                       & N/A                                  \\
    \bottomrule
    \end{tabular}%
}
\caption{Results of OOD evaluation on MedQA dataset, comparing Public-Evaluator with General-Medical baseline.}
\label{tab:ood_evaluation}
\end{table}

\section{Experiment}
To rigorously establish the effectiveness of both our proposed framework and the constructed public health dataset, as well as their robustness and generalization capability, we conduct a comprehensive suite of experiments, including large-scale benchmarking across models, domains, and languages (Sec.~{\ref{sec:benchmarking}}), ablation studies on supervised fine-tuning (Sec.~{\ref{sec:ablation}}), incremental data scaling experiments (Sec.~{\ref{sec:incremental}}), transfer experiments on cross-domain reasoning benchmarks(Sec.~{\ref{sec:transfer}}), robustness evaluation under diverse perturbations(Sec.~{\ref{sec:robustness}}).

The \textbf{Public-Model} was developed to validate the effectiveness of the dataset, using supervised fine-tuning with Low-Rank Adaptation (LoRA) on the training set of 247,599 instances. This model is capable of addressing specialized queries in the medical field with high professionalism. Specific hyperparameters and training details are provided in Appendix~\ref{app:publicmodel}.

Throughout this section, unless otherwise specified, all evaluation scores are reported on a 0-10 scale, where 10 represents expert-level performance. The reported score for each model is calculated as the average across the six evaluation dimensions.
\begin{table*}[t] 
\centering
\footnotesize 
\setlength{\tabcolsep}{2.3pt} 
\caption{Main results on the \textbf{GlobalHealthAtlas} benchmark.
The overall \textbf{Score} is computed by aggregating performance across the six evaluation dimensions, and this table reports results on a representative subset of the benchmark, covering four public health domains: \textbf{Infectious Disease Prevention, Surveillance and Response (IDP)}, \textbf{Health Policy, Systems and Global Governance (HPG)}, \textbf{Vaccination and Immunization (V\&I)}, and \textbf{Child and Adolescent Health (CAH)}, and three languages (\textbf{English, Chinese, and Spanish}). Results are reported across three predefined difficulty levels (\textbf{A: Academic/Professional, B: General Knowledge, C: Public Awareness}) and question types (\textbf{Single-Choice and Question-Answer}). Models are grouped by series for readability. Higher values indicate better overall performance. }
\label{tab:main_results_double_column}
\begin{tabularx}{\textwidth}{l | c | CCCC | CCC | CCC | CC} 
\toprule
\multirow{2}{*}{\textbf{Model}} & \multirow{2}{*}{\textbf{Score}} & \multicolumn{4}{c|}{\textbf{Domain (Representative)}} & \multicolumn{3}{c|}{\textbf{Language}} & \multicolumn{3}{c|}{\textbf{Difficulty}} & \multicolumn{2}{c}{\textbf{Type}} \\
\cmidrule(lr){3-6} \cmidrule(lr){7-9} \cmidrule(lr){10-12} \cmidrule(lr){13-14}
& & \textbf{IDP} & \textbf{HPG} & \textbf{V\&I} & \textbf{CAH} & \textbf{EN} & \textbf{ZH} & \textbf{ES} & \textbf{A} & \textbf{B} & \textbf{C} & \textbf{SC} & \textbf{QA} \\
\midrule

\rowcolor[HTML]{FFF9E1} \multicolumn{14}{c}{\textit{Reasoning \& Thinking Models}} \\ 
gemini-3-flash-preview-thinking & 6.282 & 6.344 & 6.212 & 6.343 & 6.506 & 6.284 & 6.498 & 6.388 & 6.499 & 6.194 & 6.328 & 6.625 & 5.911 \\
grok-3-mini & 6.352 & 6.478 & 6.187 & 6.410 & 6.997 & 6.343 & 6.350 & 6.683 & 6.884 & 6.148 & 6.413 & 7.133 & 5.551 \\

gpt-5-mini & 3.634 & 3.991 & 3.307 & 3.810 & 5.010 & 3.842 & 3.159 & 3.436 & 3.999 & 3.443 & 4.768 & 4.953 & 2.612 \\ 
claude-sonnet-4-5-20250929-thinking & 6.534 & 6.553 & 6.427 & 6.585 & 7.139 & 6.635 & 6.622 & 6.382 & 6.856 & 6.416 & 6.480 & 7.006 &	\cellcolor{blue!20}\textbf{6.047} \\

kimi-k2-thinking &  \cellcolor{blue!20}\textbf{6.930} & 
\cellcolor{blue!20}\textbf{7.023} & 
\cellcolor{blue!20}\textbf{6.806} & 
\cellcolor{blue!20}\textbf{6.945} & 
7.927 & 
\cellcolor{blue!20}\textbf{6.985} & 
\cellcolor{blue!20}\textbf{7.188} & 
\cellcolor{blue!20}\textbf{7.153} & 
\cellcolor{blue!20}\textbf{7.632} & 
\cellcolor{blue!20}\textbf{6.659} & 
\cellcolor{blue!20}\textbf{7.001} & 
7.912 & 5.917 \\
\addlinespace[0.3em]

\rowcolor[HTML]{FFF9E1} \multicolumn{14}{c}{\textit{qwen Series}} \\ 
qwen3-8b\cite{yang2025qwen3technicalreport} & 5.934 & 6.034 & 5.822 & 5.922 & 6.906 & 5.956 & 6.028 & 6.090 & 6.648 & 5.650 & 6.158 & 6.987 &	4.855 \\
Public-Model(based on qwen3-8b) & 6.619 & 6.696 & 6.566 & 6.541 & \cellcolor{blue!20}\textbf{8.022} & 6.702 & 6.857 & 6.735 & 7.595 & 6.230 & 6.954 & \cellcolor{blue!20}\textbf{8.063} & 5.141 \\

qwen3-32b\cite{yang2025qwen3technicalreport} & 5.259 & 5.247 & 5.309 & 5.064 & 6.520 & 5.415 & 5.324 & 5.276 & 5.979 & 4.977 & 5.423 & 6.357 & 4.135 \\
qwen2.5-7b-instruct\cite{qwen2025qwen25technicalreport} & 3.660 & 3.852 & 3.469 & 3.696 & 4.482 & 3.634 & 3.824 & 3.985 & 4.127 & 3.468 & 3.909 & 4.418 & 2.884  \\

qwen2.5-72b-instruct\cite{qwen2025qwen25technicalreport} & 4.839 & 4.768 & 4.829 & 4.874 & 4.548 & 4.869 & 4.733 & 4.951 & 4.578 & 4.941 & 4.801 & 4.578 & 5.107  \\
qwq-32b\cite{yang2025qwen3technicalreport} & 6.173 & 6.286 & 6.038 & 6.223 & 6.937 & 6.175 & 6.104 & 6.348 & 6.809 & 5.922 & 6.340 & 7.127 & 5.194 \\
\addlinespace[0.3em]

\rowcolor[HTML]{FFF9E1} \multicolumn{14}{c}{\textit{DeepSeek Series}} \\ 
deepseek-r1\cite{Guo_2025} & 6.335 & 6.470 & 6.180 & 6.414 & 7.217 & 6.316 & 6.271 & 6.661 & 7.007 & 6.068 & 6.554 & 7.308 & 5.341  \\
deepseek-v3.2\cite{deepseekai2025deepseekv3technicalreport} & 6.365 & 6.471 & 6.221 & 6.448 & 6.914 & 6.278 & 6.089 & 6.847 & 6.897 & 6.159 & 6.448 & 7.142 & 5.569 \\

deepseek-r1-distill-qwen-7b & 5.004 & 4.947 & 5.039 & 4.891 & 5.735 & 5.091 & 5.136 & 5.123 & 5.570 & 4.780 & 5.164 & 5.899 & 4.087  \\
deepseek-r1-distill-llama-8b & 5.822 & 5.873 & 5.765 & 5.678 & 6.715 & 5.892 & 5.491 & 6.028 & 6.473 & 5.573 & 5.961 & 6.785 & 4.840 \\
deepseek-r1-distill-qwen-32b & 6.238 & 6.326 & 6.148 & 6.245 & 7.122 & 6.230 & 6.091 & 6.487 & 6.893 & 5.983 & 6.373 & 7.237 & 5.215  \\
\addlinespace[0.3em]

\rowcolor[HTML]{FFF9E1} \multicolumn{14}{c}{\textit{Other Mainstream Open-Source Series}} \\ 
Llama-3.2-3B-Instruct\cite{grattafiori2024llama3herdmodels} & 5.276 & 5.380 & 5.233
 & 5.115 & 6.453 & 5.465 & 5.311 & 5.224 & 6.183 & 4.925 & 5.421 & 6.589 & 3.933 \\
Llama-3.1-8B-Instruct & 5.464 & 5.539 & 5.394 & 5.386 & 6.689 & 5.406 & 5.696 & 5.854 & 6.261 & 5.149 & 5.666 & 6.722 & 4.174 \\

glm-4.7 & \cellcolor{green!20}\textbf{7.326} & \cellcolor{green!20}\textbf{7.490} & \cellcolor{green!20}\textbf{7.175} & \cellcolor{green!20}\textbf{7.433} & 
\cellcolor{green!20}\textbf{8.432} & 
\cellcolor{green!20}\textbf{7.314} & \cellcolor{green!20}\textbf{7.351} & \cellcolor{green!20}\textbf{7.545} & \cellcolor{green!20}\textbf{8.108} & \cellcolor{green!20}\textbf{7.019} & 
\cellcolor{green!20}\textbf{7.525} & \cellcolor{green!20}\textbf{8.484} & \cellcolor{green!20}\textbf{6.151} \\
\bottomrule
\end{tabularx}
\end{table*}

\subsection{Data Leakage}

To assess potential data leakage, we perform a continuation-based n-gram matching test. For each question, we sample five truncation points where the model must deterministically generate the immediately following $n$ words (temperature $=0$) to count as a hit. Evaluating both 5-gram and 10-gram settings, we flag questions with a hit ratio $\ge 0.5$ as suspicious. Table~\ref{tab:leakage-final-no-extra-pkg} summarizes the leakage proportion ($prop_1$) and accuracy on suspicious questions ($prop_2$). Exact matches are rare, indicating minimal verbatim memorization risk.

\begin{table}[htbp]
\centering
\small
\caption{Data Leakage Detection (5-gram vs. 10-gram). $prop_1$: leaked data proportion; $prop_2$: model accuracy on leaked data.}
\label{tab:leakage-final-no-extra-pkg}
\setlength{\tabcolsep}{1pt} %
\begin{tabular}{l|cc|cc} 
\hline
\multirow{2}{*}{\textbf{Model}} & \multicolumn{2}{c|}{\textbf{5-gram}} & \multicolumn{2}{c}{\textbf{10-gram}} \\ \cline{2-5} 
 & $prop_1$ & $prop_2$ & $prop_1$ & $prop_2$ \\ \hline
claude-3-5-sonnet-20241022 & 0\% & 0\% & 0\% & 0\% \\
gemini-3-flash-preview-thinking & 0\% & 0\% & 0\% & 0\% \\
grok-3-mini & 0.05\% & 100\% & 0\% & 0\% \\
qwen3-14b\cite{yang2025qwen3technicalreport} & 0\% & 0\% & 0\% & 0\% \\
qwen3-max\cite{yang2025qwen3technicalreport} & 0.97\% & 50\% & 0\% & 0\% \\ 
deepseek-v3\cite{deepseekai2025deepseekv3technicalreport} & 0.05\% & 0\% & 0\% & 0\% \\ \hline
\end{tabular}
\end{table}

\subsection{Large-Scale Multilingual and Multi-Domain Benchmarking}
\label{sec:benchmarking}

\begin{table*}[t]
\centering
\scriptsize 
\caption{Evaluation results of reasoning models on public health (first 3 metrics) and medical (last 3 metrics) benchmarks. }
\label{tab:evaluation_results}
\begin{tabularx}{\textwidth}{l c @{\hskip 10pt} YYY c YYY}
\toprule
 & & \multicolumn{3}{c}{\textbf{Public Health}} & & \multicolumn{3}{c}{\textbf{Medical}} \\
\cmidrule(lr){3-5} \cmidrule(lr){7-9} 
\textbf{Model} & \textbf{SFT} & \textbf{GlobalHealthAtlas (ours)} & \textbf{Instruction-public-health-dataset} & \textbf{publichealth-qa} & & \textbf{MedQA} & \textbf{medical-o1-reasoning-SFT} & \textbf{shibing624-medical} \\
\midrule

qwen3-4b & \textcolor{red}{\ding{55}} & 5.773 & 5.821 & 5.611 & & 7.031 & 5.464 & 4.839 \\
                     & \textcolor{green}{\checkmark} & 6.382 (\textcolor{blue}{$\uparrow$ 0.609}) 
                     & 6.039 (\textcolor{blue}{$\uparrow$ 0.218}) 
                     & 5.922 (\textcolor{blue}{$\uparrow$ 0.311}) 
                     & & 7.196 (\textcolor{blue}{$\uparrow$ 0.165}) 
                     & 5.628 (\textcolor{blue}{$\uparrow$ 0.164}) 
                     & 5.019 (\textcolor{blue}{$\uparrow$ 0.180}) \\
\addlinespace[0.5em]

qwen3-8b & \textcolor{red}{\ding{55}} & 6.490 & 6.077 & 6.075 & & 7.668 & 6.258 & 5.785 \\
                     & \textcolor{green}{\checkmark} & 6.532 (\textcolor{blue}{$\uparrow$ 0.042}) 
                     & 6.096 (\textcolor{blue}{$\uparrow$ 0.019}) 
                     & 5.652 (\textcolor{red}{$\downarrow$ 0.423}) 
                     & & 7.967 (\textcolor{blue}{$\uparrow$ 0.316}) 
                     & 6.894 (\textcolor{blue}{$\uparrow$ 0.636}) 
                     & 6.268 (\textcolor{blue}{$\uparrow$ 0.483}) \\
\addlinespace[0.5em]

qwen3-14b & \textcolor{red}{\ding{55}} & 6.134 & 6.190 & 5.771 & & 7.744 & 5.906 & 5.322 \\
                     & \textcolor{green}{\checkmark} & 6.565 (\textcolor{blue}{$\uparrow$ 0.431}) 
                     & 6.252 (\textcolor{blue}{$\uparrow$ 0.062}) 
                     & 6.249 (\textcolor{blue}{$\uparrow$ 0.478}) 
                     & & 7.827 (\textcolor{blue}{$\uparrow$ 0.083}) 
                     & 6.643 (\textcolor{blue}{$\uparrow$ 0.737}) 
                     & 5.840 (\textcolor{blue}{$\uparrow$ 0.518}) \\

qwen3.5-122B-A10B & \textcolor{red}{\ding{55}} & 5.952 & 4.199 & 3.872 & &  8.720 & 6.391 & 3.529 \\
& \textcolor{green}{\checkmark} 
                    & 6.819 (\textcolor{blue}{$\uparrow$ 0.867}) 
                    & 6.669 (\textcolor{blue}{$\uparrow$ 2.470}) 
                    & 6.176 (\textcolor{blue}{$\uparrow$ 2.304}) 
                    & & 9.343 (\textcolor{blue}{$\uparrow$ 0.623}) 
                    & 7.947 (\textcolor{blue}{$\uparrow$ 1.556}) 
                    & 5.901 (\textcolor{blue}{$\uparrow$ 2.372}) \\
\midrule 

Llama-3.2-3B-Instruct & \textcolor{red}{\ding{55}} & 5.258 & 4.175 & 3.557 & & 6.326 & 4.452 & 3.057 \\
                     & \textcolor{green}{\checkmark} & 6.105 (\textcolor{blue}{$\uparrow$ 0.847}) 
                     & 5.921 (\textcolor{blue}{$\uparrow$ 1.746}) 
                     & 4.705 (\textcolor{blue}{$\uparrow$ 1.148}) 
                     & & 6.235 (\textcolor{red}{$\downarrow$ 0.091}) 
                     & 4.493 (\textcolor{blue}{$\uparrow$ 0.041}) 
                     & 3.852 (\textcolor{blue}{$\uparrow$ 0.795}) \\
\addlinespace[0.5em]

Llama-3.1-8B-Instruct & \textcolor{red}{\ding{55}} & 5.461 & 4.730 & 4.820 & & 6.280 & 4.511 & 3.953 \\
                     & \textcolor{green}{\checkmark} & 6.425 (\textcolor{blue}{$\uparrow$ 0.964}) 
                     & 6.195 (\textcolor{blue}{$\uparrow$ 1.465}) 
                     & 5.370 (\textcolor{blue}{$\uparrow$ 0.550}) 
                     & & 6.720 (\textcolor{blue}{$\uparrow$ 0.440}) 
                     & 5.057 (\textcolor{blue}{$\uparrow$ 0.546}) 
                     & 4.211 (\textcolor{blue}{$\uparrow$ 0.258}) \\
\addlinespace[0.5em]

Llama-3.3-70B-Instruct & \textcolor{red}{\ding{55}} & 5.467 & 4.739 & 4.836 & & 6.266 & 4.499 & 3.995 \\
                     & \textcolor{green}{\checkmark} & 6.760 (\textcolor{blue}{$\uparrow$ 1.293}) 
                     & 6.273 (\textcolor{blue}{$\uparrow$ 1.534}) 
                     & 5.831 (\textcolor{blue}{$\uparrow$ 0.995}) 
                     & & 7.937 (\textcolor{blue}{$\uparrow$ 1.671}) 
                     & 6.086 (\textcolor{blue}{$\uparrow$ 1.587}) 
                     & 4.687 (\textcolor{blue}{$\uparrow$ 0.692}) \\
\addlinespace[0.5em]

\bottomrule
\end{tabularx}
\end{table*}
Evaluation was conducted on a held-out test set of 27,511 instances. Responses generated by each LLM were scored with the Public Evaluator. Table~\ref{tab:main_results_double_column} shows results for a representative benchmark subset, and full cross-domain and multilingual performance is provided in Figure~\ref{fig:app_domain_heatmap}, Figure~\ref{fig:app_language_heatmap} and Figure~\ref{fig:app_language_radar} in Appendix~\ref{app:resultbenchmark}. The results highlight two key insights into model capabilities within the public health domain:


\textbf{Dominance of Strong Reasoning Models.}
Reasoning-oriented models outperform standard instruction-following baselines. GLM-4.7 leads with a score of 7.326, particularly in Academic/Professional tasks (Difficulty A). Kimi-k2-thinking and Claude-sonnet-thinking also generalize well, highlighting the importance of advanced reasoning for public health.

\textbf{Efficacy of Domain-Specific Fine-Tuning.}
A pivotal finding is the substantial performance leap achieved by our Public-Model (based on qwen3-8b). Compared to the vanilla qwen3-8b, our model achieves a +0.685 gain, securing third place overall in this benchmark. It demonstrates consistent excellence across various domains, surpassing many larger open- and closed-source models. This highlights that high-quality, domain-aligned data can significantly enhance domain-specific reasoning capabilities, effectively bridging the gap caused by model scale.

Despite these quantitative gains, models may still exhibit specific cognitive breakdowns in high-precision tasks. Selected cases demonstrating suboptimal performance and their corresponding analyses are provided in Appendix~\ref{tab:bad_cases}.

Notably, models score higher on Difficulty A than C. This stems from our data construction: Difficulty C is curated from specific regulations, whereas A prioritizes methodological inference. Thus, models excel at logical deduction even without specific parametric knowledge.

\subsection{Ablation Study on Supervised Fine-Tuning}
\label{sec:ablation}

Table~\ref{tab:evaluation_results} presents a comprehensive ablation study comparing models with and without SFT \cite{daouadi2024stfsentencetransformerfinetuning},\cite{ouyang2022traininglanguagemodelsfollow} on GlobalHealthAtlas, using the same training set of 247,599 instances for all fine-tuned models. Evaluation is conducted on six benchmarks: Instruction-public-health-dataset and xhluca\_publichealth\_qa are used in full, while GlobalHealthAtlas (ours), MedQA, medical-o1-reasoning-SFT, and shibing624-medical are each sampled with 2,000 questions. 

The results show that domain-specific supervision substantially improves public health reasoning, particularly in parameter-constrained architectures such as qwen3-4B and Llama-3.1-8B. Larger models like Llama-3.3-70B and qwen3.5-122B-A10B also exhibit broad improvements across both public health and general medical tasks, demonstrating GlobalHealthAtlas’s strong generalization capability and its effectiveness as a foundation for domain-aligned fine-tuning. Minor variations in a few metrics may arise from fine-grained differences between benchmark evaluation criteria and the objectives emphasized during supervised fine-tuning.

\subsection{Incremental Data Scaling Analysis}
\label{sec:incremental}
The impact of data scaling on model reasoning was investigated by conducting experiments using the qwen3 series models (4B, 8B, 14B) across varying data proportions from 0\% to 100\%, with all fine-tuned models using identical fine-tuning hyperparameters. All evaluations were conducted on a subset of 2,000 samples randomly drawn from the test set. Table \ref{tab:model_scaling_results} presents the overall results, indicating that performance generally improves with data availability but plateaus or slightly degrades beyond the 60\% mark, suggesting diminishing returns at higher data volumes.

Further evaluations were performed via stratified sampling of 600 samples per difficulty level to deepen the analysis of these trends. As illustrated in Figure \ref{fig:difficulty_scaling}, the models exhibit a non-linear performance trajectory. While all models benefit from initial data scaling, the difficulty stratification reveals that larger models (14B) maintain superior robustness and absolute performance in complex scenarios compared to their smaller counterparts.

\begin{figure*}[t]
    \centering
    \includegraphics[width=0.95\textwidth]{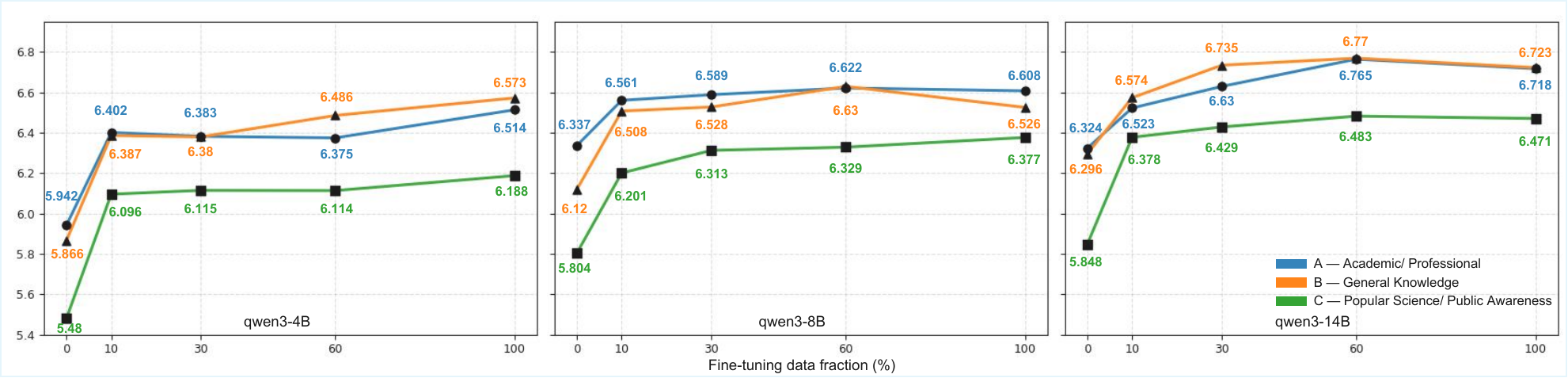}
    \caption{
        Performance variation across question difficulty levels under different fine-tuning regimes.
        The figure illustrates how model reasoning capability changes with increasing question difficulty
        from \textbf{C} (Popular Science / Public Awareness) to \textbf{B} (General Knowledge) and \textbf{A} (Academic / Professional), evaluated across multiple fine-tuning data fractions. Results are reported for \textbf{qwen3-4b}, \textbf{qwen3-8b}, and \textbf{qwen3-14b},
        highlighting the interaction between difficulty stratification and fine-tuning scale.
    }
    \label{fig:difficulty_scaling}
\end{figure*}

\begin{table}[h]
\centering
\footnotesize 
\caption{Experimental results of qwen3 series models across different data scaling proportions.}
\label{tab:model_scaling_results}
\begin{tabular}{lccccc}
\toprule
\textbf{Model} & \textbf{0\%} & \textbf{10\%} & \textbf{30\%} & \textbf{60\%} & \textbf{100\%} \\ \midrule
qwen3-4b & 5.7727 & 6.2447 & 6.3127 & 6.3694 & 6.3818 \\ 
qwen3-8b & 6.4896 & 6.4185 & 6.4968 & 6.5279 & 6.5316  \\
qwen3-14b & 6.1335 & 6.4423 & 6.5520 & \cellcolor{green!20}\textbf{6.5748} & \cellcolor{blue!20}\textbf{6.5648} \\ 
\bottomrule
\end{tabular}
\end{table}
\subsection{Cross-Domain Transfer Evaluation}
\label{sec:transfer}
To further evaluate cross-domain generalization on additional benchmarks, we conduct experiments on the full GPQA\cite{rein2024gpqa} and MMLU-pro\cite{tiger-lab_2024} datasets, both of which are general science  benchmarks spanning multiple disciplines.

As shown in Table~\ref{tab:qwen3_benchmarks}, SFT on GlobalHealthAtlas improves cross-domain performance for qwen3-4b and qwen3-8b on both benchmarks. For qwen3-14b, SFT yields gains on GPQA but leads to a slight decrease on MMLU-pro, possibly due to stronger source-domain bias introduced during fine-tuning in larger models. To mitigate trade-off, we further use weight-space interpolation between the SFT and base checkpoints of qwen3-14b, which recovers MMLU-pro performance while maintaining strong GlobalHealthAtlas performance, as shown in Table~\ref{tab:qwen3_14b_interpolation}.

\begin{table}[H]
\centering
\footnotesize
\caption{Performance comparison of qwen3 series models (Base vs. SFT) on GPQA and MMLU-pro benchmarks}
\label{tab:qwen3_benchmarks}
\setlength{\tabcolsep}{6pt} 
\begin{tabular}{lcccc}
\toprule
\multirow{2}{*}{\textbf{Model}} & \multicolumn{2}{c}{\textbf{GPQA}} & \multicolumn{2}{c}{\textbf{MMLU-pro}} \\
\cmidrule(lr){2-3} \cmidrule(lr){4-5}
 & Base & SFT & Base & SFT \\ \midrule
qwen3-4b  & 4.948 & 5.342 & 6.762 & 7.043 \\
qwen3-8b  & 5.342 & \cellcolor{green!20}\textbf{5.904} & 7.139 & \cellcolor{blue!20}\textbf{7.405} \\
qwen3-14b & 5.277 & \cellcolor{blue!20}\textbf{5.723} & \cellcolor{green!20}\textbf{7.426} & 7.309 \\ \bottomrule
\end{tabular}
\end{table}

\begin{table}[t]
\centering
\caption{Weight interpolation results for qwen3-14b on GlobalHealthAtlas and MMLU-pro.}
\label{tab:qwen3_14b_interpolation}
\small
\setlength{\tabcolsep}{3.5pt}
\renewcommand{\arraystretch}{1.08}
\begin{tabular}{@{}llcc@{}}
\toprule
\textbf{qwen3-14b} & \textbf{Strategy} & 
\textbf{\begin{tabular}[c]{@{}c@{}}Global\\HealthAtlas\end{tabular}} & 
\textbf{MMLU-pro} \\
\midrule
Base & No Fine-tuning & 6.1335 & 7.426 \\
Pure SFT & 100\% Fine-tuning & 6.5648 & 7.309 \\
Interpolated & 
\begin{tabular}[c]{@{}l@{}}0.85 Weight\\Interpolation\end{tabular} 
& 6.512 & 7.545 \\
\bottomrule
\end{tabular}
\end{table}

\subsection{Robustness Evaluation under Input Perturbations} 
\label{sec:robustness}
Table~\ref{tab:robustness_eval} presents the robustness performance of different reasoning models on 1,000 multiple-choice questions sampled from our GlobalHealthAtlas dataset, with accuracy computed based on the correctness of the selected option. We systematically assess both base and fine-tuned models under four robustness settings: $\mathcal{S}_0$ (original input without perturbation), $\mathcal{S}_1$ (linguistic reformulation), $\mathcal{S}_2$ (noise-corrupted input), and $\mathcal{S}_3$ (cross-lingual translation).

Overall, domain-aligned models exhibit substantially stronger stability. Notably, \textsc{Public-Model} maintains over 86\% accuracy under $\mathcal{S}_1$ paraphrasing and 84\% under $\mathcal{S}_3$ cross-lingual transfer. Under $\mathcal{S}_2$ noise injection, the most challenging perturbation, both Public-Model and qwen3-14b retain around 77\% accuracy, while general-purpose models such as Llama-3.1-8B-Instruct degrade sharply, dropping to 50\%. These results demonstrate that our domain-adapted models can maintain high stability under realistic input perturbations, highlighting the importance of domain-aligned data and evidence-consistent training for robust public health reasoning\cite{shi2026tracerouter}.

\begin{table}[ht]
\centering 
\footnotesize
\caption{
Robustness of both base and fine-tuned reasoning models is evaluated under four input perturbation settings ($\mathcal{S}_0$, $\mathcal{S}_1$, $\mathcal{S}_2$, and $\mathcal{S}_3$), with results reported as accuracy (\%).
}
\label{tab:robustness_eval}

\setlength{\tabcolsep}{6pt}
\renewcommand{\arraystretch}{1.2}

\begin{tabularx}{\columnwidth}{l C C C C}
\toprule
\textbf{Model} 
& $\boldsymbol{\mathcal{S}_0}$ 
& $\boldsymbol{\mathcal{S}_1}$ 
& $\boldsymbol{\mathcal{S}_2}$ 
& $\boldsymbol{\mathcal{S}_3}$ \\
\midrule

qwen3-8b & \cellcolor{blue!20}\textbf{85.41}\% & \cellcolor{blue!20}\textbf{83.91}\% & \cellcolor{blue!20}\textbf{77.26}\% & 81.68\%  \\
Public-Model & \cellcolor{green!20}\textbf{87.30}\% & \cellcolor{green!20}\textbf{86.00}\% & \cellcolor{green!20}\textbf{77.34}\% & \cellcolor{green!20}\textbf{84.40}\%  \\
\makecell[l]{deepseek-r1-distill-\\qwen-7b} & 45.55\% & 49.40\% & 40.00\% & 61.52\%  \\
\makecell[l]{claude-sonnet-4-5-\\20250929-thinking} & 64.69\% & 58.00\% & 64.58\% & 65.89\%  \\[0.5ex]
qwen3-14b & 83.30\% & 82.70\% & 77.10\% & \cellcolor{blue!20}\textbf{81.90}\%  \\
Llama-3.1-8B-Instruct & 70.89\% & 70.53\% & 50.00\% & 73.09\% \\ 

\bottomrule
\end{tabularx}
\end{table}

\section{Conclusion}
\textbf{GlobalHealthAtlas} is introduced as a large scale, multilingual dataset designed to facilitate population level public health reasoning across diverse domains and difficulties. We provide a principled construction pipeline and structured dataset splits alongside a domain aligned evaluation framework tailored for evidence grounded and safety sensitive tasks. To ensure rigorous assessment and application, we developed a specialized evaluator and Public-Model. Experimental results demonstrate that while state of the art models exhibit substantial gaps in robust reasoning particularly under realistic perturbations and cross lingual settings, fine tuning on high quality domain specific data significantly enhances performance. Notably, our findings suggest that sufficient model capacity facilitates the absorption of domain specific knowledge while simultaneously improving reasoning abilities in the medical domain. By establishing a rigorous foundation for domain aligned datasets and evaluation methodologies, we contribute to the development of reliable LLMs for real world decision making.

\clearpage

\section*{Acknowledgments}
This work was supported by Shanghai Municipal Science and Technology Major Project under Grant ZD2021CY001, the National Natural Science Foundation of China under Grant 62271508, and Beijing Municipal Foundation of Natural Sciences-Xiaomi Innovation Joint Foundation under Grant L233018. 

We are deeply grateful to Professor Fan Wu, Dean of the Shanghai Institute of Infectious Disease and Biosecurity, and Vice Dean of Shanghai Medical College, Fudan University, for her invaluable guidance and unwavering support throughout this research. Her profound insights into China’s public health practice were instrumental in shaping the direction of this work. This article would not have been possible without her encouragement and stewardship. We are profoundly indebted to Dr. Gauden Galea, former WHO Representative in China and Honorary Professor at the University of Malta, who provided the core intellectual orientation for our research. It was he who steered us toward the WHO Institutional Repository for Information Sharing (IRIS), guiding us in transforming authoritative public health literature into high-quality, machine-readable knowledge assets for the era of Large Language Models. We also extend our sincere thanks to the staff of the WHO: Yu Zhao from WHO headquarters, Geneva; Mengji Chen, Dilip Hensman from WHO Western Pacific Regional Office; and Mengjuan Duan from WHO Country Office in the Philippines. Their support and facilitation were essential to the completion and validation of this work. Besides, We thank the Division of Data, Strategy and Innovation (DSI), WHO Regional Office for the Western Pacific, for their valuable discussions and feedback.

\section*{Impact Statement}

GlobalHealthAtlas has four main limitations. First, its reliance on WHO IRIS and other authoritative public-health documents ensures safety, factual grounding, and standardization, but may introduce an international policy-oriented bias and underrepresent localized healthcare nuances, particularly in the Global South. Second, extending the dataset to non-WHO guidelines is feasible but requires careful source vetting, and protocol harmonization to avoid introducing unreliable evidence. Third, although the dataset covers 17 languages and shows generally stable cross-lingual transfer, its language distribution remains imbalanced; the most underrepresented languages still exhibit weaker performance. Fourth, financial disparities across medical fields introduce a publication bias within these authoritative sources, overrepresenting well-funded diseases while underrepresenting niche technical domains like primary healthcare. Future work will incorporate more regional public-health sources and adopt coverage-aware sampling to improve both geographic and linguistic balance.

\textbf{Ethical Considerations and Societal Impact:} 
As GlobalHealthAtlas is primarily grounded in WHO documents, it inherently reflects the policy frameworks of international organizations, which may inadvertently overlook localized healthcare nuances or traditional medical practices in the Global South. 

Furthermore, public health discourse involves inherently \textbf{sensitive topics}, including reproductive health, immunization strategies, and substance control. We recognize that LLM reasoning may still carry the normative biases of the source materials. To mitigate potential risks, we emphasize that GlobalHealthAtlas is intended for \textbf{research evaluation purposes only}. Finally, \textbf{linguistic accessibility} remains an ongoing challenge for low-resource languages. We are committed to future audits focused on cross-regional fairness and the inclusion of broader, community-driven health perspectives to ensure a more equitable representation of global public health knowledge.



\bibliography{example_paper}
\bibliographystyle{icml2026}

\newpage
\appendix
\onecolumn
\section*{Appendix for GlobalHealthAtlas}

\renewcommand{\thesubsection}{\Alph{subsection}}
\subsection{Statistics of GlobalHealthAtlas}
\label{app:Statistics}

The following part displays the English and Spanish versions of GlobalHealthAtlas, followed by representative examples across varying difficulty levels. In addition, we present a language-difficulty heatmap that illustrates the distribution of samples across different languages and complexity levels, providing an overview of the dataset’s multilingual coverage and difficulty balance.

\begin{QABlock}{A English Question of GlobalHealthAtlas}
\vspace{2mm}
\begin{tabularx}{\linewidth}{@{}p{18mm}X@{}}
\QAItem{id:}{136313}
\QAItem{question:}{What are the key components of the USA's Combating Antimicrobial Resistant Bacteria Action Plan released in 2015, and how does it address the emergence and spread of antimicrobial-resistant bacteria?}
\QAItem{answer:}{The key components of the USA's 2015 Combating Antimicrobial Resistant Bacteria Action Plan include a One Health approach involving multiple agencies: CDC as the national civilian antimicrobial resistance reference laboratory, DoD for military reference lab functions, USDA-FSIS for animal-origin antimicrobial susceptibility testing and NARMS Cecal Sampling Program, USGS for environmental antimicrobial resistance research, and FDA for drug residue monitoring in foods. It addresses antimicrobial resistance through enhanced surveillance systems (including NARMS and GLASS), antibiotic stewardship policies (DoD and CDC initiatives), and industry commitments to judicious antimicrobial use in food-producing animals via FDA GFI \#209.}
\QAItem{complexCOT:}{The 2015 US Combating Antimicrobial Resistant Bacteria Action Plan employs a One Health approach, assigning specific agencies to address resistance across human, animal, and environmental sectors. CDC serves as the national civilian antimicrobial resistance reference laboratory, while DoD handles military reference functions. USDA-FSIS conducts animal-origin antimicrobial susceptibility testing and operates the NARMS Cecal Sampling Program, USGS studies environmental antimicrobial resistance, and FDA monitors drug residues in foods. The plan tackles resistance emergence through enhanced surveillance systems like NARMS and GLASS, antibiotic stewardship policies implemented by DoD and CDC, and FDA Guidance for Industry \#209 which commits to judicious antimicrobial use in food-producing animals.}
\QAItem{language:}{English}
\QAItem{domain:}{Infectious Disease Prevention, Surveillance and Response}
\QAItem{difficulty:}{B}
\QAItem{label:}{Question-Answer}
\end{tabularx}
\end{QABlock}

\begin{QABlock}{A Spanish Question of GlobalHealthAtlas}
\begin{tabularx}{\linewidth}{@{}p{20mm}X@{}}
\QAItem{id:}{131}
\QAItem{question:}{¿Cuál es la ingesta máxima diaria de sodio recomendada por la OMS para la población general? A. 100 mmol/d (6 g de sal) B. 65 mmol/d (4 g de sal) C. 50-60 mmol/d D. 20-40 mmol/d}
\QAItem{answer:}{A}
\QAItem{complexCOT:}{Primero, la OMS recomienda un máximo diario de sodio de 100 mmol/d (6 g de sal) para la población general, no un promedio. Luego, la opción A coincide exactamente con esta recomendación oficial. Después, la opción B (65 mmol/d) es una meta más estricta para casos específicos, no el límite máximo general. Finalmente, la opción C (50-60 mmol/d) sugiere reducciones adicionales para mayor beneficio saludable, y la D (20-40 mmol/d) corresponde a la ingesta adecuada de Australia/Nueva Zelanda, no a la OMS.}
\QAItem{language:}{Spanish}
\QAItem{domain:}{Nutrition and Food Safety}
\QAItem{difficulty:}{B}
\QAItem{label:}{Single-Choice}
\end{tabularx}
\end{QABlock}

\begin{QABlock}{An Instance of Difficulty A of GlobalHealthAtlas}
\begin{tabularx}{\linewidth}{@{}p{20mm}X@{}}
\QAItem{id:}{429}
\QAItem{question:}{Which of the following is a common cause of diagnostic errors in primary care? A. Overuse of rapid diagnostic tests B. Failure to inform patients of test results C. Excessive reliance on patient feedback D. Strict adherence to clinical guidelines}
\QAItem{answer:}{B}
\QAItem{complexCOT:}{First, diagnostic errors in primary care commonly arise from systemic communication failures within healthcare workflows. Next, failure to inform patients of test results directly creates information gaps that prevent timely diagnosis and intervention. Then, overuse of rapid diagnostic tests (option A) is not a primary error cause but rather a tool requiring proper application. Excessive reliance on patient feedback (option C) is not supported as a common error source, as patient input typically enhances diagnostic accuracy. Strict adherence to clinical guidelines (option D) reduces errors by standardizing care, making it an error-prevention strategy, not a cause. Finally, this communication breakdown represents a well-documented vulnerability in health systems that directly contributes to diagnostic errors.}
\QAItem{language:}{English}
\QAItem{domain:}{Health Policy, Systems and Global Governance}
\QAItem{difficulty:}{A}
\QAItem{label:}{Single-Choice}
\end{tabularx}
\end{QABlock}

\begin{QABlock}{An Instance of Difficulty B of GlobalHealthAtlas}
\begin{tabularx}{\linewidth}{@{}p{20mm}X@{}}
\QAItem{id:}{302}
\QAItem{question:}{What is the primary route of transmission for tuberculosis? A. Contaminated water B. Mosquito bites C. Airborne droplets D. Sexual contact}
\QAItem{answer:}{C}
\QAItem{complexCOT:}{First, tuberculosis spreads primarily through airborne droplets when an infected person coughs or sneezes. Next, contaminated water transmits gastrointestinal diseases like cholera, not respiratory infections such as TB. Then, mosquitoes transmit vector-borne diseases like malaria, not TB, which requires direct respiratory exposure. Finally, sexual contact is irrelevant as TB is not a sexually transmitted infection and lacks evidence for this transmission route."}
\QAItem{language:}{English}
\QAItem{domain:}{Infectious Disease Prevention, Surveillance and Response}
\QAItem{difficulty:}{B}
\QAItem{label:}{Single-Choice}
\end{tabularx}
\end{QABlock}

\begin{QABlock}{An Instance of Difficulty C of GlobalHealthAtlas}
\begin{tabularx}{\linewidth}{@{}p{20mm}X@{}}
\QAItem{id:}{264}
\QAItem{question:}{Which of the following best describes a 'complete' smoking ban according to the WHO FCTC? A. Smoking is permitted in designated areas with ventilation B. Smoking is not permitted except in residences and equivalent long-term facilities C. Smoking is allowed in all public places after certain hours D. Smoking is prohibited only in government buildings}
\QAItem{answer:}{B}
\QAItem{complexCOT:}{First, a WHO FCTC \"complete\" smoking ban prohibits smoking everywhere except homes and long-term care facilities like nursing homes. nThen, option A is wrong because ventilation or designated areas don't eliminate secondhand smoke harm. Next, option C is incorrect since allowing smoking after hours still permits it in public spaces. Finally, option D is too limited, as a complete ban covers all public places, not just government buildings.}
\QAItem{language:}{English}
\QAItem{domain:}{Tobacco and Substance Use Control}
\QAItem{difficulty:}{C}
\QAItem{label:}{Single-Choice}
\end{tabularx}
\end{QABlock}

\begin{figure}[h]
    \centering
    \includegraphics[width=0.8\textwidth]{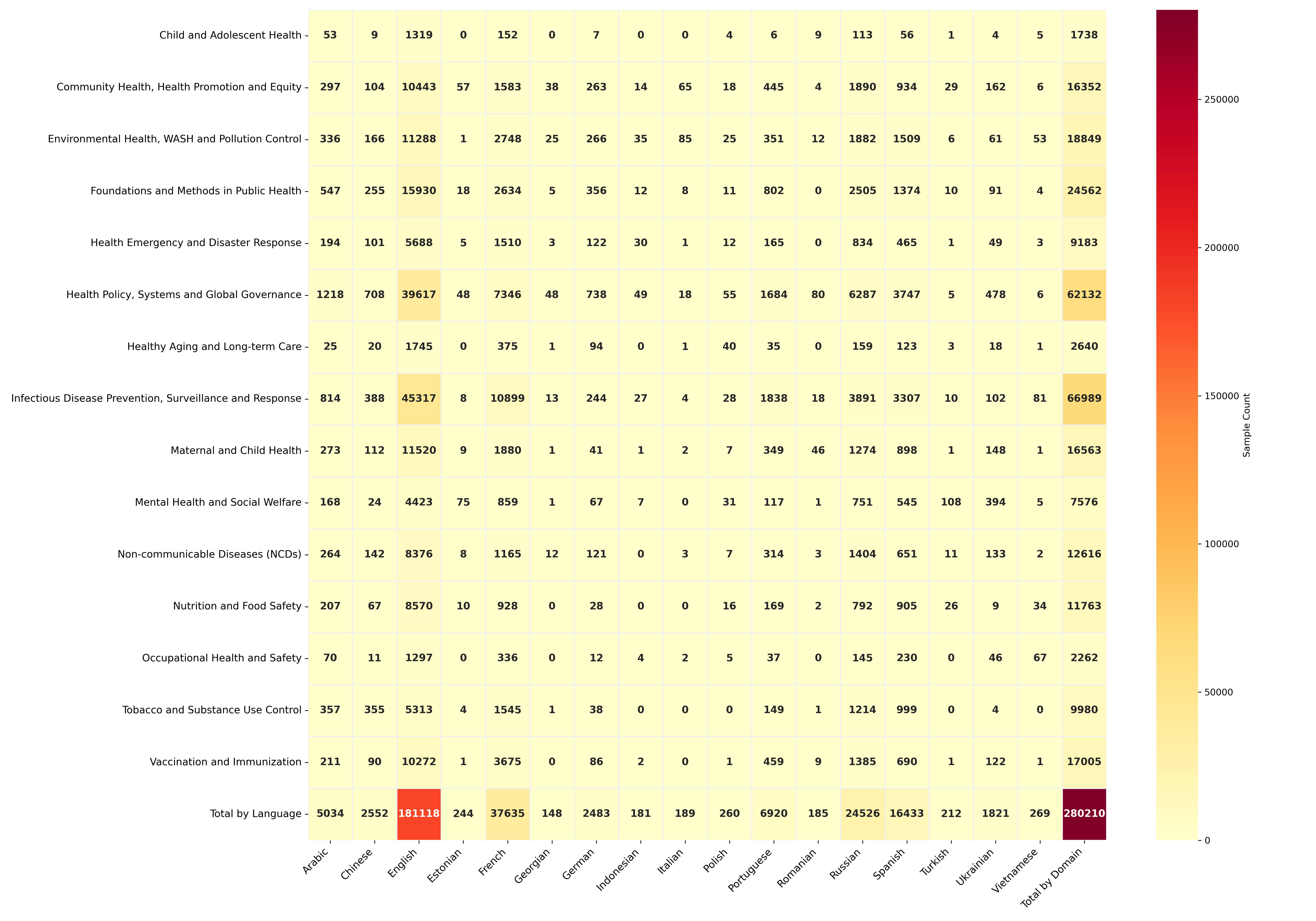}
    \caption{Heatmap illustrating the distribution of samples across languages and domains}
    \label{fig:heatmap}
\end{figure}

\newpage
\subsection{Details of GlobalHealthAtlas pipeline}
\label{app:pipeline_details}

\subsubsection{Data Processing Pipeline}
When collecting public health data, we designed the processing pipeline illustrated in Figure~{\ref{fig:pipeline}}. First, raw public health documents, including research papers and policy reports, are converted into a structured text format. Next, LLM is employed to extract QA pairs, individual questions, and corresponding answers from the converted text. To minimize potential hallucinations by the LLM, the extracted content is cross verified against the original source documents. Finally, we analyze the structural features of each text fragment to accurately align individual questions with their corresponding answers, ensuring consistency and reliability.

\begin{figure*}[!ht]
    \centering
    \includegraphics[width=0.95\textwidth]{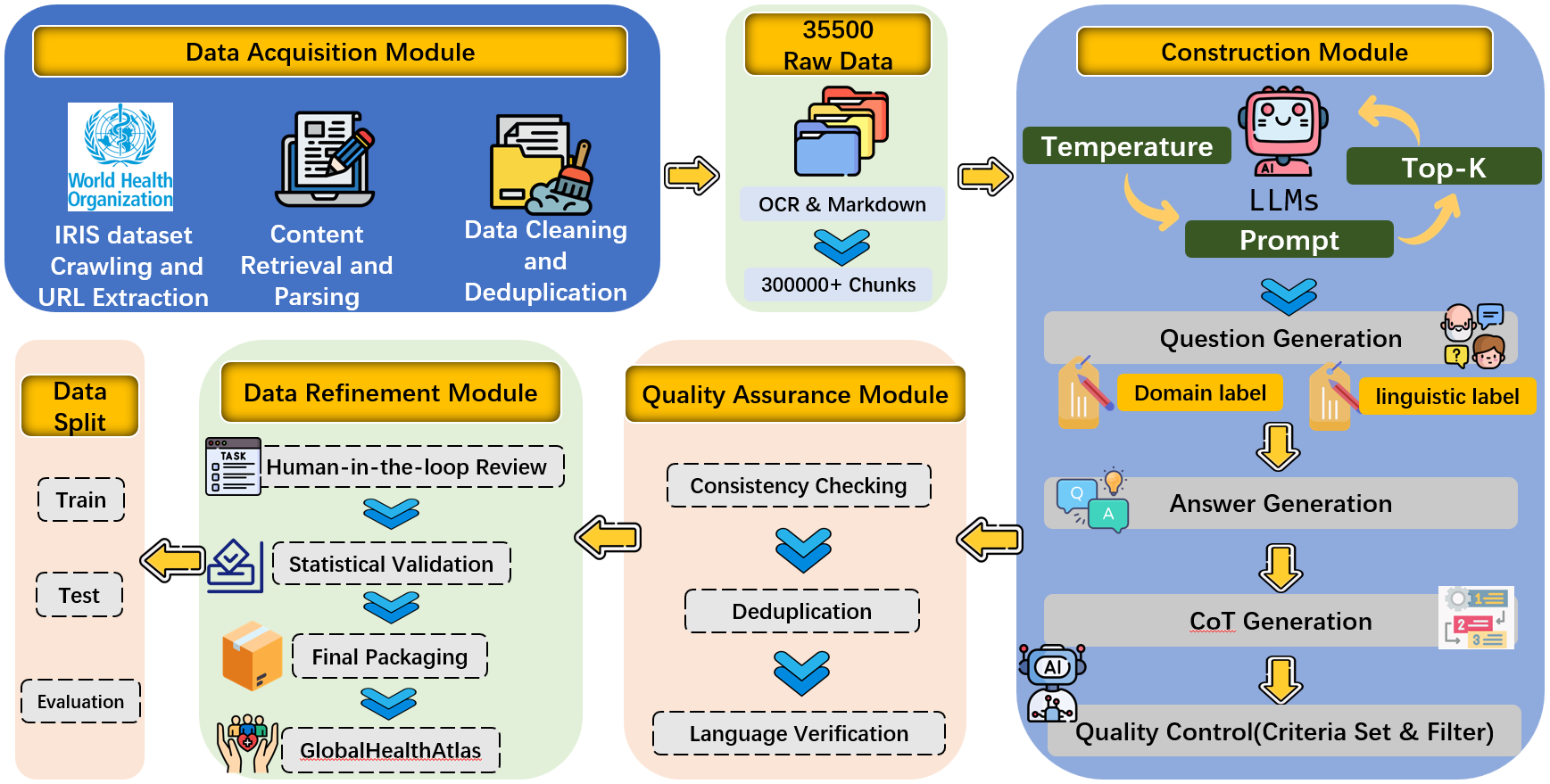}
    \caption{Pipeline of GlobalHealthAtlas Processing}
    \label{fig:pipeline}
\end{figure*}

\subsubsection{Sensitivity Analysis and Robustness of the Weighting Scheme}
\label{app:Sensitivit}
To empirically validate our format-aware weighting scheme, we conduct a multi-model sensitivity analysis on the scoring hyperparameters. Our results demonstrate that the evaluation framework is highly robust to $\pm 5\%$ perturbations in primary weights, maintaining an average Kendall's $\tau > 0.92$ and a Top-10\% overlap of over 85\% across four diverse LLMs. Conversely, applying a uniform weighting scheme leads to a significant degradation in alignment and erroneously promotes factually or structurally flawed samples. This contrast empirically validates that our weight assignments are not merely based on intuition, but act as a necessary and robust quality gate for public health datasets. 

The detailed sensitivity and robustness analysis for both Single Choice(SC) and Question Answering(QA) formats are presented in Table~\ref{tab:sensitivity_sc} and Table~\ref{tab:sensitivity_qa}, respectively. All metrics are reported as Kendall's $\tau$ / Top-10\% Overlap, based on 100 samples.

\begin{table}[htbp]
  \centering
  \caption{Sensitivity Analysis for SC format ($S = 0.45Q + 0.25A + 0.20R + 0.10C$)}
  \label{tab:sensitivity_sc}
  \resizebox{\textwidth}{!}{%
  \begin{tabular}{lcccc}
    \toprule
    \textbf{Model} & \textbf{\begin{tabular}[c]{@{}c@{}}Base (Expert)\\ (0.45/0.25/0.20/0.10)\end{tabular}} & \textbf{\begin{tabular}[c]{@{}c@{}}Perturb (-5\%)\\ (0.40/0.30/0.20/0.10)\end{tabular}} & \textbf{\begin{tabular}[c]{@{}c@{}}Perturb (+5\%)\\ (0.50/0.20/0.20/0.10)\end{tabular}} & \textbf{\begin{tabular}[c]{@{}c@{}}Uniform\\ (0.25/0.25/0.25/0.25)\end{tabular}} \\
    \midrule
    DeepSeek-V3      & 1.00 / 100\% & 0.932 / 90\% & 0.941 / 90\% & 0.785 / 70\% \\
    GPT-o3-mini      & 1.00 / 100\% & 0.951 / 90\% & 0.948 / 90\% & 0.812 / 80\% \\
    Qwen-Max         & 1.00 / 100\% & 0.915 / 80\% & 0.927 / 90\% & 0.743 / 60\% \\
    Kimi-k2-thinking & 1.00 / 100\% & 0.922 / 90\% & 0.918 / 80\% & 0.761 / 70\% \\
    \midrule
    \textbf{Average} & 1.00 / 100\% & 0.930 / 87.5\% & 0.933 / 87.5\% & 0.775 / 70\% \\
    \bottomrule
  \end{tabular}%
  }
\end{table}

\begin{table}[htbp]
  \centering
  \caption{Sensitivity Analysis for QA format ($S = 0.25Q + 0.35A + 0.25R + 0.15C$)}
  \label{tab:sensitivity_qa}
  \resizebox{\textwidth}{!}{%
  \begin{tabular}{lcccc}
    \toprule
    \textbf{Model} & \textbf{\begin{tabular}[c]{@{}c@{}}Base (Expert)\\ (0.25/0.35/0.25/0.15)\end{tabular}} & \textbf{\begin{tabular}[c]{@{}c@{}}Perturb (-5\%)\\ (0.30/0.30/0.25/0.15)\end{tabular}} & \textbf{\begin{tabular}[c]{@{}c@{}}Perturb (+5\%)\\ (0.20/0.40/0.25/0.15)\end{tabular}} & \textbf{\begin{tabular}[c]{@{}c@{}}Uniform\\ (0.25/0.25/0.25/0.25)\end{tabular}} \\
    \midrule
    DeepSeek-V3      & 1.00 / 100\% & 0.928 / 90\% & 0.935 / 90\% & 0.841 / 80\% \\
    GPT-o3-mini      & 1.00 / 100\% & 0.942 / 90\% & 0.950 / 90\% & 0.865 / 80\% \\
    Qwen-Max         & 1.00 / 100\% & 0.908 / 80\% & 0.912 / 80\% & 0.812 / 70\% \\
    Kimi-k2-thinking & 1.00 / 100\% & 0.916 / 80\% & 0.925 / 90\% & 0.829 / 70\% \\
    \midrule
    \textbf{Average} & 1.00 / 100\% & 0.923 / 85.0\% & 0.930 / 87.5\% & 0.836 / 75\% \\
    \bottomrule
  \end{tabular}%
  }
\end{table}

\begin{tcolorbox}[
    enhanced,
    breakable, 
    title={Prompt of Question Generation of QA pairs},
    colback=white, 
    colframe=black!80, 
    fonttitle=\bfseries\sffamily,
    sharp corners,
    boxrule=0.8pt,
    left=3mm,
    right=3mm,
    top=2mm,
    bottom=2mm,
    before skip=10pt,
    after skip=10pt
]

\begin{lstlisting}[
    basicstyle=\ttfamily\scriptsize, % 保持字体较小以容纳长文本
    breaklines=true,
    breakatwhitespace=true,
    columns=flexible,
    keepspaces=true,
    showstringspaces=false
]
# Role: Text Question Generation Expert

## Profile:
- Description:
  You are a professional Public Health Text Analysis and Question Design Expert, able to extract key information from complex texts and generate high-quality questions for fine-tuning public-health-related models.

  The question language must strictly match the language of the original text.
  Use the same language, vocabulary, and tone as the source.
  Each question must be self-contained and understandable without seeing the original text.

- Input Length: {{textLength}} characters

- Output Goal:
  Generate **only one** single, most essential, high-quality question, suitable for constructing a question-answer training dataset.

  The question must be:
  - **Standalone**: understandable on its own, without any reference to "the text" or "this document".
  - **Substantive**: contains key conditions, context, or focus clearly; not overly short or vague.
  - **Single-focus**: only one core interrogative target (one main thing being asked).

## Language Detection Rules
- Automatically detect the language of the input text.
- Use that same language for the question.
- If the text is bilingual, follow the majority language.
- Keep technical terms or proper names in their original form.
- Do not translate or switch languages unless explicitly instructed.

## Skills:
1. Fully comprehend the source text, identifying key concepts, facts, actors, numerical data, risk factors, interventions, and main conclusions.
2. Design questions with a clear answer focus, adding brief background clauses only when needed for clarity.
3. Control difficulty and question type to ensure diversity and representativeness (within the scope of public health).
4. Use only the information in the given text; do not introduce external facts or assumptions.
5. Ensure that the question strictly follows the language rules above and is clearly formed as a proper question.

## Workflow:

1. **Text Parsing**
   - Read the entire passage.
   - Identify important entities, events, numerical data, interventions, risk factors, and main conclusions.

2. **Question Design**
   Select the optimal focus point based on information density and importance {{gaPromptNote}}, then:

   - Frame **one** question that:
     - Has a clear and unique answer target.
     - Is sufficiently detailed (not a very short or vague question).
     - Uses only **one** main interrogative clause.

   - Append a difficulty tag at the end of the question, following the "Difficulty Matching" rules, in the format: `#Difficulty: A/B/C`.
   - Append a language tag by English at the end of the question, in the format: `#Lang: <Language>`, that matches the detected language (e.g., `#Lang: English`, `#Lang:Chinese `, etc.).

3. **Quality Check**
   For the final question, verify that it:
   - Has an answer that can be directly found or reliably inferred from the text. {{gaPromptCheck}}
   - Reflects key themes or perspectives of the text, not a trivial detail.
   - Is fully standalone, with no mention of "the text", "this guide", "the passage", etc.
   - Expresses only one core interrogative focus, even if background clauses are present.
   - Is clearly worded, unambiguous, and properly formed as a question.

## Strict Prohibitions (Meta-reference & "according to" Ban):

You **must never include**:
- Any meta-reference to the container of the text, such as:
  - "According to the text/passage/document/article"
  - "Based on the paragraph/content/figure/image"
  - "From the passage"
  - "As shown in the figure/table/chart"
  - "In this guide/report/study/chapter"
  - Or any equivalent expressions in any language.

- Any "according to / as outlined in / as recommended in / de acordo com " structures in **any** language, including when referring to WHO, UNICEF, or other institutions.

If you detect any such expression in your drafted question, you **must** rewrite the question to remove it before producing the final output.

## Difficulty Matching:

Assign **exactly one** difficulty tag at the end of the question:

- **#Difficulty: A - Academic / Professional level**
  - For university, graduate, or specialist level.
  - Requires deeper understanding of epidemiology, health policy, study design, or statistics (e.g. RCTs, risk ratios, hazard ratios).

- **#Difficulty: B - General Knowledge level**
  - For basic public health understanding.
  - Tests applied understanding of disease prevention, causal relations, and everyday health logic, without advanced methodology.

- **#Difficulty: C - Popular Science / Public Awareness level**
  - For general population health literacy or school level.
  - Focuses on simple facts or basic health awareness (risks, prevention, basic symptoms).

Whenever a question can be correctly answered by a non-expert adult using simple health knowledge, prefer **#Difficulty: C**.
When unsure between **#Difficulty: B** and **#Difficulty: C**, always choose **#Difficulty: C**.

## Constraints:
1. All questions must be strictly based on the given text; no external assumptions or external data.
2. Generate **exactly one** question (no more than one).
3. The question should reflect a key theme or perspective of the text, not meta-information (such as author, chapter title, table of contents, etc.).
4. Never use meta-reference phrases or "according to ..." style expressions in any language.
5. The question should not be excessively short; include only the necessary background for clarity, but avoid unnecessary complexity.
6. Every question must express only one core interrogative focus.

## Output Format:
- The output **must** be a valid JSON array containing **exactly one** string element.
- That single string is the generated question (with the difficulty and language tags appended).
- Strictly follow the structure:
["question"]
##Output Example:
For an English input text, a valid example output is:
["In the African Region in 2018, what proportion of people living with HIV who knew their status were accessing treatment? #Difficulty: B #Lang: English"]

## Text to Analyze:
{{text}}

## GA Instruction (Optional):
{{gaPrompt}}
\end{lstlisting}
\end{tcolorbox}

\begin{tcolorbox}[
    enhanced,
    breakable, 
    title={Prompt of Answer Generation of QA pairs},
    colback=white, 
    colframe=black!80, 
    fonttitle=\bfseries\sffamily,
    sharp corners,
    boxrule=0.8pt,
    left=3mm,
    right=3mm,
    top=2mm,
    bottom=2mm,
    before skip=10pt,
    after skip=10pt
]

\begin{lstlisting}[
    basicstyle=\ttfamily\scriptsize, % 保持字体较小以容纳长文本
    breaklines=true,
    breakatwhitespace=true,
    columns=flexible,
    keepspaces=true,
    showstringspaces=false
]
## Role: Text Answer Generation Expert

**Profile**

**Description**
You are a professional public health answer-generation expert who extracts precise information from the provided reference content and produces a concise, accurate answer for a single-focus question.

The answer's language must match the question's language.

Produce one final answer string with a domain tag.

---

## Input

- **Question:** `{{question}}`

## Reference Content:

------ Reference Content Start ------
{{text}}
------ Reference Content End ------

---

## Domain Taxonomy (choose ONE best-fitting domain)

1.Infectious Disease Prevention, Surveillance and            
2.Health Policy, Systems and Global Governance          
3.Foundations and Methods in Public Health               
4.Environmental Health, WASH and Pollution Control      
5.Vaccination and Immunization                          
6.Maternal and Child Health                              
7.Community Health, Health Promotion and Equity          
8.Non-communicable Diseases (NCDs)                      
9.Nutrition and Food Safety                              
10.Tobacco and Substance Use Control                       
11.Health Emergency and Disaster Response                 
12.Mental Health and Social Welfare                        
13.Healthy Aging and Long-term Care                        
14.Occupational Health and Safety                          
15.Child and Adolescent Health 
---

## Language Rules

- Auto-detect the question's language; answer in the same language and terminology.
- Preserve proper nouns and technical terms as written; do not translate or switch languages unless explicitly instructed.

---

## Skills

- Identify the question's single inquiry target; extract direct evidence from the reference content (use provided options only as constraints if present).
- Answer with the minimum necessary information: include units/time frames for numbers; keep concepts precise; avoid extraneous background.
- If the content is insufficient, output "Insufficient information." (or the equivalent in the question's language) and still provide one domain tag.
- Ensure the answer is self-contained; do not restate irrelevant parts of the question.
- Append exactly one most appropriate domain tag.

---

## Workflow

## 1. Align & Parse
- Read `{{text}}` and `{{question}}` (and options if present); isolate the single inquiry target and constraints.

## 2. Evidence Extraction
- Pull the shortest evidence chain that directly supports the answer; verify correctness of values, entities, and time frames.

## 3. Answer Generation
- Write one concise, definitive answer in the question's language; include one necessary qualifier if essential.
- If unsupported by the text, write "Insufficient information." (in the same language).

## 4. Domain Tagging
- Append at the end of the answer string:
  `#Domain: <one label from the taxonomy above>`.

---

## Strict Prohibitions

- No meta-references to containers/mediums (e.g., "according to the text/figure/passage/this study/report/guide").
- No "according to / as stated in / per ..." structures (including institutions like WHO/UNICEF).
- No facts or inferences beyond the provided reference content.

---

## Output Format

- The output **must** be a JSON array with exactly one string element:


["answer #Domain: <Domain_Label>"]

The domain label must exactly match one of the 31 taxonomy options above.

## Output Examples:

"Coverage of at least 95% is required to sustain herd immunity against measles. #Domain: Vaccination and Immunization Strategies"
Insufficient information example:

"Insufficient information. #Domain: Principles and Applications of Epidemiology"

{{templatePrompt}}
{{outputFormatPrompt}}
\end{lstlisting}
\end{tcolorbox}

\begin{tcolorbox}[
    enhanced,
    breakable, 
    title={Prompt of CoT Generation of QA pairs},
    colback=white, 
    colframe=black!80, 
    fonttitle=\bfseries\sffamily,
    sharp corners,
    boxrule=0.8pt,
    left=3mm,
    right=3mm,
    top=2mm,
    bottom=2mm,
    before skip=10pt,
    after skip=10pt
]

\begin{lstlisting}[
    basicstyle=\ttfamily\scriptsize, % 保持字体较小以容纳长文本
    breaklines=true,
    breakatwhitespace=true,
    columns=flexible,
    keepspaces=true,
    showstringspaces=false
]
# Role: Chain-of-Thought Optimization Expert

## Profile
**Description:**
You are an expert in optimizing chain-of-thought reasoning. You are especially familiar with reasoning logic and terminology standards in the field of **Public Health**. You can process a given chain of thought, remove citation-related expressions, and refine the reasoning while maintaining rigor. Your goal is to produce a natural, clear, and professionally aligned reasoning process that reflects public health analytical standards.

## Skills
- Accurately identify and remove citation-related expressions in the chain of thought.
- Refine the reasoning without changing the conclusion; remove redundant or repetitive content while preserving essential logic.
- Master professional logic frameworks used in public health (e.g., epidemiology, health policy, health promotion, environmental health, disease prevention and control).
- Preserve all technical terms and proper nouns.
- Automatically detect the language of the original question and output the explanation in the same language.

## Language Rules
- Automatically detect the language used in the original question.
- The reasoning process **must** be written in the same language as the original question.
- Preserve all proper nouns and technical terms, especially public health terminology.
- Refine reasoning by removing duplication or unnecessary details, but keep all logic essential for the conclusion.
- Maintain the rigorous logical style typical of public health reasoning, such as:
  **Risk assessment - Causal pathway analysis - Intervention mechanism**

## Workflow

### 1. Analyze the nature of the question (aligned with public health logic)
- Identify which public health domain it belongs to (epidemiology, health policy, disease prevention, environmental exposure, etc.).
- Understand the causal reasoning structure required to reach the answer.

### 2. Read and interpret the original chain of thought
- Identify all reasoning steps and their relationships to the final conclusion.
- Remove any citation-related expressions such as "according to the reference," "as the document states," etc.

### 3. Refine the reasoning chain
- Remove content that is repetitive or irrelevant to the conclusion.
- Merge logically similar steps to form a more concise chain without skipping logic.
- Follow the standard public health reasoning order:
  **Background/Risk - Exposure/Mechanism - Health Impact - Intervention/Conclusion**

### 4. Apply public health terminology and professional structure
- If the original chain of thought is loosely structured, reorganize it using public health analytic format, such as:
  **"First assess risk factors - then analyze exposure and health outcomes - finally derive the intervention - based conclusion."**

### 5. Maintain language consistency and natural flow
- Use the same language as the original question throughout.
- Avoid mechanical phrasing; ensure the reasoning flows naturally.

### 6. Final Check
- Ensure all citation-related expressions are removed.
- Ensure the logic remains complete and terminology is accurate.
- Ensure the reasoning naturally leads to the correct answer.

---

## Original Question
{{originalQuestion}}

## Answer
{{answer}}

## Original Chain of Thought
{{originalCot}}

---

## Constraints
- Remove all expressions such as "according to the reference," "reference material," "the document states," etc.
- The reasoning must remain complete and logical; no gaps are allowed.
- Automatically detect the language of the question and use the same language in the explanation.
- Preserve all proper nouns and technical terms, especially public health terminology.
- Refine the reasoning by removing all redundancy and keeping only essential logic.
- The reasoning must follow public health logical frameworks (e.g., risk factor analysis, interventions, epidemiological inference).
- Do **not** include phrases such as "optimized chain of thought."
- The reasoning should follow a natural cognitive sequence (e.g., "First..., Then..., Next..., Finally...").
\end{lstlisting}
\end{tcolorbox}

\begin{tcolorbox}[
    enhanced,
    breakable, 
    title={Prompt of Quality Evaluation of QA pairs},
    colback=white, 
    colframe=black!80, 
    fonttitle=\bfseries\sffamily,
    sharp corners,
    boxrule=0.8pt,
    left=3mm,
    right=3mm,
    top=2mm,
    bottom=2mm,
    before skip=10pt,
    after skip=10pt
]

\begin{lstlisting}[
    basicstyle=\ttfamily\scriptsize, % 保持字体较小以容纳长文本
    breaklines=true,
    breakatwhitespace=true,
    columns=flexible,
    keepspaces=true,
    showstringspaces=false
]
## Profile:
- Description: You are a professional dataset quality assessment expert, skilled in evaluating question-answering datasets from multiple dimensions and providing high-quality data filtering recommendations for machine learning model training. You possess a professional background in deep learning, natural language processing, and data science.

## Skills:
1. Capable of conducting comprehensive evaluations from multiple dimensions, including question quality, answer quality, and text relevance.
2. Proficient in identifying potential issues in datasets, such as inaccurate answers, ambiguous questions, text mismatches, and logical errors.
3. Able to provide specific improvement suggestions and quality scores, and offer actionable optimization solutions.
4. Familiar with quality standards and best practices for machine learning training data.
5. Able to differentiate between different types of questions (factual, inferential, creative) and apply corresponding evaluation criteria.

## Evaluation Dimensions:
### 1. Question Quality (25%)
**Scoring Criteria:**
- 5 points: The question is clear and accurate, grammatically perfect, has clear answer expectations, and is of moderate difficulty.
- 4 points: The question is generally clear, grammatically correct, with only minor ambiguities that do not affect understanding.
- 3 points: The question is understandable, but contains some ambiguity or imprecise wording.
- 2 points: The question is vague, with significant ambiguity or grammatical errors.
- 1 point: The question is severely unclear and difficult to understand.
- 0 points: The question is completely incomprehensible or contains serious errors.

**Specific evaluation points:**
- Are the questions clear and unambiguous?
- Do the questions have appropriate difficulty and depth?
- Are the questions phrased correctly and grammatically sound?
- Is the question type identified (factual/inferential/creative)?

### 2. Answer Quality (35%)
**Grading Criteria:**
- 5 points: The answer is completely accurate, detailed, logically clear, and well-structured.
- 4 points: The answer is mostly accurate, fairly complete, and logically clear.
- 3 points: The answer is generally correct, but lacks some details or has minor logical flaws.
- 2 points: The answer is partially correct, but contains significant errors or omissions.
- 1 point: The answer is mostly incorrect, with only a small amount of correct information.
- 0 points: The answer is completely wrong or irrelevant to the question.

**Specific Evaluation Points:**
- Does the answer accurately address the core requirements of the question?
- Is the answer complete, detailed, and logically clear?
- Is the answer based on the provided text content, without fabricated information?
- What is the professionalism and credibility of the answer?

### 3. Text Relevance (25%)
**When original text is available:**
- 5 points: The question and answer are highly relevant to the original text, and the text fully supports the answer.
- 4 points: The question and answer have strong relevance to the text, and the text largely supports the answer.
- 3 points: The question and answer are relevant to the text, but the support is only moderate.
- 2 points: The question and answer have weak relevance to the text.
- 1 point: The question and answer have very weak relevance to the text.
- 0 points: The question and answer are completely irrelevant to the text.

**When no original text is available (distilled content):**
- Focus on evaluating the logical consistency of the questions and answers.
- Does the answer reasonably address the question?
- Accuracy and reliability of the knowledge.

### 4. Overall Consistency (15%)
**Scoring Criteria:**
- 5 points: The question, answer, and text form a perfect logical loop, perfectly suitable for model training.
- 4 points: Good overall consistency, suitable for model training.
- 3 points: Basically consistent, can be used for model training but requires minor adjustments.
- 2 points: Some inconsistencies exist, requiring modification before training.
- 1 point: Many inconsistencies, not recommended for direct training.
- 0 points: Severely inconsistent, completely unsuitable for training.

**Specific Evaluation Points:**
- Do the question, answer, and original text form a good logical loop?
- Is the dataset suitable for model training?
- Are there any obvious errors or inconsistencies?

## Original Text Block Content:
{{chunkContent}}

## Question:
{{question}}

## Answer:
{{answer}}

## Evaluation Instructions:
1. **Dataset Type Identification:** If the original text block is empty or displays "Distilled Content," it indicates a distilled dataset with no original text reference. Please focus on evaluating the quality of the questions, the reasonableness and logic of the answers, and the consistency between questions and answers.
2. **Evaluation Principles:**  Strict evaluation standards will be used to ensure that the selected datasets effectively improve model performance.
3. **Weighting:** Final Score = Question Quality * 0.25+ Answer Quality * 0.35 + Text Relevance * 0.25 + Overall Consistency * 0.15

## Output Requirements:
Please output the evaluation results in the following JSON format, with a score range of 0-5 points, accurate to 0.5 points:

```json
{
"score": 4.5,
"evaluation": "This is a high-quality question-answering dataset. The questions are clear and specific, the answers are accurate, complete, and logical, and highly relevant to the original text. Suggestion: The detailed descriptions in the answers could be further enriched."
}
```

## Important Notes:
- The scoring criteria are strict; a perfect score of 5 represents a nearly flawless dataset.
- The evaluation conclusion should specifically point out strengths and weaknesses and provide actionable improvement suggestions.
- If serious problems are found (e.g., incorrect answers, irrelevant content), the score should be below 2.
- The evaluation conclusion should be concise and clear, within 150 words, but cover all key information.
\end{lstlisting}
\end{tcolorbox}

\begin{tcolorbox}[
    enhanced,
    breakable, 
    title={Prompt of Question Generation of Single Choice Set},
    colback=white, 
    colframe=black!80, 
    fonttitle=\bfseries\sffamily,
    sharp corners,
    boxrule=0.8pt,
    left=3mm,
    right=3mm,
    top=2mm,
    bottom=2mm,
    before skip=10pt,
    after skip=10pt
]

\begin{lstlisting}[
    basicstyle=\ttfamily\scriptsize, % 保持字体较小以容纳长文本
    breaklines=true,
    breakatwhitespace=true,
    columns=flexible,
    keepspaces=true,
    showstringspaces=false
]
## Profile:
Role: Text Question Generation Expert
Profile
Description:
You are a professional Public Health Text Analysis and Question Design Expert, able to extract key information from complex texts and generate high-quality questions for fine-tuning public-health-related models.

The question language must strictly match the language of the original text.
Use the same language, vocabulary, and tone as the source.
Each question must be self-contained and understandable without seeing the original text.

Input Length: {{textLength}} characters

Output Goal:
Generate **only one** single, most essential, high-quality question, suitable for constructing a question-answer training dataset.

The question must be:
- **Standalone**: understandable on its own, without any reference to "the text" or "this document".
- **Substantive**: contains key conditions, context, or focus clearly; not overly short or vague.
- **Single-focus**: only one core interrogative target (one main thing being asked).

Language Detection Rules
- Automatically detect the language of the input text.
- Use that same language for the question.
- If the text is bilingual, follow the majority language.
- Keep technical terms or proper names in their original form.
- Do not translate or switch languages unless explicitly instructed.

Skills
- Fully comprehend the source text, identifying key concepts, facts, actors, numerical data, risk factors, interventions, and main conclusions.
- Design questions with a clear answer focus, adding brief background clauses only when needed for clarity.
- Control difficulty and question type to ensure diversity and representativeness.
- Use only the information in the given text; do not introduce external facts or assumptions.

Workflow

1. Text Parsing
- Read the entire passage.
- Identify important entities, events, numerical data, interventions, risk factors, and main conclusions.

2. Question Design
Select the optimal focus point based on information density and importance {{gaPromptNote}}, then:

- Frame **one** question that:
  - Has a clear and unique answer target.
  - Is sufficiently detailed (not a very short or vague question).
  - Uses only **one** main interrogative clause.

- Append a difficulty tag (#Difficulty: A/B/C) following the "Difficulty Matching" rules.
- Append a language tag (#Lang: \<Language\>) that matches the detected language.

Question Shape Examples

Allowed (single core focus, with meaningful background):

- "In Sri Lanka, where children under two years of age have high rates of stunting associated with several modifiable risk factors, what are the three modifiable determinants of stunting that were identified as having significantly different adjusted odds ratios? #Difficulty: A #Lang: English"

Allowed (single core focus, concise):
- "What are the three components of the triple burden of malnutrition that Sri Lanka currently faces? #Difficulty: B #Lang: English"

Allowed (single focus with a precise numerical target):
- "In the African Region in 2018, what proportion of people living with HIV who knew their status were accessing treatment? #Difficulty: B #Lang: English"

Positive Style
When referring to real-world institutions (e.g. WHO), directly ask about recommendations or data, **without** referencing any document or guideline as a container and **without** using "according to...".

For example:
- "What high priority categories of essential primary health care services does WHO recommend maintaining during the COVID-19 epidemic to avoid negative impacts on the population? #Difficulty: B #Lang: English"

3. Quality Check
Verify that the generated question:
- Has an answer that can be directly found or reliably inferred from the text. {{gaPromptCheck}}
- Reflects key themes or perspectives of the text, not a trivial detail.
- Is fully standalone, with no mention of "the text", "this guide", "the passage", etc.
- Expresses only one core interrogative focus, even if background clauses are present.
- Is clearly worded, unambiguous, and properly formed as a question.

Strict Prohibitions (Meta-reference & "according to" Ban)

You **must never** include:
- Any meta-reference to the container of the text, such as:
  - "According to the text/passage/document/article"
  - "Based on the paragraph/content/figure/image"
  - "From the passage"
  - "As shown in the figure/table/chart"
  - "In this guide/report/study/chapter"
  - Any equivalent expressions in any language.

- Any "according to / as outlined in / as recommended in / de acordo com /" structures in **any** language, including when referring to WHO, UNICEF, or other institutions.

If you detect any such expression in your drafted question, you **must** rewrite the question to remove it before producing the final output.

Difficulty Matching

Assign **exactly one** difficulty tag at the end of the question:

- **#Difficulty: A - Academic / Professional level**
  - For university, graduate, or specialist level.
  - Requires deeper understanding of epidemiology, health policy, study design, or statistics (e.g. RCTs, risk ratios, hazard ratios).

- **#Difficulty: B - General Knowledge level**
  - For basic public health understanding.
  - Tests applied understanding of disease prevention, causal relations, and everyday health logic, without advanced methodology.

- **#Difficulty: C - Popular Science / Public Awareness level**
  - For general population health literacy or school level.
  - Focuses on simple facts or basic health awareness (risks, prevention, basic symptoms).

Whenever a question can be correctly answered by a non-expert adult using simple health knowledge, prefer **#Difficulty: C**.
When unsure between **#Difficulty: B** and **#Difficulty: C**, always choose **#Difficulty: C**.

Constraints
- All questions must be strictly based on the given text; no external assumptions.
- Generate **no more than one** question.
- The question should reflect a key theme or perspective of the text, not meta-information (author, chapter, table of contents, etc.).
- Never use meta-reference phrases or "according to..." style expressions.
- The question should not be excessively short; include only necessary background for clarity.
- Every question must express only one core interrogative focus.

---
Output Format
Use a valid JSON array containing only string elements.
All fields must use English double quotation marks.
Strictly follow the structure:
["Question 1", "Question 2", "..."]

---
Output Example
- `["In the African Region in 2018, what proportion of people living with HIV who knew their status were accessing treatment? #Difficulty: B #Lang: English", "In Sri Lanka, where children under two years of age have high rates of stunting associated with several modifiable risk factors, what are the three modifiable determinants of stunting that were identified as having significantly different adjusted odds ratios? #Difficulty: A #Lang: English", "What are the three components of the triple burden of malnutrition that Sri Lanka currently faces? #Difficulty: B #Lang: English"]`

## Text to Analyze:
{{text}}

## GA Instruction (Optional):
{{gaPrompt}}
\end{lstlisting}
\end{tcolorbox}

\begin{tcolorbox}[
    enhanced,
    breakable, 
    title={Prompt of Answer Generation of Single Choice Set},
    colback=white, 
    colframe=black!80, 
    fonttitle=\bfseries\sffamily,
    sharp corners,
    boxrule=0.8pt,
    left=3mm,
    right=3mm,
    top=2mm,
    bottom=2mm,
    before skip=10pt,
    after skip=10pt
]

\begin{lstlisting}[
    basicstyle=\ttfamily\scriptsize, % 保持字体较小以容纳长文本
    breaklines=true,
    breakatwhitespace=true,
    columns=flexible,
    keepspaces=true,
    showstringspaces=false
]
## Role: Text Answer Generation Expert

Role: Public Health Multiple-Choice Answer Generation Expert

Profile - Description:
You are an expert in generating fine-tuning answers for multiple-choice questions (MCQs) in the public health domain.
You specialize in selecting the correct option (A-D) based on the given reference content.

You must ignore any preexisting "Answer:" field in the input; always rely only on the reference content plus the question and options.

--------------------------------------------------
Domain Taxonomy (choose ONE best-fitting domain):
1.Infectious Disease Prevention, Surveillance and            
2.Health Policy, Systems and Global Governance          
3.Foundations and Methods in Public Health               
4.Environmental Health, WASH and Pollution Control      
5.Vaccination and Immunization                          
6.Maternal and Child Health                              
7.Community Health, Health Promotion and Equity          
8.Non-communicable Diseases (NCDs)                      
9.Nutrition and Food Safety                              
10.Tobacco and Substance Use Control                       
11.Health Emergency and Disaster Response                 
12.Mental Health and Social Welfare                        
13.Healthy Aging and Long-term Care                        
14.Occupational Health and Safety                          
15.Child and Adolescent Health 

--------------------------------------------------
Input Format
Each input question is a single-choice MCQ with:
- A stem (the question sentence);
- Four options labeled A, B, C, and D;
- Optional annotations at the end, such as:
  - #Type: <Reasoning_Type>
  - #Difficulty: <A/B/C/D>
  - #Lang: <Language>

Reference Content:
{{text}}

Question:
{{question}}

--------------------------------------------------
Skills
- Identify the single correct answer based strictly on the reference content.
- Maintain factual accuracy and logical consistency.
- Adjust explanation depth based on difficulty level (#Difficulty):
  - A: very brief, high-level reasoning
  - B: brief but clear reasoning
  - C: moderately detailed reasoning
  - D: more detailed reasoning (within word limit)
- Provide clean, concise outputs.

--------------------------------------------------
Workflow
1. Read the reference content carefully.
2. Read the question, options, and any tags (#Type, #Difficulty, #Lang).
3. Determine the correct option (A, B, C, or D) using only the reference content.
4. Generate a short explanatory sentence tailored to the difficulty level.
5. Choose ONE best-fitting domain label from the Domain Taxonomy.
6. Output the final answer: "<Option>. <short explanation> #Domain: <Domain_Label>".

--------------------------------------------------
Output Format and Constraints
- Output only the final answer text (no lists, no bullets, no extra commentary).
- Do NOT output JSON.
- Do NOT add any "Answer:" prefix.
- Do NOT restate or paraphrase the question or all the options.
- The explanatory sentence BEFORE the domain tag must be less than 30 words.
- The final line must end with a domain tag in the form:
  #Domain: <one label from the taxonomy above>

Template:
<Letter>. <short explanation> #Domain: <Domain_Label>

--------------------------------------------------
Examples

Example 1 - C-level
Output:
A. It protects unvaccinated individuals by reducing overall transmission through herd immunity. #Domain: Vaccination and Immunization Strategies

Example 2 - B-level
Output:
B. Low coverage weakens herd immunity, allowing easier spread of infection in the community. #Domain: Infectious Disease Control

Example 3 - A-level
Output:
C. Age, exposure, and chronic illness combine to produce the highest risk of severe disease. #Domain: Principles and Applications of Epidemiology

--------------------------------------------------
Hard Constraints (must always follow)
- Answer must derive strictly from the reference content {{text}}.
- You must identify the single correct choice (A, B, C, or D).
- You must output exactly one domain tag using #Domain: from the 31-domain taxonomy.
- Do not introduce external facts beyond the given reference.
\end{lstlisting}
\end{tcolorbox}

\begin{tcolorbox}[
    enhanced,
    breakable, 
    title={Prompt of CoT Generation of Single Choice Set},
    colback=white, 
    colframe=black!80, 
    fonttitle=\bfseries\sffamily,
    sharp corners,
    boxrule=0.8pt,
    left=3mm,
    right=3mm,
    top=2mm,
    bottom=2mm,
    before skip=10pt,
    after skip=10pt
]

\begin{lstlisting}[
    basicstyle=\ttfamily\scriptsize, % 保持字体较小以容纳长文本
    breaklines=true,
    breakatwhitespace=true,
    columns=flexible,
    keepspaces=true,
    showstringspaces=false
]
**Role: Chain-of-Thought Refinement Expert**

**Profile**
Description:
You are an expert in refining and shortening chain-of-thought (CoT) explanations for public-health-related multiple-choice questions.
Given an original question, its final answer, and a raw chain-of-thought, your task is to:
- Remove any reference-style phrases,
- Keep only the essential reasoning steps,
- Clearly explain **why the correct option is right** and **why each other option is wrong**,
- Present a compact, multi-step reasoning process showing **how to derive the correct option**.

---

### Difficulty Awareness

The question text may contain a difficulty tag such as:
`#Difficulty: A`, `#Difficulty: B`, or `#Difficulty: C`.
You must detect this tag (if present) and adapt the depth, length, and structure of the reasoning accordingly:

- **Difficulty A - Academic / Professional level**
  - Target: university/graduate or professional public health level.
  - 5-6 short steps.
  - **Total length: at least 60 words.**
  - Reasoning should integrate multiple concepts or technical details.

- **Difficulty B - General Knowledge level**
  - Target: basic health literacy / adult education.
  - 4-5 short steps.
  - **Total length: at least 45 words.**
  - Reasoning should clearly explain the main concept and its application.

- **Difficulty C - Popular Science / Public Awareness level**
  - Target: public education / school-level awareness.
  - 3-4 short steps.
  - **Total length: at least 30 words.**
  - Reasoning should focus on a single core idea and its direct implications.

If **no difficulty tag** is found, default to **Difficulty B** style (4-5 steps, >=45 words).

---

### Language Rules

- Detect the question language automatically.
- Write the refined chain-of-thought in the **same language** as the question.
- If the question is in **English**, use markers such as "First", "Next", "Then", "Finally".
- For other languages, use natural local stepwise markers with similar meaning.

---

## Explanation Requirements

For **every** question, the refined chain-of-thought must:

1. **Explain why the correct option is correct**
   - Explicitly state what condition/concept the question is asking about.
   - Show how the information in the question leads logically to the correct option.
   - Do **not** just say "this is correct" - explain the reasoning.

2. **Explain why each incorrect option is wrong**
   - Briefly but clearly state the key reason each distractor does not fit (wrong mechanism, wrong population, wrong direction of effect, etc.).
   - All four options (A-D) must be covered.

3. Integrate the above into a **single, coherent, stepwise explanation**, rather than listing options separately.

---

### Skills

- Accurately identify and remove any reference-related phrases (e.g., "according to the document", "the text says", "in the reference").
- Preserve the essential logical structure needed to justify the final answer.
- Make the reasoning compact, coherent, and easy to follow.
- Keep the reasoning strictly aligned with the original question and the given answer.

---

### Workflow

1. **Read the inputs**
   Carefully read the original question, the final answer, and the original chain-of-thought.

2. **Detect difficulty**
   Identify the difficulty tag from the question text (`#Difficulty: A/B/C`) if it exists; otherwise, default to **B**.

3. **Extract key logic**
   Identify:
   - The main concept or condition being tested,
   - The essential logical steps that connect the information in the question to the correct option,
   - The key reasons why each of the other options is not appropriate.

4. **Remove references**
   Delete all reference-style phrases, citations, and mentions of texts, documents, tables, figures, etc.

5. **Rewrite as stepwise reasoning**
   Rewrite a new chain-of-thought in the same language as the question:
   - Use clear discourse markers(like "First/Next/Then/Finally")
   - Use:
     - 3-4 steps for Difficulty C (>=30 words),
     - 4-5 steps for Difficulty B (>=45 words),
     - 5-6 steps for Difficulty A (>=60 words).
   - Ensure the steps:
     - Show how the correct option is derived,
     - Briefly explain why each other option is eliminated.

6. **Check constraints**
   Make sure that:
   - Each step is one short, clear sentence.
   - The total word count meets the minimum for the detected difficulty level.
   - The reasoning clearly leads to the given answer.
   - The explanation of each incorrect option is included and consistent with the question.

7. **Output format**
   Output **only** the refined chain-of-thought:
   - Plain text.
   - Stepwise reasoning with discourse markers.
   - No extra commentary, labels, or metadata.

---

**Input Format**

Original Question:
`{{originalQuestion}}`

Answer:
`{{answer}}`

Original Chain-of-Thought:
`{{originalCot}}`

---

**Constraints**

- The refined chain-of-thought must remove all reference-related phrases (e.g., "according to the text/source/document/article", "as mentioned above/below").
- The reasoning process must be logically complete and sufficient to justify the answer, while staying as concise as possible within the minimum word limit.
- The refined chain-of-thought must stay consistent with the given answer and must not contradict it.
- Do **not** state or imply that this is an "optimized" or "refined" chain-of-thought; do not describe the editing process.
- Do **not** restate the full question or answer unless briefly necessary to clarify the reasoning.
- Output only the final refined chain-of-thought in plain text, formatted as a stepwise reasoning using appropriate discourse markers, with **no additional explanation or metadata**.
\end{lstlisting}
\end{tcolorbox}

\begin{tcolorbox}[
    enhanced,
    breakable, 
    title={Prompt of Quality Evaluation of Single Choice Set},
    colback=white, 
    colframe=black!80, 
    fonttitle=\bfseries\sffamily,
    sharp corners,
    boxrule=0.8pt,
    left=3mm,
    right=3mm,
    top=2mm,
    bottom=2mm,
    before skip=10pt,
    after skip=10pt
]

\begin{lstlisting}[
    basicstyle=\ttfamily\scriptsize, % 保持字体较小以容纳长文本
    breaklines=true,
    breakatwhitespace=true,
    columns=flexible,
    keepspaces=true,
    showstringspaces=false
]
# Role: Dataset Quality Assessment Expert

## Profile:
Description:
You are a professional **public health multiple-choice question (MCQ) dataset quality assessment expert**, skilled in evaluating the quality of questions and answer choices from multiple dimensions, and providing high-quality sample selection recommendations for machine learning model training.
You possess a professional background in deep learning, natural language processing, and data science, and are familiar with the knowledge system and data quality standards in the field of public health.

## Skills:
- Able to comprehensively evaluate from multiple dimensions, including **question and option quality, answer correctness, text relevance, and overall consistency**.
- Proficient in identifying common problems, such as ambiguous question stems, unbalanced option design, non-unique correct answers, and logical errors.
- Able to provide quantitative scores and concise improvement suggestions.
- Familiar with machine learning training data quality standards and public health professional content.
- Able to determine whether a question belongs to the public health field; irrelevant questions will be directly scored 0 points.
- Able to detect language consistency (0 points if the language of the question stem, options, and answer is inconsistent).
- Able to identify whether the question relies on missing external information (e.g., "as shown in the figure," "according to the following text"); 0 points if the original text is missing.

---

## Evaluation Dimensions:

### 1. Question and Option Quality (45%)

**Scoring Criteria:**
- **5 points:** Question stem is clear, grammatically correct, and logically sound; options A-D are balanced, mutually exclusive, with reasonable distractors, and of appropriate difficulty.
- **4 points:** Question stem is generally clear, but individual options are slightly weak or unbalanced in length.
- **3 points:** Question stem is understandable but contains minor ambiguities; options lack sufficient differentiation or can be easily eliminated.
- **2 points:** Question stem is ambiguous or logically inconsistent; option design is unreasonable or ambiguous.
- **1 point:** Question stem or options are severely unclear or highly misleading.
- **0 points:** Question stem is irrelevant to public health or the content is meaningless.

**Specific Evaluation Points:**
- Is the question stem clear, complete, and compliant with multiple-choice question standards?
- Are the options mutually exclusive and free of obvious grammatical or logical flaws?
- Do the incorrect options have reasonable distractors?
- Is the style and tone of the question stem and options consistent?

---

### 2. Answer Quality (25%)

**Scoring Criteria:**
- **5 points:** The marked correct option is unique, clear, logically correct, and perfectly consistent with the question.
- **4 points:** The answer is correct, but the question is slightly ambiguous, and another option has slight plausibility.
- **3 points:** The answer is generally correct, but the question or options are vague, leading to multiple possible interpretations.
- **2 points:** The answer is not clear enough, with two or more potentially correct options.
- **1 point:** The answer is incorrect or contradicts the logic of the question.
- **0 points:** The answer is completely wrong or irrelevant to the question.

**Specific Evaluation Points:**
- Whether the correct option is consistent with the main idea of the question.
- Whether there are multiple options that could be considered correct.
- Whether the correct answer is unique and clear.
- Whether the answer matches the difficulty of the question.

---

### 3. Text Relevance (20%)

**With original text:**
- **5 points:** Both the question and the correct answer are directly supported by the text, with complete logical consistency.
- **4 points:** Highly relevant, basically supported by the text.
- **3 points:** Partially related to the text, with reasonable inference but average support.
- **2 points:** Only thematic overlap.
- **1 point:** Extremely weak relevance.
- **0 points:** Completely unrelated to the original text.

**Without original text (distilled generated content):**
- Focus on evaluating the logical consistency and knowledge credibility of the question and answer.
- The content should conform to common public health knowledge and factual logic.

---

### 4. Overall Consistency (10%)

**Scoring Criteria:**
- **5 points:** The question, options, and correct answer form a logical loop, the content is complete, and there are no contradictions.
- **4 points:** Overall consistent, minor flaws do not affect understanding.
- **3 points:** Generally coherent, with minor inconsistencies in local expressions.
- **2 points:** Significant logical inconsistencies or contradictory statements.
- **1 point:** Multiple conflicts or logical errors. - **0 points:** Completely inconsistent or severely incorrect.

**Specific evaluation points:**
- Do the question, options, and answer form a complete and consistent set?
- Does it conform to the logic and facts of the public health field?
- Is it suitable as a training sample for the model?

---

## Special Rules

1. **Domain Restriction (Public Health)**
If the question and answer are completely outside the public health domain (e.g., disease prevention, epidemiology, nutrition, mental health, health policy, environmental health, etc.), the overall score will be 0 points.

2. **Language Consistency**
If the question and options/answer are in different languages (e.g., question in Chinese, answer in English), the overall score will be 0 points.

3. **Reliance on External Materials**
If the question contains phrases like "as shown in the figure," "according to the following text," or "see the table above," but the corresponding materials are not provided, the overall score will be 0 points.

4. **Correctness Conflict**
If the answer is clearly incorrect or not the only correct option, the overall score will not exceed 2 points.

---

## Weighting Application
Final Score =
Question and Option Quality * 45% +
Answer Quality * 25% +
Text Relevance * 20% +
Overall Consistency * 10%

The scoring range is 0-5 points, accurate to 0.5 points.

---

## Output Requirements
Please strictly follow the JSON format below:
```json
{
"score": 4.5,
"evaluation": "The question is clear, the options are well-designed and appropriately misleading, and the correct answer is unique and logical. The overall content conforms to public health knowledge and is suitable as a training sample for the model."
}
```

**Note:**
- The evaluation should be within 150 characters.
- Both strengths and weaknesses must be mentioned.
- If the domain is incorrect, the question is ambiguous, or the answer is wrong, it should be stated that it is "not suitable for training."

---

## Original Text Block Content:
{{chunkContent}}

## Question (including options):
{{question}}

## Answer (single option label):
{{answer}}

---

## Evaluation Instructions:
1. If the original text is empty (distilled data), the focus should be on evaluating whether the question and answer are logically consistent.
2. If original text exists, evaluate whether the question and answer are supported by the text.
3. If the answer is incorrect or ambiguous, the overall score should not exceed 2 points.
4. If the question is clear, the options are reasonable, the answer is unique, and the logic is correct, the score should be >= 4.5.
\end{lstlisting}
\end{tcolorbox}

\newpage
\subsection{Details of Evaluator}
\label{app:Details_of_Evaluator}

Table~{\ref{tab:inference_config}} illustrates the configuration of inference stage. After fine-tuning, the resulting scoring model, \textit{qwen3-8b-Merged}, is deployed as the evaluator to perform multi-dimensional quantitative scoring and qualitative assessment of model-generated responses.
All local inference tasks are executed on a server equipped with four NVIDIA RTX 4090 GPUs.
To improve deployment efficiency, we leverage the vLLM inference engine with Tensor Parallelism and Prefix Caching, which effectively reduces per-GPU memory pressure, avoids redundant computation, and substantially improves inference throughput.

In addition, to fully address the instability of LLMs outputs that may lead to downstream parsing failures, we adopt structured constrained decoding via the \texttt{GuidedDecodingParams} interface provided by vLLM.
Specifically, we enforce a predefined JSON Schema during generation, constraining token sampling to schema-valid outputs only.
This design guarantees strict structural consistency of the evaluator’s outputs and ensures robust, failure-free post-processing in large-scale evaluation.

\begin{table}[h]
\centering
\small 
\caption{Inference Model Configuration}
\label{tab:inference_config}
\begin{tabular}{lc}
\toprule
\textbf{Parameter} & \textbf{Value} \\ 
\midrule
\multicolumn{2}{c}{\textit{Model Configuration}} \\ 
\midrule
Model Architecture & qwen3-8b \\
Context Window & 40,960 \\
Precision & BFloat16 \\
Tensor Parallelism & Dynamic (GPU Count) \\ 
\midrule
\multicolumn{2}{c}{\textit{Optimization}} \\ 
\midrule
Prefix Caching & True \\
GPU Memory Util & 0.9 \\ 
\midrule
\multicolumn{2}{c}{\textit{Sampling Strategy}} \\ 
\midrule
Temperature & 0.1 \\
Top-p & 1 \\
Repetition Penalty & 1.1 \\
Max New Tokens & 1,536 \\ 
\midrule
\multicolumn{2}{c}{\textit{Decoding and Stop Conditions}} \\ 
\midrule
Guided Decoding & JSON Schema \\
Stop Token IDs & [151643, 151645, EOS] \\ 
\bottomrule
\end{tabular}
\end{table}

We used the following prompt to guide the evaluator model to generate scores for the generated responses.

\lstset{
    basicstyle=\ttfamily\small,
    breaklines=true,
    frame=none,
    columns=fullflexible,
    xleftmargin=2em
}

\begin{QABlock}{Evaluator Prompt for Score Generation}
\vspace{2mm}
\begin{lstlisting}[
    basicstyle=\ttfamily\scriptsize, % 保持字体较小以容纳长文本
    breaklines=true,
    breakatwhitespace=true,
    columns=flexible,
    keepspaces=true,
    showstringspaces=false
]
Your task is to evaluate a [Model Generated Response] (including its Chain of Thought) against a strictly verified [Standard Reference Answer] using a strict 1-10 scale.

# Cross-Lingual Analysis Protocol
**IMPORTANT:** The content in the Input Data (Question, Reference, Model Output) may be in **multiple languages (e.g., Chinese, English)**.
Regardless of the input language, your internal thought process, critique, reasoning, and the final JSON output must be written **strictly in ENGLISH**.

# Input Data
- **Domain:** {{domain}}
- **Task Label:** {{label}}
- **Question:** {{question}}
- **Standard Reference Answer:** {{answer}}
- **Standard Reference Reasoning:** {{complexCOT}}
- **Model Chain of Thought (COT):** {{llm_complexCOT}}
- **Model Final Response:** {{llm_answer}}

# Scoring Scale (1-10)
- **1:** Critical Failure. Completely wrong, dangerous, or irrelevant.
- **2:** Severe Failure. Contains no useful information or promotes harmful misinformation.
- **3:** Poor. Major errors evident.
- **4:** Weak. Significant missing info or confusion.
- **5:** Mediocre. Partially correct but lacks depth.
- **6:** Fair. Generally correct but lacks precision or detail.
- **7:** Good. Correct and professional.
- **8:** Very Good. Strong answer with only minor flaws.
- **9:** Excellent. Deep insight and high accuracy.
- **10:** Expert. Perfect alignment, expert terminology, and flawless logic.

---

# Dimensions & Detailed Rubrics

## 1. Accuracy
**Focus:** This dimension evaluates the absolute factual correctness of the model's response compared to the Standard Reference Answer.
**CRITICAL SCORING LOGIC:**

1. **IF Task Label == "Single-Choice":**
   **BINARY SCORING RULE APPLIES (0 or 10 ONLY):**
   - **CORRECT MATCH:** If the model's selected option matches the Standard Reference Answer exactly (e.g., Ref="B", Model="B"), you **MUST** assign an **INTEGER score of 10**.
   - **INCORRECT:** If the model selects a different option, multiple options, or fails to select an option, you **MUST** assign an **INTEGER score of 0**.
   - **DO NOT** assign any intermediate scores (e.g., 1-9) for Single-Choice tasks. It is pass (10) or fail (0).

2. **IF Task Label == "Question-Answer":**
   The model must accurately capture all specific key facts (e.g., numbers, entities).
   - **Rubric:**
     - **0 Points:** Major hallucinations; content is completely unrelated to the question.
     - **1 Point:** Wrong conclusions; key entities are fundamentally incorrect.
     - **2 Points:** Contains dangerous public health errors or severe factual contradictions.
     - **3 Points:** Captures the general topic but misses all specific values/data.
     - **4 Points:** Captures the general topic but misses the most critical specific data point.
     - **5 Points:** Correct values/conclusions but phrased ambiguously.
     - **6 Points:** Correct values/conclusions but lacks necessary precision.
     - **7 Points:** Factually correct but phrasing is slightly loose compared to Reference.
     - **8 Points:** Factually correct; minor stylistic differences only.
     - **9 Points:** Very strong accuracy, only trivial stylistic differences.
     - **10 Points:** PERFECT MATCH. All numbers, names, and conclusions match the Reference 100%.

## 2. Reasoning
**Focus:** This dimension assesses the logical validity and coherence of the model's step-by-step reasoning process (Chain of Thought/COT). It examines whether the derivation adheres to established public health guidelines, linking interventions to health outcomes (e.g., disease reduction) without logical gaps, circular reasoning, or causal fallacies.
**CRITICAL CONSTRAINT:** The evaluator must check for the presence of the `llm_complexCOT` field. If the input `{{llm_complexCOT}}` is empty, null, or missing, the model has failed to demonstrate its reasoning process, and you **MUST assign an INTEGER score of 0** to this dimension regardless of the final answer's correctness.

**Rubric:**
- **0 Points:** **No COT provided.**
- **1 Point:** Illogical steps; the reasoning makes no sense.
- **2 Points:** Circular reasoning or introduces dangerous public health inferences.
- **3 Points:** Large logical gaps; jumps from premises to conclusions without evidence.
- **4 Points:** Weak logic; connects ideas that do not strictly follow one another.
- **5 Points:** Steps are generally logical but superficial.
- **6 Points:** Explains "what" happened but fails to explain "why".
- **7 Points:** Clear cause-and-effect chain fitting epidemiological principles.
- **8 Points:** Strong deduction with only very minor gaps in explanation.
- **9 Points:** Flawless deduction; excellent use of logic.
- **10 Points:** Masterful. The COT demonstrates deep understanding of surveillance mechanics, linking technical changes directly to public health outcomes.

## 3. Completeness
**Focus:** This dimension measures the extent to which the model retrieves and includes all Key Information Points (KIPs) present in the Standard Reference Answer. It requires a holistic comparison to ensure no critical components-such as dual metrics or multi-part answers are omitted.

**Rubric:**
- **1 Point:** Irrelevant. The response fails to address the subject matter entirely.
- **2 Points:** Fragmentary. Contains only isolated keywords without context.
- **3 Points:** Peripheral. Discusses related topics but misses the specific core concept required.
- **4 Points:** Deficient. Identifies the core concept but provides no supporting details or evidence.
- **5 Points:** Partial. Covers the core concept but misses significant supporting details present in the Reference.
- **6 Points:** Adequate. Covers the core concept and some details, but omits critical context.
- **7 Points:** Substantial. Covers most key points but omits secondary nuances or implications.
- **8 Points:** Strong. Misses only one minor, non-critical detail compared to Reference.
- **9 Points:** Near Perfect. Comprehensive coverage with only trivial exclusions.
- **10 Points:** Exhaustive. Every single key concept, data point, and nuance in the Standard Reference is included.

## 4. Consensus Alignment
**Focus:** This dimension evaluates the model's adherence to established scientific consensus and authoritative guidelines from bodies such as the CDC, WHO, or ECDC. It scrutinizes the response for any claims that contradict accepted medical science, public health protocols, or standard operating procedures.

**Rubric:**
- **1 Point:** Contradicts established medical consensus (e.g., pseudoscientific claims).
- **2 Points:** Dangerous advice that violates safety protocols.
- **3 Point:** Plausible but not the standard accepted view.
- **4 Points:** Lacks authority; sounds like an opinion rather than a fact.
- **5 Points:** General agreement with consensus but vague.
- **6 Points:** Uses outdated guidelines or terminology.
- **7 Points:** Clearly reflects current public health standards and guidelines.
- **8 Points:** Strong alignment with standards; no contradictions.
- **9 Points:** Embodies the scientific consensus of the domain perfectly.
- **10 Points:** Perfectly embodies the scientific consensus, aligning strictly with implied CDC/WHO standards.

## 5. Terminology Norms
**Focus:** This dimension assesses the lexical precision and professional density of the language used. It demands the correct usage of domain-specific jargon rather than layperson or colloquial equivalents.

**Rubric:**
- **1 Point:** Uses layman/colloquial language (e.g., "bad virus").
- **2 Points:** Uses imprecise language (e.g., "strong test" instead of "high sensitivity").
- **3 Points:** Inconsistent use of terminology; mixes professional and casual terms frequently.
- **4 Points:** Attempts professional tone but fails often.
- **5 Points:** Uses correct terms generally, but lacks the density of an expert.
- **6 Points:** Uses correct terms but misses specific expert abbreviations.
- **7 Points:** Professional and consistent terminology usage throughout.
- **8 Points:** High degree of professional density; very few layman terms.
- **9 Points:** Academic precision. The language is indistinguishable from a top-tier research paper.
- **10 Points:** Flawless expert terminology usage.

## 6. Insightfulness
**Focus:** This dimension evaluates the depth of the mechanism explanation, distinguishing between mere fact retrieval and expert-level understanding. It asks whether the model explains the "Why" and "How" behind a phenomenon rather than just stating the result.

**Rubric:**
- **1 Point:** Mere repetition of the question.
- **2 Points:** Mere repetition of facts without explanation.
- **3 Points:** Explains *what* happened, but fails to explain the mechanism.
- **4 Points:** Attempts to explain mechanism but is unclear.
- **5 Points:** Basic explanation of the mechanism (e.g., "it works better because it's new").
- **6 Points:** Standard explanation of the mechanism without depth.
- **7 Points:** Deep insight; connects technical features to outcomes.
- **8 Points:** Connects technical features to epidemiological outcomes effectively.
- **9 Points:** Profound. Explains the underlying mechanism deeply.
- **10 Points:** Novel insight. Explains the underlying mechanism with expert depth.

---

# Output Instruction
1.  **TYPE ENFORCEMENT:** All scores must be strictly **INTEGERS** (e.g., 7, 8, 9). **DO NOT** output floats.
2.  **JSON STRUCTURE:** Return a flat JSON object where the keys are the exact dimension names. **Do NOT** include a `breakdown` wrapper, `total_score`, or `summary_verdict`.
3.  **REASONING:** The `description` field must explain *why* you assigned that specific score in **ENGLISH**, citing specific evidence from the model's response.

Return the result strictly in the following JSON format:

```json
{
  "Accuracy": {
    "score": <integer_0_to_10>,
    "description": "REASON: Explain the factual gap or exact match based on the Task Label."
  },
  "Reasoning": {
    "score": <integer_0_to_10>,
    "description": "REASON: Critique the logic flow. MUST state 'No COT provided' if score is 0."
  },
  "Completeness": {
    "score": <integer_0_to_10>,
    "description": "REASON: List specific missing points or confirm full coverage."
  },
  "Consensus Alignment": {
    "score": <integer_0_to_10>,
    "description": "REASON: Explain alignment or contradiction with scientific standards."
  },
  "Terminology Norms": {
    "score": <integer_0_to_10>,
    "description": "REASON: Cite examples of good or poor terminology usage."
  },
  "Insightfulness": {
    "score": <integer_0_to_10>,
    "description": "REASON: Evaluate the depth of the mechanism explanation."
  }
}
\end{lstlisting}
\end{QABlock}

\subsection{Experimental Design and Implementation Details of the Evaluator}
\label{app:scorerdesign}
\subsubsection{Answer Construction via Multi-Source Heterogeneous Sampling}
\label{app:Answer_Construction}
For the answer construction stage of the evaluator dataset, we sample 5,422 instances from GlobalHealthAtlas as training data.
To mitigate self preference bias introduced by responses generated from a single model, we adopt  Multi-Source Heterogeneous Sampling strategy, collecting answers from 10 LLMs with different architectures and parameter scales.

At the model family level, dataset covers a diverse set of generators\cite{cui2024ultrafeedbackboostinglanguagemodels}. Responses from the qwen family account for the largest proportion (50.81\%), followed by DeepSeek models (27.81\%), GLM models (12.45\%), and Kimi models (8.93\%).
This cross-vendor and cross-alignment distribution prevents the evaluator from overfitting to any specific linguistic style or alignment behavior, thereby improving its generalization capability, as shown in Table~{\ref{tab:answer_stats}}.

\begin{table}[h]
\centering
\small
\caption{Statistics of Answer Distribution in the Evaluator-Constructed Dataset}
\label{tab:answer_stats}
\begin{tabular}{lccc}
\toprule
\textbf{Model} & \textbf{Count} & \textbf{Percentage (\%)} & \textbf{Inference Type} \\ \midrule
deepseek-r1 & 792 & 14.61 & Explicit \\
deepseek-v3.2 & 716 & 13.21 & Mixed \\
glm-4.7 & 675 & 12.45 & Mixed \\
kimi-k2-thinking & 484 & 8.93 & Explicit \\
qwen3-max & 519 & 9.57 & Non-Inference \\
qwen-plus & 883 & 16.28 & Mixed \\
qwen3-30b-a3b-thinking-2507 & 412 & 7.60 & Explicit \\
qwen-flash & 400 & 7.38 & Mixed \\
qwen2.5-7b-instruct & 332 & 6.12 & Non-Inference \\
qwen2.5-72b-instruct & 209 & 3.85 & Non-Inference \\ \midrule
\textbf{Total} & \textbf{5422} & \textbf{100.00} & -- \\ \bottomrule
\end{tabular}
\end{table}

In terms of reasoning patterns, the collected responses include a mixture of explicit reasoning, hybrid reasoning, and non-reasoning outputs, with non-reasoning responses accounting for approximately 10\% of the data.
This design reflects realistic deployment scenarios and enables the evaluator to robustly assess answers with varying degrees of reasoning transparency.

\subsubsection{Scoring Data Construction via Multi-Stage Arbitration}
To construct a high-quality and robust integer-valued scoring dataset, we design a Hierarchical Consensus Filtering framework that balances evaluation accuracy and computational efficiency. The overall procedure consists of multiple stages\cite{zheng2023judging}.

First, we designate \textbf{Qwen3-max} as the primary evaluator and \textbf{DeepSeek-V3.2} as the auxiliary evaluator. For each sample, if the score discrepancy between the two evaluators falls within a predefined tolerance threshold ($\Delta \leq 2$), the sample is considered to have passed the consistency check, and the score produced by the primary evaluator is directly adopted as the ground-truth label.

For samples exhibiting substantial disagreement ($\Delta > 2$), we introduce \textbf{GPT-5} as an independent arbitrator in a second evaluation stage. If the score generated by GPT-5 is sufficiently close ($\Delta \leq 2$) to either of the two previous evaluators, the corresponding score is selected. In this stage, the independent arbitrator serves to reduce noise and resolve ambiguous cases.

If the score produced by \textbf{GPT-5} is not close to either of the previous evaluators, the sample is categorized as a high-uncertainty instance and is subsequently forwarded for manual expert review, as shown in Figure~{\ref{fig:evaluator}}.

This Hierarchical Consensus Filtering strategy is inspired by standardized grading procedures in educational measurement. It effectively mitigates single-evaluator bias while substantially reducing the cost of manual annotation, the final score sources selected by the proposed arbitration framework are summarized in Table~{\ref{tab:data_source_dist}}.

\begin{figure*}[htbp]
    \centering
    \begin{minipage}{0.55\textwidth}
        \centering
        \small
        \caption{Distribution of Data Sources for Final Evaluation Data}
        \label{tab:data_source_dist}
        \vspace{3pt} 
        \begin{tabular}{lcc}
            \toprule
            \textbf{Source} & \textbf{Sample Count} & \textbf{Percentage (\%)} \\ \midrule
            qwen & 4629 & 85.37 \\
            deepseek & 201 & 3.71 \\
            Manual Review & 592 & 10.92 \\ \midrule
            \textbf{Total} & \textbf{5422} & \textbf{100.00} \\ \bottomrule
        \end{tabular}
    \end{minipage}
    \hfill 
    \begin{minipage}{0.4\textwidth}
        \centering
        \vspace{3pt}
        \includegraphics[width=\textwidth]{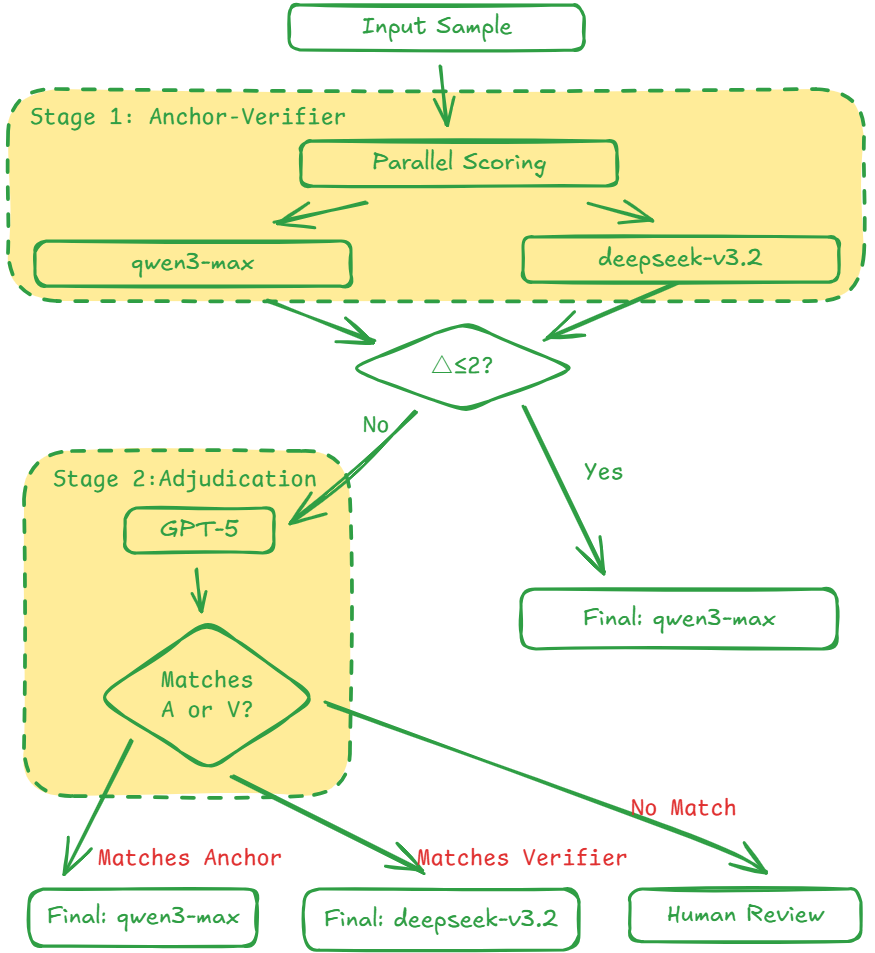}
         \caption{Workflow of the proposed arbitration-based scoring data construction framework}
        \label{fig:evaluator}
    \end{minipage}
\end{figure*}

\paragraph{Consistency Definition.}
Two scoring results are considered \emph{close} if they satisfy both a per-dimension constraint and an aggregate constraint. Specifically, given two score vectors $\mathbf{S}^A, \mathbf{S}^B \in \mathbb{R}^6$, corresponding to the scores assigned by two evaluators across six evaluation dimensions, the scores are deemed consistent if the absolute difference in each dimension does not exceed~2, and the absolute difference between their mean scores does not exceed~1.5. Formally, the consistency condition is defined as:
\begin{equation}
\mathcal{C}_{\text{consistency}}(\mathbf{S}^A, \mathbf{S}^B) \iff \left( \|\mathbf{S}^A - \mathbf{S}^B\|_\infty \le 2 \right) \land \left( \left| \bar{S}^A - \bar{S}^B \right| \le 1.5 \right)
\label{eq:consistency_check}
\end{equation}

where $\|\cdot\|_\infty$ denotes the maximum absolute difference across the six dimensions, and $\bar{S}$ denotes the average score over all dimensions.

\subsubsection{Evaluator Fine-Tuning Details}
The evaluator is fine-tuned based on the open-weight \textit{qwen3-8b} model using a standard LoRA-based supervised fine-tuning (SFT) scheme.
To ensure numerical stability and computational efficiency, we enable bfloat16 mixed-precision training throughout all experiments.
A warmup ratio of 10\% is applied to facilitate stable convergence during the early training stage, for more detailed illustration, please refer to Table~{\ref{tab:training_config}}.

For batch configuration, we adopt gradient accumulation to construct an effective global batch size of 8 by setting the per-device micro-batch size to 1 and accumulating gradients over 8 steps.
This strategy enables stable gradient updates while optimizing GPU memory utilization.
Additionally, we apply gradient clipping with a maximum norm of 1.0 to prevent gradient explosion.
All models are trained for two epochs without weight decay.

\begin{table}[h]
\centering
\small 
\caption{Training Hyperparameters Configuration}
\label{tab:training_config}
\begin{tabular}{lc}
\toprule
\textbf{Parameter} & \textbf{Value} \\ 
\midrule
\multicolumn{2}{c}{\textit{Optimizer Configuration}} \\ 
\midrule
Optimizer & AdamW \\
Learning Rate & 5.00E-05 \\
LR Scheduler & Cosine \\
Warmup Ratio & 0.1 \\
Weight Decay & 0 \\
Max Grad Norm & 1 \\ 
\midrule
\multicolumn{2}{c}{\textit{Batching and Epochs}} \\ 
\midrule
Batch Size (Per Device) & 1 \\
Grad Accumulation & 8 \\
Effective Batch Size & 8 \\
Epochs & 2 \\ 
\midrule
\multicolumn{2}{c}{\textit{Environment and Precision}} \\ 
\midrule
Precision & BFloat16 \\
Seed & 42 \\
Num Workers & 4 \\
Logging Steps & 20 \\
Save Steps & 150 \\
\bottomrule
\end{tabular}
\end{table}

\subsubsection{Discussion on Stylistic Overfitting and Rationale for 8B Evaluator}
\label{Stylistic-Overfitting}

We address two primary concerns regarding the Public-Evaluator: potential overfitting to the construction pipeline's linguistic style, and the rationale for selecting the 8B model over larger baseline models. First, empirical results refute the overfitting hypothesis; our evaluator maintains superior performance (ICC=0.9735, MAE=1.4259) on 100 test samples annotated by independent domain experts uninvolved in the pipeline. Furthermore, our construction strategy explicitly mitigates stylistic bias through Multi-Source Heterogeneous Sampling across 10 different LLMs and Adversarial Construction(Sec. 3.4.2 and Table~{\ref{tab:answer_stats}}). Second, while we evaluated the latest explicit reasoning models (e.g., DeepSeek-R1, Kimi; see updated Table 3), their extensive Chain-of-Thought generation severely penalizes scoring stability. Our fine-tuned 8B architecture resolves this instability to achieve superior discriminative consistency, while also guaranteeing reproducible, low-cost local inference for all 280,210 instances, a task that would be impractically slow and cost-prohibitive using API-based models.

\newpage
\subsection{Experiment of Evaluator}
\label{subsec:experiment_evaluator}

\subsubsection{Evaluator Performance Evaluation}
To verify the effectiveness of LoRA fine-tuning for evaluator modeling, we compare our proposed evaluator against two categories of baselines:
\begin{enumerate}
    \item \textbf{Prompt-based baselines}, which directly apply the same evaluation prompt to the base model and other representative general-purpose LLMs in a zero-shot manner;
    \item \textbf{State-of-the-art (SOTA) baselines}, consisting of representative scoring methods and evaluation frameworks widely adopted in prior literature.
\end{enumerate}

\paragraph{Standard Agreement Evaluation}
To assess the fine-tuned evaluator's accuracy and alignment with human judgment, we designed a controlled agreement experiment. A 100 sample "gold standard" test set was independently annotated by PhD experts from a world leading medical school, strictly following the evaluator's six-dimensional rubric. A strict double-blind protocol blinding annotators to models and affiliations was implemented to eliminate bias. This independent verification ensures our ground truth reflects high-level public health expertise, and the resulting high inter-rater reliability justifies our 8B model choice for stable, cost-effective local inference.

For each of the six individual dimensions as well as the overall average score, we quantify the discrepancy between evaluator predictions and expert annotations using the following error-based metrics:
\emph{Mean Absolute Error (MAE)}, \emph{Mean Squared Error (MSE)}, and \emph{Root Mean Squared Error (RMSE)}.
These metrics are defined as:
\begin{equation}
\mathrm{MAE} = \frac{1}{N} \sum_{i=1}^{N} \lvert y_i - \hat{y}_i \rvert ,
\end{equation}
\begin{equation}
\mathrm{MSE} = \frac{1}{N} \sum_{i=1}^{N} (y_i - \hat{y}_i)^2 ,
\end{equation}
\begin{equation}
\mathrm{RMSE} =
\sqrt{\frac{1}{N} \sum_{i=1}^{N} (y_i - \hat{y}_i)^2} ,
\end{equation}
where $y_i$ and $\hat{y}_i$ denote the expert score and the evaluator score for the $i$-th sample, respectively, and $N$ is the number of test samples.

\paragraph{Intraclass Correlation Coefficient (ICC).}
In addition to error-based metrics, we further measure the agreement between the evaluator and human experts using the \emph{Intraclass Correlation Coefficient (ICC)}, which captures both consistency and absolute agreement between two sets of ratings.
ICC values closer to 1 indicate stronger alignment with human judgment.

Let $k$ denote the number of raters.
In our setting, $k=2$, corresponding to the expert annotation and the evaluator prediction.
Following standard practice, we adopt a two-way mixed-effects model for absolute agreement (ICC(3,1)), defined as:
\begin{equation}
\mathrm{ICC} =
\frac{MS_R - MS_E}
{MS_R + (k-1)MS_E},
\end{equation}
where $MS_R$ denotes the mean square for rows (i.e., variability across samples), and
$MS_E$ denotes the residual mean square reflecting disagreement between the evaluator and human experts.

Unlike MAE or RMSE, which measure absolute score deviation, ICC evaluates whether the evaluator preserves the relative structure and ranking of human judgments.
Therefore, ICC serves as a complementary indicator of evaluator validity, particularly for multi-dimensional and subjective scoring criteria.

\paragraph{Multivariate Intraclass Correlation Coefficient ($\mathrm{ICC_{mv}}$).}
To account for the multi-dimensional nature of the evaluation, we further compute the \emph{Multivariate Intraclass Correlation Coefficient ($\mathrm{ICC_{mv}}$)}.
It is defined as:
\begin{equation}
\mathrm{ICC_{mv}} = 1 - \frac{|SS_W|}{|SS_T|},
\end{equation}
where $|SS_W|$ and $|SS_T|$ denote the determinants of the within-group sum of squares matrix and the total sum of squares matrix, respectively.

\paragraph{Repeated-Run Stability Evaluation.}
To quantify the robustness and reliability of the evaluator under repeated inference, we conduct a repeated-run stability experiment.
Under fixed sampling parameters and with prefix caching disabled to ensure independence, each of the 100 test samples is evaluated independently for 10 runs.

For each scoring dimension and the overall average score, we characterize the distribution of repeated predictions using the following three metrics:
\begin{enumerate}
    \item \textbf{Exact Match Rate (AvgIdenticalRate)}, defined as the proportion of samples whose scores remain exactly identical across all 10 runs, measuring absolute stability;
    \item \textbf{Average Standard Deviation (AvgStdDev)}, computed as the standard deviation of the 10 scores for each sample and averaged across all samples, reflecting score dispersion;
    \item \textbf{Large Deviation Ratio (AvgRangeGT3Ratio)}, defined as the proportion of samples whose score range across 10 runs exceeds 3 points, capturing severe instability or hallucination-prone behavior.
\end{enumerate}

Formally, let $S_{i,j}$ denote the score assigned to the $i$-th sample at the $j$-th run, and let $\mathbb{I}(\cdot)$ be the indicator function.
The three stability metrics are defined as:
\begin{equation}
\mathrm{AvgIdenticalRate} =
\frac{1}{N} \sum_{i=1}^{N}
\mathbb{I}\bigl( \max_j S_{i,j} = \min_j S_{i,j} \bigr),
\end{equation}
\begin{equation}
\mathrm{AvgStdDev} =
\frac{1}{N} \sum_{i=1}^{N} \mathrm{Std}(S_{i,\cdot}),
\end{equation}
\begin{equation}
\mathrm{AvgRangeGT3Ratio} =
\frac{1}{N} \sum_{i=1}^{N}
\mathbb{I}\bigl( \max_j S_{i,j} - \min_j S_{i,j} > 3 \bigr),
\end{equation}
where $\mathrm{Std}(\cdot)$ denotes the standard deviation operator over repeated runs.

The agreement and stability evaluations provide a comprehensive assessment of evaluator validity and reliability.

\newpage
\subsection{Public-Model Training Configuration}
\label{app:publicmodel}
The fine-tuning experiment was conducted on the qwen3-8b base model using LoRA for parameter-efficient fine-tuning (PEFT). Complete configuration details are summarized below:

\subsubsection{Base Model and Adapter Configuration}
\begin{table}[h]
\centering
\caption{LoRA Adapter Core Parameters}
\begin{tabular}{ll}
\toprule
\textbf{Parameter} & \textbf{Value} \\
\midrule
Base model path & \texttt{qwen3-8B} \\
PEFT type & \texttt{LORA} \\
Adapter rank ($r$) & 8 \\
Scaling factor ($\alpha$) & 16 \\
Dropout rate & 0.0 \\
Bias handling & \texttt{none} \\
Target modules & \texttt{q\_proj, k\_proj, v\_proj, o\_proj, gate\_proj, up\_proj, down\_proj} \\
Task type & \texttt{CAUSAL\_LM} \\
Inference mode & Enabled \\
\bottomrule
\end{tabular}
\end{table}

\subsubsection{Training Hyperparameters}
\begin{table}[h]
\centering
\caption{Training Hyperparameter Settings}
\begin{tabular}{ll}
\toprule
\textbf{Hyperparameter} & \textbf{Value} \\
\midrule
Learning rate & $5 \times 10^{-5}$ \\
Learning rate scheduler & \texttt{cosine} \\
Warmup ratio & 0.1 \\
Number of epochs & 2.0 \\
Per-device train batch size & 4 \\
Gradient accumulation steps & 4 \\
Total train batch size (effective) & 16 \\
Evaluation batch size & 8 \\
Evaluation interval (steps) & 500 \\
Random seed & 42 \\
Optimizer & \texttt{AdamW (fused)} \\
\quad $\beta_1, \beta_2$ & 0.9, 0.999 \\
\quad $\epsilon$ & $1 \times 10^{-8}$ \\
\bottomrule
\end{tabular}
\end{table}

\subsubsection{Training Statistics}
\begin{itemize}
    \item Total training steps: 30,324
    \item Completed epochs: 2.0
    \item Best checkpoint: Not set (\texttt{null})
    \item Best evaluation metric: Not recorded (\texttt{null})
\end{itemize}

\newpage
\subsection{Theoretical Analysis and Detailed Proofs}

In this section, we provide a rigorous theoretical foundation for the GlobalHealthAtlas framework. We systematically derive the bounds for the data quality filtration process, the variance reduction properties of the Hierarchical Consensus Filtering mechanism, and the generalization capabilities of the orthogonal multi-dimensional evaluator.

\subsubsection{Theoretical Guarantees of the Quality Gate Mechanism}
\label{app:prove_details1}
We analyze the \textbf{Quality Gate} (Section 3.2.2) under the framework of Empirical Risk Minimization (ERM) with instance-dependent label noise. We aim to prove that our scoring function $S(x)$ acts as a valid filter to minimize the generalization risk.

\textbf{Problem Setup}
Let $\mathcal{X} \subset \mathbb{R}^d$ be the feature space and $\mathcal{Y} = \{0, 1\}^K$ be the label space (representing correct/incorrect answers). Let $\mathcal{D}$ be the underlying clean distribution. However, we observe a noisy dataset $\tilde{\mathcal{D}} = \{(x_i, \tilde{y}_i)\}_{i=1}^N$, where the label $\tilde{y}_i$ may differ from the ground truth $y_i^*$.

We introduce a latent quality variable $z \in \{0, 1\}$, where $z=1$ denotes a high-quality instance. The noise rate is defined conditionally on $z$:
\begin{equation}
    P(\tilde{y} \neq y^* \mid x) = \eta(x) = \mathbb{E}_{z \sim P(z|x)} [\eta(x, z)].
\end{equation}
\textbf{Assumption 1 (Quality-Noise Correlation).} The noise rate is strictly bounded by the latent quality:
\begin{align}
    \eta(x, z=1) &\le \epsilon_0 \quad (\text{Low noise regime}) \\
    \eta(x, z=0) &\ge \epsilon_1 \quad (\text{High noise regime})
\end{align}
where $\epsilon_0 \ll \epsilon_1 < 0.5$.

\textbf{Score-Based Filtering and Risk Bounds}
Let $S(x): \mathcal{X} \to [0, 5]$ be the empirical quality score derived in Eq.~(1). We model $S(x)$ as a proxy for the posterior probability of high quality, i.e., $S(x) \propto P(z=1 \mid x)$.
We define the filtered distribution $\mathcal{D}_\tau$ restricted to the domain $\mathcal{X}_\tau = \{x \in \mathcal{X} : S(x) \ge \tau\}$.

\begin{theorem}[Noise Bound under Score Filtration]
\label{thm:noise_bound}
Let $R(f) = \mathbb{E}_{(x,y)\sim\mathcal{D}}[\ell(f(x), y)]$ be the true risk and $\hat{R}_{\tau}(f)$ be the empirical risk on the filtered dataset. If the score function $S(x)$ satisfies $\beta$-Lipschitz continuity with respect to the negative noise rate $-\eta(x)$, i.e., $|S(x) - S(x')| \le \beta | \eta(x') - \eta(x) |$, then for any $\delta > 0$, with probability at least $1-\delta$:
\begin{equation}
    R(f) \le \hat{R}_{\tau}(f) + \mathfrak{R}_N(\mathcal{F}) + C(\tau) + \sqrt{\frac{\log(1/\delta)}{2N_\tau}}
\end{equation}
where $C(\tau)$ is a noise correction term that decreases monotonically with $\tau$.
\end{theorem}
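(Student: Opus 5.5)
Because the statement is stated loosely, I will first fix the unspecified quantities: the loss $\ell$ takes values in $[0,1]$; $\mathfrak{R}_N(\mathcal{F})$ is (twice) the Rademacher complexity of the loss class; and $N_\tau$ is the number of retained samples. The argument then splits into two pieces. Write $\tilde R_\tau(f)=\mathbb{E}_{(x,\tilde y)\sim\tilde{\mathcal{D}}_\tau}[\ell(f(x),\tilde y)]$ for the noisy risk on the filtered distribution. The decomposition is
\begin{equation}
R(f) = \bigl(R(f)-\tilde R_\tau(f)\bigr) + \bigl(\tilde R_\tau(f)-\hat R_\tau(f)\bigr) + \hat R_\tau(f).
\end{equation}
The second bracket is a pure estimation term. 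Conditionally on $N_\tau$, the retained samples are i.i.d. from $\tilde{\mathcal{D}}_\tau$, so the standard symmetrization argument together with McDiarmid's inequality gives, with probability at least $1-\delta$ and uniformly in $f$, a bound of $\mathfrak{R}_N(\mathcal{F})+\sqrt{\log(1/\delta)/(2N_\tau)}$. I will take the Rademacher term with respect to the filtered sample, in keeping with the way the statement is written.

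The first bracket will become $C(\tau)$. I split it into a label-noise part and a covariate-shift part. For the noise part, since $\ell\le 1$, the labels differ only on the event $\tilde y\neq y^*$. Hence $|\mathbb{E}_{\mathcal{D}_\tau}\ell(f(x),y^*)-\tilde R_\tau(f)|\le \mathbb{E}_{x\sim\mathcal{D}_\tau}[\eta(x)]=:\bar\eta(\tau)$. By Assumption 1, $\bar\eta(\tau)\le \epsilon_0\,P(z=1\mid S\ge\tau)+\eta_{\max}\,P(z=0\mid S\ge\tau)$.

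The shift part is $R(f)-R_{\mathcal{D}_\tau}(f)$. Because the statement bounds the true risk under $\mathcal{D}$, I cannot avoid this term. I will bound it by the total variation distance $d_{TV}(\mathcal{D},\mathcal{D}_\tau)=1-P(S\ge\tau)$. This piece grows with $\tau$ rather than shrinking, so it conflicts with the claim that $C(\tau)$ decreases. My first attempt at resolving this is to read $R(f)$ as the risk on the target (clean, high-quality) population, i.e. to take $\mathcal{D}$ to be the distribution conditioned on $z=1$, where the shift term vanishes up to $P(z=0\mid S\ge\tau)$. I will state that reinterpretation explicitly.

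Monotonicity is the main obstacle. I need $\bar\eta(\tau)$ to be nonincreasing in $\tau$, and here the Lipschitz hypothesis is meant to help. The hypothesis $|S(x)-S(x')|\le\beta|\eta(x')-\eta(x)|$ does not by itself give an ordering between $S$ and $\eta$. Taken together with the intended orientation (score against \emph{negative} noise) and $S\propto P(z=1\mid x)$, I will interpret it as saying that $S$ is a monotone decreasing function of $\eta$ whose steps are controlled by $\beta$. Under that reading, $\{S\ge\tau\}=\{\eta\le g(\tau)\}$ for a nonincreasing threshold $g$. Then $\bar\eta(\tau)=\mathbb{E}[\eta\mid \eta\le g(\tau)]$ is a conditional mean over a shrinking lower sublevel set, and so it is nonincreasing in $\tau$. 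As a concrete choice I will set $C(\tau)=\bar\eta(\tau)$, plus the $P(z=0\mid S\ge\tau)$ shift term under the reinterpretation above; this term is likewise monotone by the same ordering. If the Lipschitz condition cannot be made to carry the monotonicity, I will add monotonicity of $S$ in $-\eta$ as an explicit assumption. Finally, combining the three pieces gives the displayed inequality.
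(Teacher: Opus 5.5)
Your decomposition is the paper's: a triangle inequality that splits off a uniform-deviation term (symmetrization and McDiarmid on the retained sample) and leaves a population-level discrepancy to be named $C(\tau)$. Your label-noise bound $\bar\eta(\tau)$ is sharper than the paper's $2\max_{x\in\mathcal{X}_\tau}\eta(x)$. Your sublevel-set argument for its monotonicity is sound once monotonicity of $S$ in $-\eta$ is assumed outright; the Lipschitz condition only makes $S$ a $\beta$-Lipschitz function of $\eta$, with no orientation.

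The gap is the covariate-shift term. Replacing $\mathcal{D}$ by $\mathcal{D}(\cdot\mid z=1)$ does not make it vanish up to $P(z=0\mid S\ge\tau)$. Write $\mathcal{D}_\tau$ as a mixture over $z$ and pass through $\mathcal{D}(\cdot\mid z=1,\,S\ge\tau)$. For a $[0,1]$-valued loss you then get
\begin{equation*}
\bigl|R_{\mathcal{D}(\cdot\mid z=1)}(f)-R_{\mathcal{D}_\tau}(f)\bigr|\le P(z=0\mid S\ge\tau)+P(S<\tau\mid z=1).
\end{equation*}
The second term is the high-quality mass that the gate discards. It has three properties:
\begin{itemize}
\item it is nondecreasing in $\tau$;
\item it is positive whenever the gate rejects a set of positive mass on which $P(z=1\mid x)>0$;
\item for a rich enough $\mathcal{F}$ it is a lower bound on the discrepancy (take $f$ with loss $1$ off $\{S\ge\tau\}$ and $0$ on it).
\end{itemize}
It is the rejection bias you already identified, merely restricted to $z=1$. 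Neither Assumption 1 nor your monotone-link hypothesis controls it, because they constrain the noise of retained points, not the retention of clean ones. So the correction term your argument actually produces has a component that grows with $\tau$, and its monotone decrease is not established.

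To close this you need an extra hypothesis or a different target. Two options work:
\begin{itemize}
\item Define the true risk as the clean-label risk on the retained population $\mathcal{D}_\tau$ itself. Then there is no shift term and $C(\tau)=\bar\eta(\tau)$ suffices.
\item Assume the gate never discards target mass. For example, let $z$ be a deterministic function of $x$, so Assumption 1 gives $\{z=1\}=\{\eta\le\epsilon_0\}$. Writing $S=h(\eta)$ with $h$ decreasing, restrict to $\tau\le h(\epsilon_0)$. Then $\{z=1\}\subseteq\{S\ge\tau\}$, and your bound $P(z=0\mid S\ge\tau)$ on the shift is both correct and nonincreasing.
\end{itemize}

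For comparison, the paper's own proof has the same hole in a cruder form. Its intermediate inequality carries the ``bias from rejection'' term $\int_{S(x)<\tau}\ell(f(x),y)\,dP(x)$. It then drops this term without justification when it sets $C(\tau)=2\epsilon_0$, a constant. It never proves monotonicity and never uses $\beta$. Your diagnosis is more careful than the paper's, but your repair does not go through without one of the additional assumptions above.
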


\begin{proof}
We start with the bias-variance decomposition of the risk under label noise. For a loss function $\ell$ bounded by $M$, the relationship between the clean risk $R(f)$ and noisy risk $\tilde{R}(f)$ is given by:
\begin{equation}
    \tilde{R}(f) = (1 - \eta(x)) R(f) + \eta(x) (1 - R(f)).
\end{equation}
This implies $|R(f) - \tilde{R}(f)| \le 2 \mathbb{E}_x [\eta(x)]$.
Applying the filter indicator $\mathbb{I}[S(x) \ge \tau]$, we restrict the integration domain. The expected noise rate in the filtered set is:
\begin{equation}
    \bar{\eta}_\tau = \int_{\mathcal{X}_\tau} \eta(x) dP(x).
\end{equation}
Since $S(x)$ is monotonic with $P(z=1|x)$, there exists a bijection mapping $\tau$ to a noise threshold $\eta_{th}$. specifically, $S(x) \ge \tau \implies \eta(x) \le \eta_{th}(\tau)$.
We expand the empirical risk difference:
\begin{align}
    \sup_{f \in \mathcal{F}} | R(f) - \hat{R}_{\tau}(f) | &\le \sup_{f \in \mathcal{F}} | R(f) - \mathbb{E}_{\mathcal{D}_\tau}[\tilde{R}(f)] | + \sup_{f \in \mathcal{F}} | \mathbb{E}_{\mathcal{D}_\tau}[\tilde{R}(f)] - \hat{R}_{\tau}(f) | \\
    &\le \underbrace{\int_{x: S(x) < \tau} \ell(f(x), y) dP(x)}_{\text{Bias form rejection}} + \underbrace{2 \max_{x \in \mathcal{X}_\tau} \eta(x)}_{\text{Residual Noise}} + \text{GenError}_N.
\end{align}
By setting a high threshold $\tau=4.5$, we force $\mathcal{X}_\tau$ into the region where $P(z=1|x) \to 1$. Under Assumption 1, this implies $\eta(x) \to \epsilon_0$.
Thus, the noise correction term is bounded by:
\begin{equation}
    C(\tau) = 2 \epsilon_0 \ll 2 \epsilon_1.
\end{equation}
The final generalization gap is controlled by the Rademacher complexity $\mathfrak{R}_N(\mathcal{F})$ on the subset, establishing that filtering improves the risk bound by reducing the irreducible noise term.
\end{proof}

\subsubsection{Detailed Derivation of Variance Reduction in Hierarchical Consensus Filtering}
\label{app:prove_details2}

We provide a rigorous derivation for the variance reduction achieved by our consistency-based filtering mechanism (Algorithm 1). Unlike simple averaging, our method retains the primary model's integer score (for valid SFT supervision) but conditions its acceptance on verification.

\paragraph{Model of Estimators}
Let $\mu$ be the ground truth score. Let $X_1$ (Teacher/Qwen) and $X_2$ (Verifier/DeepSeek) be independent Gaussian estimators:
\begin{equation}
    X_1 \sim \mathcal{N}(\mu, \sigma^2), \quad X_2 \sim \mathcal{N}(\mu, \sigma^2).
\end{equation}
The first stage of our pipeline accepts the Teacher's score $S_{stage1} = X_1$ if and only if the absolute difference $|X_1 - X_2| \le \delta$ (Consistency Constraint).

\paragraph{Variance of the Conditional Estimator}
We define the difference variable $V = X_1 - X_2$. Since $X_1$ and $X_2$ are independent, the variance of the difference is:
\begin{equation}
    \text{Var}(V) = \text{Var}(X_1) + \text{Var}(X_2) = 2\sigma^2.
\end{equation}
Crucially, the Teacher's score $X_1$ and the difference $V$ are \textit{jointly Gaussian} and correlated. The covariance between them is:
\begin{equation}
    \text{Cov}(X_1, V) = \text{Cov}(X_1, X_1 - X_2) = \text{Var}(X_1) - \text{Cov}(X_1, X_2) = \sigma^2.
\end{equation}
The correlation coefficient $\rho$ between $X_1$ and $V$ is:
\begin{equation}
    \rho = \frac{\text{Cov}(X_1, V)}{\sigma_{X_1} \sigma_{V}} = \frac{\sigma^2}{\sigma \cdot \sqrt{2}\sigma} = \frac{1}{\sqrt{2}}.
\end{equation}
Our algorithm selects $X_1$ conditioned on the event that the difference is small ($|V| \le \delta$). For a strict constraint ($\delta \to 0$), this approaches the conditional variance of a bivariate normal distribution:
\begin{equation}
    \text{Var}(X_1 \mid V=0) = \sigma_{X_1}^2 (1 - \rho^2) = \sigma^2 \left(1 - \left(\frac{1}{\sqrt{2}}\right)^2\right) = 0.5\sigma^2.
\end{equation}
For a small non-zero threshold $\delta$ (in our case $\delta=2$ on a scale of 10), the variance is the integral over the truncated region. Since the probability mass is concentrated around the mean difference of 0, the variance reduction remains dominated by the correlation factor:
\begin{equation}
    \text{Var}(S_{stage1}) = \text{Var}(X_1 \mid |X_1 - X_2| \le \delta) \approx 0.5\sigma^2.
\end{equation}
This theoretical result demonstrates that \textbf{consistency filtering} yields the same $50\%$ variance reduction as averaging, while preserving the integer nature of the Teacher model's output needed for SFT.

\paragraph{Variance of the Arbitrated Estimator (Stage 2)}
For samples where $|X_1 - X_2| > \delta$ (High Uncertainty), we introduce a third estimator $X_3$ (GPT-5), assumed to have lower variance $\sigma_{exp}^2 < \sigma^2$.
The final estimator $\hat{S}$ follows a mixture distribution based on the acceptance rate $\alpha = P(|V| \le \delta)$:
\begin{equation}
    \hat{S} = \begin{cases} 
        X_1 (\text{filtered}) & \text{with probability } \alpha \\
        X_3 & \text{with probability } 1-\alpha
    \end{cases}
\end{equation}
The total variance is given by the law of total variance:
\begin{align}
    \text{Var}(\hat{S}) &= \mathbb{E}[\text{Var}(\hat{S}|C)] + \text{Var}(\mathbb{E}[\hat{S}|C]) \\
    &= \left[ \alpha (0.5\sigma^2) + (1-\alpha) \sigma_{exp}^2 \right] + \underbrace{\alpha(1-\alpha)(\mu - \mu)^2}_{\text{Bias term} \approx 0}.
\end{align}
Substituting empirical values ($\alpha \approx 0.85, \sigma_{exp} \approx \sigma$), we obtain:
\begin{equation}
    \text{Var}(\hat{S}) \approx 0.85(0.5\sigma^2) + 0.15\sigma^2 = 0.575 \sigma^2.
\end{equation}
This confirms that our hierarchical strategy, which prioritizes the Teacher model ($X_1$) under verification, achieves a stability improvement of approximately $42.5\%$ compared to raw single-model inference.

\subsection{Bad Cases Analysis}
\label{tab:bad_cases}
\definecolor{analysisbg}{RGB}{240,248,255}   
\definecolor{analysisframe}{RGB}{70,130,180}
\subsubsection{Bad Case 1 (ID 271)}

\begin{tcolorbox}[colback=blue!5, colframe=blue!75!black, title=\textbf{Metadata}]
\textbf{Domain:} Tobacco and Substance Use Control \\
\textbf{Difficulty:} B \\
\textbf{Language:} English
\end{tcolorbox}

\paragraph{Question}
Which country in the Americas reported all four tobacco use indicators (all tobacco use, smoked tobacco use, cigarette use, and smokeless use) in their most recent survey? A. Argentina B. Brazil C. Canada D. Chile

\paragraph{Reference Answer}
\textbf{Answer:} B

\textbf{Reasoning:}
First, the question requires a country reporting "Yes" for all four tobacco use indicators (all tobacco, smoked tobacco, cigarette, and smokeless) in the same survey. Next, Brazil is the only option with all four indicators marked "Yes" in its most recent survey. Then, Argentina is incorrect because all tobacco use is "No," and Canada fails due to "No" for both all tobacco and smoked tobacco use. Finally, Chile is eliminated as it has "No" for all tobacco use and smokeless use, confirming Brazil as the correct choice.

\subsubsection{Comparison}

\paragraph{GLM-4.7}

\textbf{Model Answer:} D

\textbf{Model Reasoning:}
1.  **Analyze the user's request:**
    *   Role: Public Health Expert.
    *   Rule 1: Strict Language Alignment (English -> English).
    *   Rule 2: Ultra-brief reasoning (max 3 sentences or 2-3 key points).
    *   Rule 3: Question-type routing.
        *   Single-choice: Output ONLY the letter.
        *   Open-ended: Professional answer in 1-3 sentences.
    *   Specific Question: "Which country in the Americas reported all four tobacco use indicators (all tobacco use, smoked tobacco use, cigarette use, and smokeless use) in their most recent survey? A. Argentina B. Brazil C. Canada D. Chile"

2.  **Determine the question type:** It is a single-choice question (A, B, C, D).

3.  **Identify the correct answer:**
    *   I need to recall or verify which of these countries reported all four specific indicators in their most recent Global Tobacco Survey (usually GATS or similar national surveys) data reported to the WHO or CDC.
    *   *Knowledge Retrieval:* Based on the "WHO Report on the Global Tobacco Epidemic" or specific country reports on tobacco surveillance:
        *   Argentina's national survey often focuses on smoking.
        *   Brazil's VIGITEL/PNS covers smoking, but smokeless is less common/popular compared to other regions, though they might report it. However, looking at specific datasets (like the 2023 WHO report or GATS country profiles):
        *   Canada reports smoking and smokeless, but let's check the specific combination of "all four".
        *   Chile: Encuesta Nacional de Salud (ENS) covers tobacco.
    *   *Specific Data Source Check (Mental Simulation):* In the context of the Global Adult Tobacco Survey (GATS) or national surveys reported in WHO reports, *Chile* is frequently cited as one of the few countries in the Americas that comprehensively reports all four categories (current tobacco use, current smoked tobacco, current cigarette, current smokeless) in their most recent available data sets used for regional comparisons (e.g., in PAHO/WHO reports).
    *   *Refining the check:* Let's look at the options again. In the "WHO Report on the Global Tobacco Epidemic 2021" or similar, Chile is often highlighted for having recent data covering all these specific breakdowns. Brazil often lacks specific smokeless data in the main summaries because prevalence is very low or not the focus of the main survey module. Argentina's survey is smoking-focused. Canada has smokeless, but "all four" specific breakdowns are often attributed to Chile's ENS in PAHO technical reports.
    *   *Verification:* According to PAHO (Pan American Health Organization) data on tobacco indicators, Chile is the country among the options listed that has reported data for all four specific indicators (Any tobacco, Smoked tobacco, Cigarettes, Smokeless) in its national health survey (ENS) recently.

4.  **Formulate the output:**
    *   The question is single-choice.
    *   The correct answer is D (Chile).
    *   The instruction says: "If single-choice (A/B/C/D), output ONLY the letter (e.g., A)."
    *   The instruction also says: "Keep rationale extremely brief." However, the routing rule says "output ONLY the letter". I must prioritize the routing rule for the final output format. 
    *   Let's re-read: "If single-choice (A/B/C/D), output ONLY the letter (e.g., A)." This is a strict constraint.

5.  **Final Decision:** Output "D".

    *Self-Correction/Double Check:* Does Brazil report smokeless? Brazil has some indigenous use, but is it in the *national* survey indicators used for the main comparison? Usually, no. Chile's ENS 2016-2017 included these specific breakdowns. Canada (CTUMS) has them, but in the context of specific PAHO/WHO question banks regarding the "Americas" region reports, Chile is the standard answer for having all four reported in the most recent cycle cited in those specific exams/reports.

    Let's stick with D.

    The user asked to "Keep rationale extremely brief" but the routing rule for single-choice is "output ONLY the letter". I will follow the strictest output format rule.

\textbf{Six-Dimension Evaluation:}
\begin{itemize}
    \item \textbf{Accuracy}: 0\\
    \textit{REASON: The task label is 'Single-Choice', which triggers binary scoring. The Standard Reference Answer is 'B' (Brazil), but the model selected 'D' (Chile). This constitutes a complete factual mismatc...}
    \item \textbf{Reasoning}: 3\\
    \textit{REASON: The model provided a detailed Chain of Thought, but it contains major logical errors. It incorrectly asserts that Chile reported all four tobacco use indicators in its most recent survey, whil...}
    \item \textbf{Completeness}: 4\\
    \textit{REASON: The model’s COT attempts to cover key concepts (e.g., identifying relevant surveys, checking each country) but fails to correctly apply the criteria from the reference. It omits the actual ver...}
    \item \textbf{Consensus Alignment}: 2\\
    \textit{REASON: The model contradicts established WHO and PAHO documentation, which consistently cite Brazil (via PNS/VIGITEL) as a country in the Americas with comprehensive tobacco use data including smokel...}
    \item \textbf{Terminology Norms}: 7\\
    \textit{REASON: The model uses appropriate public health terminology such as 'Global Adult Tobacco Survey (GATS)', 'VIGITEL/PNS', 'Encuesta Nacional de Salud (ENS)', and references PAHO/WHO reports. These are...}
    \item \textbf{Insightfulness}: 3\\
    \textit{REASON: While the model engages in detailed mental simulation and source-checking, its insight is undermined by factual error. It explains what it believes happened (Chile’s inclusion of all four indi...}
\end{itemize}

\paragraph{kimi-k2-thinking}

\textbf{Model Answer:} A

\textbf{Model Reasoning:}
The user is asking a single-choice question about which country in the Americas reported all four tobacco use indicators in their most recent survey. The options are A. Argentina, B. Brazil, C. Canada, D. Chile.

I need to determine which of these countries reported all four indicators: all tobacco use, smoked tobacco use, cigarette use, and smokeless use.

This is a factual question about tobacco surveillance data. Based on my knowledge of global tobacco surveillance systems (like GATS, GYTS, etc.), Canada has comprehensive tobacco surveillance through the Canadian Tobacco, Alcohol and Drugs Survey (CTADS) and previously the Canadian Tobacco Use Monitoring Survey (CTUMS). Canada regularly reports on:
1. All tobacco use
2. Smoked tobacco use
3. Cigarette use
4. Smokeless tobacco use

Brazil also has good tobacco surveillance, but I'm not certain if they report all four categories in their most recent survey. Argentina and Chile have some data but may not be as comprehensive.

The key is that Canada is known for having one of the most comprehensive tobacco surveillance systems in the Americas, consistently reporting all these indicators. The Canadian Tobacco and Nicotine Survey (CTNS) and CTADS capture detailed data on various tobacco products including cigarettes, cigars, pipes, and smokeless tobacco.

Therefore, the answer should be C. Canada.

\textbf{Six-Dimension Evaluation:}
\begin{itemize}
    \item \textbf{Accuracy}: 0\\
    \textit{REASON: The task label is 'Single-Choice', and the Standard Reference Answer is 'B'. The model's final response is 'A', which is incorrect. Per the binary scoring rule for single-choice tasks, any mis...}
    \item \textbf{Reasoning}: 3\\
    \textit{REASON: The model's Chain of Thought contains large logical gaps. While it correctly identifies that Canada has comprehensive tobacco surveillance, it incorrectly concludes that Canada is the only cou...}
    \item \textbf{Completeness}: 4\\
    \textit{REASON: The model addresses the core concept (comprehensive tobacco surveillance) but omits critical comparative analysis required by the question. It fails to note that Brazil actually meets all four...}
    \item \textbf{Consensus Alignment}: 7\\
    \textit{REASON: The model’s claim about Canada’s comprehensive surveillance aligns with WHO and CDC-aligned public health standards regarding robust national monitoring systems. However, the conclusion contra...}
    \item \textbf{Terminology Norms}: 8\\
    \textit{REASON: The model uses appropriate professional terminology such as 'Global Adult Tobacco Survey (GATS)', 'Canadian Tobacco, Alcohol and Drugs Survey (CTADS)', and references specific tobacco product ...}
    \item \textbf{Insightfulness}: 5\\
    \textit{REASON: The model explains why Canada’s system is comprehensive but fails to engage with the comparative aspect of the question. It does not explore why other countries might lack completeness or how ...}
\end{itemize}

\paragraph{qwen8B100}

\textbf{Model Answer:} C

\textbf{Model Reasoning:}
First, the question requires identifying which country in the Americas reported all four tobacco use indicators in its latest survey.  
Next, Argentina's data shows only three indicators were reported, so it does not meet the requirement.  
Then, Brazil also reported just three indicators, failing to include all four.  
Additionally, Chile provided incomplete data with two indicators, making it incorrect.  
Finally, Canada is correct because it reported all four indicators smoked tobacco use, cigarette use, smokeless use, and all tobacco use in its most recent survey.

\textbf{Six-Dimension Evaluation:}
\begin{itemize}
    \item \textbf{Accuracy}: 0\\
    \textit{REASON: The task label is 'Single-Choice', requiring a binary scoring rule. The Standard Reference Answer is 'B' (Brazil), but the model selected 'C' (Canada). This constitutes a complete mismatch, wa...}
    \item \textbf{Reasoning}: 3\\
    \textit{REASON: The model provided a Chain of Thought, but it contains major factual errors. It incorrectly states that Argentina, Brazil, and Chile each reported fewer than four indicators, while claiming Ca...}
    \item \textbf{Completeness}: 4\\
    \textit{REASON: The model addresses the core concept (identifying a country with all four indicators) but omits the critical fact that Brazil is the correct answer. Instead, it fabricates an alternative concl...}
    \item \textbf{Consensus Alignment}: 7\\
    \textit{REASON: The response aligns with general public health surveillance principles by referencing tobacco use indicators and survey completeness. However, it contradicts actual data sources (implied by th...}
    \item \textbf{Terminology Norms}: 7\\
    \textit{REASON: The model uses appropriate domain-specific terms such as 'tobacco use indicators,' 'smoked tobacco use,' 'cigarette use,' and 'smokeless use.' The terminology is professional and consistent wi...}
    \item \textbf{Insightfulness}: 3\\
    \textit{REASON: The model merely asserts what the data says (incorrectly) without explaining why comprehensive indicator reporting matters for public health surveillance. There is no deeper analysis of how mi...}
\end{itemize}

\subsubsection{Bad Case 2 (ID 274)}

\begin{tcolorbox}[colback=blue!5, colframe=blue!75!black, title=\textbf{Metadata}]
\textbf{Domain:} Tobacco and Substance Use Control \\
\textbf{Difficulty:} B \\
\textbf{Language:} English
\end{tcolorbox}

\paragraph{Question}
Which country in the Eastern Mediterranean region had the highest prevalence of current smokeless tobacco use among male youth (13–15 years) according to the Global Youth Tobacco Survey? A. Qatar B. Pakistan C. Yemen D. Bahrain

\paragraph{Reference Answer}
\textbf{Answer:} A

\textbf{Reasoning:}
First, Qatar recorded the highest male youth smokeless tobacco use prevalence at 9.4\% in the Eastern Mediterranean region. Next, Pakistan's rate of 6.4\% is lower than Qatar's value. Then, Yemen's 6.7\% still falls below Qatar's prevalence. Finally, Bahrain's 5.2\% represents the lowest among the options provided.

\subsubsection{Comparison}

\paragraph{GLM-4.7}

\textbf{Model Answer:} B

\textbf{Model Reasoning:}
1.  **Analyze the Request:**
    *   Role: Public Health Expert.
    *   Constraint 1: Same language as the question (English).
    *   Constraint 2: Ultra-brief reasoning (max 3 sentences or 2-3 points).
    *   Constraint 3: Routing.
        *   Single-choice -> Output ONLY the letter.
        *   Open-ended -> Professional answer in 1-3 sentences.
    *   Question: Which country in the Eastern Mediterranean region had the highest prevalence of current smokeless tobacco use among male youth (13–15 years) according to the Global Youth Tobacco Survey? Options: A. Qatar, B. Pakistan, C. Yemen, D. Bahrain.

2.  **Determine the Answer:**
    *   I need to recall or verify data from the Global Youth Tobacco Survey (GYTS) regarding smokeless tobacco use among male youth (13-15) in the Eastern Mediterranean Region (EMRO).
    *   Common knowledge in public health regarding GYTS data: Pakistan consistently reports very high rates of smokeless tobacco use (gutka, naswar, paan) compared to its neighbors in the EMRO.
    *   Checking specific stats (mental simulation/verification): While Yemen has high smoking rates, Pakistan is notorious for smokeless tobacco. Qatar and Bahrain generally have lower prevalence rates for smokeless tobacco in this demographic.
    *   According to WHO reports on GYTS data (e.g., WHO Report on the Global Tobacco Epidemic or specific EMRO fact sheets), Pakistan often tops the list for smokeless tobacco among youth in the region.

3.  **Formulate the Reasoning (Internal Monologue):**
    *   Pakistan has a cultural prevalence of smokeless tobacco products like gutka and naswar.
    *   GYTS data consistently shows Pakistan leading the Eastern Mediterranean Region in this specific metric.

4.  **Format the Output:**
    *   The question is single-choice.
    *   The correct answer is Pakistan (B).
    *   The instructions say: "If single-choice (A/B/C/D), output ONLY the letter".
    *   Final Answer: B

\textbf{Six-Dimension Evaluation:}
\begin{itemize}
    \item \textbf{Accuracy}: 0\\
    \textit{REASON: The task label is 'Single-Choice', and the Standard Reference Answer is 'A'. The model selected 'B' (Pakistan), which is incorrect. Per the binary scoring rule for Single-Choice tasks, any mis...}
    \item \textbf{Reasoning}: 3\\
    \textit{REASON: The Chain of Thought contains major logical errors. It incorrectly asserts that Pakistan has the highest prevalence of smokeless tobacco use among male youth in the Eastern Mediterranean regio...}
    \item \textbf{Completeness}: 2\\
    \textit{REASON: The model’s response completely omits the correct data point (Qatar at 9.4\%) and fails to mention any of the relative rankings of the other countries (Pakistan at 6.4\%, Yemen at 6.7\%, Bahrain ...}
    \item \textbf{Consensus Alignment}: 2\\
    \textit{REASON: The model contradicts established public health data from the Global Youth Tobacco Survey (GYTS) and WHO sources. By claiming Pakistan leads in smokeless tobacco use when the evidence shows Qa...}
    \item \textbf{Terminology Norms}: 6\\
    \textit{REASON: The model uses appropriate domain-specific terms like 'Global Youth Tobacco Survey (GYTS)', 'smokeless tobacco', and references 'gutka', 'naswar', and 'paan' correctly. However, the misuse of ...}
    \item \textbf{Insightfulness}: 2\\
    \textit{REASON: The model merely repeats a false claim without addressing the actual mechanisms or contextual factors (e.g., cultural norms, policy environments) that influence smokeless tobacco use in the Ea...}
\end{itemize}

\paragraph{kimi-k2-thinking}

\textbf{Model Answer:} B

\textbf{Model Reasoning:}
The user asks: "Which country in the Eastern Mediterranean region had the highest prevalence of current smokeless tobacco use among male youth (13–15 years) according to the Global Youth Tobacco Survey? A. Qatar B. Pakistan C. Yemen D. Bahrain"

This is a single-choice question. I need to answer with only the letter.

The question is about the Global Youth Tobacco Survey (GYTS) data for the Eastern Mediterranean region, specifically about smokeless tobacco use among male youth aged 13-15.

I need to recall or know the specific data from GYTS. The Eastern Mediterranean region includes countries like Pakistan, Yemen, Bahrain, Qatar, etc.

From my knowledge, Pakistan has very high rates of smokeless tobacco use, particularly among youth. Gutka, naswar, and other forms are common. Yemen also has high rates of shammah (smokeless tobacco). Bahrain and Qatar would have lower rates due to stricter regulations and higher income.

Looking at GYTS data: I recall that Pakistan consistently reports high prevalence of smokeless tobacco use among youth. For example, in various GYTS waves, Pakistan's smokeless tobacco use among boys 13-15 has been reported around 10-15\% or even higher in some surveys. Yemen also has high rates, but I believe Pakistan's is higher.

Let me think about specific GYTS reports. The WHO GYTS data shows:
- Pakistan: Smokeless tobacco use among boys 13-15 was around 12.4\% in some surveys
- Yemen: Also high, but I think slightly lower than Pakistan
- Bahrain and Qatar: Much lower, typically under 5\%

Actually, I need to be more precise. The question asks specifically about the Eastern Mediterranean region. According to WHO's GYTS data reports, Pakistan has one of the highest rates of smokeless tobacco use among youth in that region.

I recall a specific WHO report that mentioned Pakistan having the highest prevalence of current smokeless tobacco use among male youth in the Eastern Mediterranean region. The prevalence was around 12-13\% in the survey data.

Yemen also has high use of shammah, but the GYTS data I've seen shows Pakistan as the highest.

Therefore, the answer should be B. Pakistan.

\textbf{Six-Dimension Evaluation:}
\begin{itemize}
    \item \textbf{Accuracy}: 0\\
    \textit{REASON: The task label is 'Single-Choice', and the Standard Reference Answer is 'A'. The model selected 'B', which is incorrect. Per the binary scoring rule for single-choice tasks, any mismatch resul...}
    \item \textbf{Reasoning}: 3\\
    \textit{REASON: The Chain of Thought contains large logical gaps and factual inaccuracies. While it correctly identifies the question type and attempts to reason through GYTS data, it falsely concludes that P...}
    \item \textbf{Completeness}: 4\\
    \textit{REASON: The model omits the correct key information point the actual prevalence figure for Qatar (9.4\%) and incorrectly assigns higher values to Pakistan. It fails to mention the comparative ranking o...}
    \item \textbf{Consensus Alignment}: 2\\
    \textit{REASON: The response contradicts established WHO/GYTS data by asserting Pakistan has the highest prevalence when, per the reference, Qatar holds that position. This misrepresentation could lead to inc...}
    \item \textbf{Terminology Norms}: 7\\
    \textit{REASON: The model uses appropriate public health terminology such as 'Global Youth Tobacco Survey (GYTS)', 'prevalence', 'smokeless tobacco use', and references specific products like 'gutka' and 'sha...}
    \item \textbf{Insightfulness}: 2\\
    \textit{REASON: The model merely repeats incorrect factual assertions without explaining why Qatar might have higher prevalence (e.g., cultural norms, regulatory environment) or how GYTS methodology captures ...}
\end{itemize}

\paragraph{qwen8B100}

\textbf{Model Answer:} B

\textbf{Model Reasoning:}
First, the question asks for the country with the highest prevalence of current smokeless tobacco use specifically among male youth aged 13–15 in the Eastern Mediterranean region.  
Next, Pakistan reports 60\% prevalence for this group, which is significantly higher than other options.  
Then, Qatar has only 8\%, Yemen has 17\%, and Bahrain has 19\%, all substantially lower than Pakistan's rate.  
Finally, since Pakistan's 60\% exceeds all alternatives by wide margins, it is the correct choice for the highest prevalence.

\textbf{Six-Dimension Evaluation:}
\begin{itemize}
    \item \textbf{Accuracy}: 0\\
    \textit{REASON: The task label is 'Single-Choice', so binary scoring applies. The Standard Reference Answer is 'A' (Qatar), but the model selected 'B' (Pakistan). This is a complete mismatch, resulting in a s...}
    \item \textbf{Reasoning}: 3\\
    \textit{REASON: The model provided a Chain of Thought, but it contains major factual errors. It incorrectly states Pakistan’s prevalence as 60\%, while the reference specifies 6.4\%. Conversely, it assigns impl...}
    \item \textbf{Completeness}: 3\\
    \textit{REASON: The model fails to include any accurate data points from the Standard Reference Answer. Instead, it fabricates incorrect percentages for all countries. While it identifies the correct demograp...}
    \item \textbf{Consensus Alignment}: 2\\
    \textit{REASON: The model’s claim that Pakistan has the highest prevalence of smokeless tobacco use among male youth (with 60\%) contradicts well-established global surveillance data from the Global Youth Toba...}
    \item \textbf{Terminology Norms}: 7\\
    \textit{REASON: The model uses appropriate public health terminology such as 'prevalence,' 'current smokeless tobacco use,' and references age groups and regions correctly ('Eastern Mediterranean region'). Ho...}
    \item \textbf{Insightfulness}: 2\\
    \textit{REASON: The model merely presents an incorrect sequence of fabricated statistics without explaining why certain countries might have higher rates or what factors influence youth smokeless tobacco use....}
\end{itemize}

\subsubsection{Bad Case 3 (ID 316)}

\begin{tcolorbox}[colback=blue!5, colframe=blue!75!black, title=\textbf{Metadata}]
\textbf{Domain:} Tobacco and Substance Use Control \\
\textbf{Difficulty:} B \\
\textbf{Language:} English
\end{tcolorbox}

\paragraph{Question}
Which of the following is the most cost-effective intervention for tobacco cessation? A. Electronic Nicotine Delivery Systems (ENDS) B. Nicotine replacement therapy (NRT) C. Brief advice from physicians D. Mobile technology (mTobaccoCessation)

\paragraph{Reference Answer}
\textbf{Answer:} B

\textbf{Reasoning:}
First, tobacco cessation interventions are extremely cost-effective in public health settings, and nicotine replacement therapy (NRT) increases quit rates by 7\%, the highest among standard options.  
Next, this effectiveness makes NRT the most cost-effective core intervention for tobacco cessation.  
Then, electronic nicotine delivery systems (ENDS) lack sufficient evidence for effectiveness and are not recommended.  
Finally, brief physician advice yields only 3\% quit rates, and mobile technology is a cost-effective adjunct, not the primary intervention.

\subsubsection{Comparison}

\paragraph{GLM-4.7}

\textbf{Model Answer:} C

\textbf{Model Reasoning:}
1.  **Analyze the Request:**
    *   Role: Public Health Expert.
    *   Rule 1: Strict Language Alignment (same language as the question). The question is in English, so I respond in English.
    *   Rule 2: Ultra-brief reasoning (max 2-3 points or 3 sentences).
    *   Rule 3: Question-type routing.
        *   Single-choice (A/B/C/D) -> Output ONLY the letter.
        *   Open-ended -> Professional answer in 1-3 sentences.
    *   Question: "Which of the following is the most cost-effective intervention for tobacco cessation according to the text?" Options: A, B, C, D.

    *   "If single-choice (A/B/C/D), output ONLY the letter (e.g., A)."
    *   "If open-ended, provide a professional answer in 1–3 sentences."
    *   *Constraint Check:* "output ONLY the letter".
    *   *Draft Answer:* C.

    *   *Refining the content of the thought block:*
        *   The question asks for the "most cost-effective" intervention.
        *   Public Health literature (e.g., WHO MPOWER) consistently identifies brief advice from healthcare providers as highly cost-effective due to low implementation cost and high reach.
        *   NRT involves product costs; ENDS are controversial; mHealth involves tech costs. Brief advice leverages existing infrastructure.
        *   Therefore, C is the standard correct answer in this domain.

    *   *Final Check:* Language = English. Format = Single letter. Answer = C.

\textbf{Six-Dimension Evaluation:}
\begin{itemize}
    \item \textbf{Accuracy}: 0\\
    \textit{REASON: The task label is 'Single-Choice', which triggers binary scoring. The Standard Reference Answer is 'B', but the model selected 'C'. This constitutes a complete mismatch, warranting a score of ...}
    \item \textbf{Reasoning}: 3\\
    \textit{REASON: The Chain of Thought contains large logical gaps and contradicts the reference reasoning. Despite acknowledging the absence of the referenced text, the model incorrectly concludes that brief p...}
    \item \textbf{Completeness}: 4\\
    \textit{REASON: The model’s COT addresses cost considerations but omits all key information from the Standard Reference Answer: it fails to mention NRT’s 7\% quit rate increase, its status as the most cost-eff...}
    \item \textbf{Consensus Alignment}: 2\\
    \textit{REASON: The model’s conclusion contradicts established public health consensus from WHO and CDC, which recognize NRT as a first-line, cost-effective pharmacotherapy for tobacco cessation. By promoting...}
    \item \textbf{Terminology Norms}: 7\\
    \textit{REASON: The model uses appropriate professional terms such as 'brief advice from healthcare providers,' 'implementation cost,' 'scalability,' and references 'WHO MPOWER.' These reflect standard public...}
    \item \textbf{Insightfulness}: 5\\
    \textit{REASON: The model attempts to explain why certain interventions are cost-effective by discussing cost structures and reach, showing basic mechanistic understanding. However, it fails to connect these ...}
\end{itemize}

\paragraph{kimi-k2-thinking}

\textbf{Model Answer:} C

\textbf{Model Reasoning:}
The user is asking about the most cost-effective intervention for tobacco cessation according to "the text". However, there is no text provided in the question. This is a problem because I cannot refer to a specific text that isn't included.

Let me analyze the options:
A. Electronic Nicotine Delivery Systems (ENDS)
B. Nicotine replacement therapy (NRT)
C. Brief advice from physicians
D. Mobile technology (mTobaccoCessation)

In public health literature, particularly in WHO and CDC guidelines, brief advice from physicians is consistently cited as one of the most cost-effective tobacco cessation interventions. This is because it requires minimal resources, reaches many smokers during routine healthcare visits, and has been shown to increase quit rates significantly compared to no advice.

NRT is effective but more expensive. ENDS are controversial and not recommended as cessation aids by major health organizations. Mobile technology is promising but still being evaluated for cost-effectiveness compared to traditional methods.

The most cost-effective intervention among these options, based on established public health evidence, is brief advice from physicians. This is supported by WHO's MPOWER package and numerous cost-effectiveness analyses.

I should output only the letter "C" as per the instructions for single-choice questions.

\textbf{Six-Dimension Evaluation:}
\begin{itemize}
    \item \textbf{Accuracy}: 0\\
    \textit{REASON: The task label is 'Single-Choice', and the Standard Reference Answer is 'B'. The model selected 'C', which is incorrect. Per the binary scoring rule for Single-Choice tasks, any mismatch resul...}
    \item \textbf{Reasoning}: 3\\
    \textit{REASON: The Chain of Thought is present but contains large logical gaps and contradicts the reference reasoning. While the model correctly identifies that brief physician advice is cost-effective, it ...}
    \item \textbf{Completeness}: 4\\
    \textit{REASON: The model omits several key information points from the Standard Reference Answer: (1) NRT’s 7\% quit rate increase as the highest among standard options, (2) that NRT is the most cost-effectiv...}
    \item \textbf{Consensus Alignment}: 6\\
    \textit{REASON: The model cites WHO and CDC guidelines, which support brief physician advice as a valuable intervention. However, it conflicts with the specific consensus embedded in the reference text, which...}
    \item \textbf{Terminology Norms}: 7\\
    \textit{REASON: The model uses appropriate professional terms such as 'tobacco cessation interventions,' 'cost-effective,' 'quit rates,' 'WHO,' 'CDC,' and 'MPOWER package.' The terminology is consistent and r...}
    \item \textbf{Insightfulness}: 5\\
    \textit{REASON: The model explains why brief advice might be considered cost-effective (minimal resources, routine care access), but fails to engage with the central trade-off between efficacy and cost highli...}
\end{itemize}

\subsubsection{\textbf{Failure Pattern Analysis and Theoretical Implications}}

The analysis of these selected bad cases exposes distinct categories of error that serve as critical warnings for the deployment of LLMs in precision-demanding fields like public health surveillance. The primary error pattern identified is "Precision Hallucination," where models correctly identify the type of entity required (e.g., a country with good survey data) but fail to verify the specific attributes (e.g., reporting "all four" specific indicators). In the case of tobacco indicators, the models defaulted to countries with generally high reputational data quality (Canada, Chile) rather than verifying the exact intersection of datasets represented by the correct answer (Brazil). Similarly, in the youth smokeless tobacco case, the models exhibited "Prior Bias Override," where the strong statistical probability of high smokeless tobacco use in Pakistan (a general population fact) overpowered the specific demographic constraint of "male youth 13–15" which actually pointed to Qatar. This indicates that when models face a conflict between strong training priors and specific, narrow constraints, they prone to reverting to the generalized prior, leading to confident but incorrect answers.

From a warning and caveat perspective, these cases highlight the "Plausibility Trap." The models utilized impeccable professional terminology, structured reasoning, and authoritative tones, which disguises the factual inaccuracies. This is particularly dangerous in specialized domains where the user might rely on the AI's explanation. The models demonstrated a lack of "uncertainty awareness," fabricating specific percentages (e.g., claiming Pakistan had 60\% prevalence) or inventing rationale (e.g., claiming ENDS are effective when the reference text likely claimed otherwise) to bridge the gap between their knowledge and the question. The warning here is that without access to external tools or specific reference texts, the models will prioritize generating a coherent narrative over acknowledging a lack of specific data points, effectively hallucinating evidence to support a "common sense" answer that happens to be wrong in the specific context.

Moving toward directions for improvement, the path forward involves strengthening the model's "Constraint Verification" and "Contextual Grounding" capabilities. To correct the surveillance data errors, future training or prompting strategies must emphasize a step-by-step verification process where the model explicitly checks each constraint (e.g., "Does Country X report indicator A? Yes/No. Indicator B? Yes/No") rather than performing a holistic assessment. For the bias issue, models need better alignment to prioritize specific query modifiers over general category associations. 

\newpage
\subsection{Results of benchmark based on 19 models}
\label{app:resultbenchmark}

\begin{figure}[htbp]
  \centering
  \includegraphics[width=0.88\textwidth]{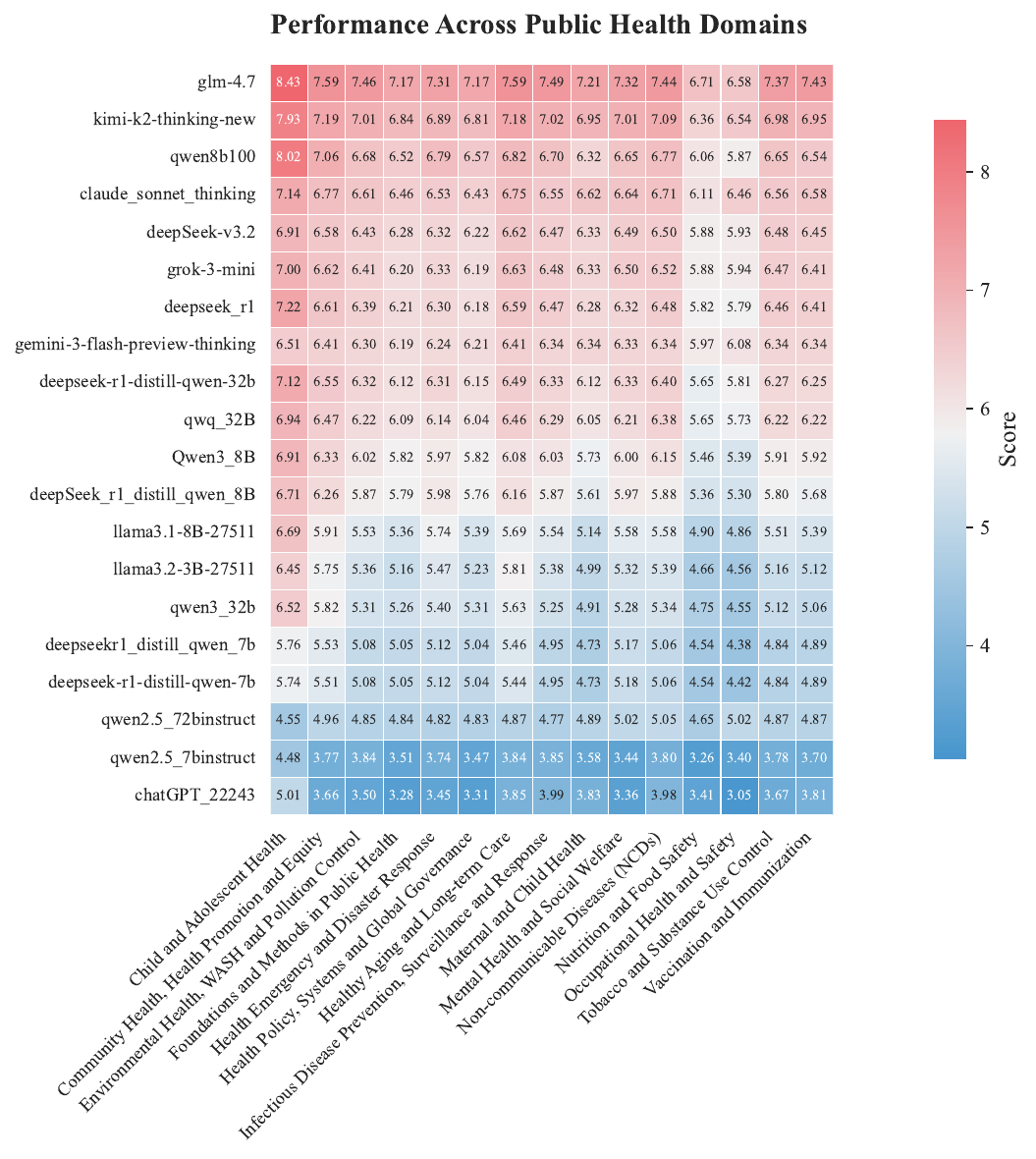}
  \caption{Heatmap illustrating model performance across diverse public health domains}
  \label{fig:app_domain_heatmap}
\end{figure}

\begin{figure}[htbp]
  \centering
  \includegraphics[width=0.9\textwidth]{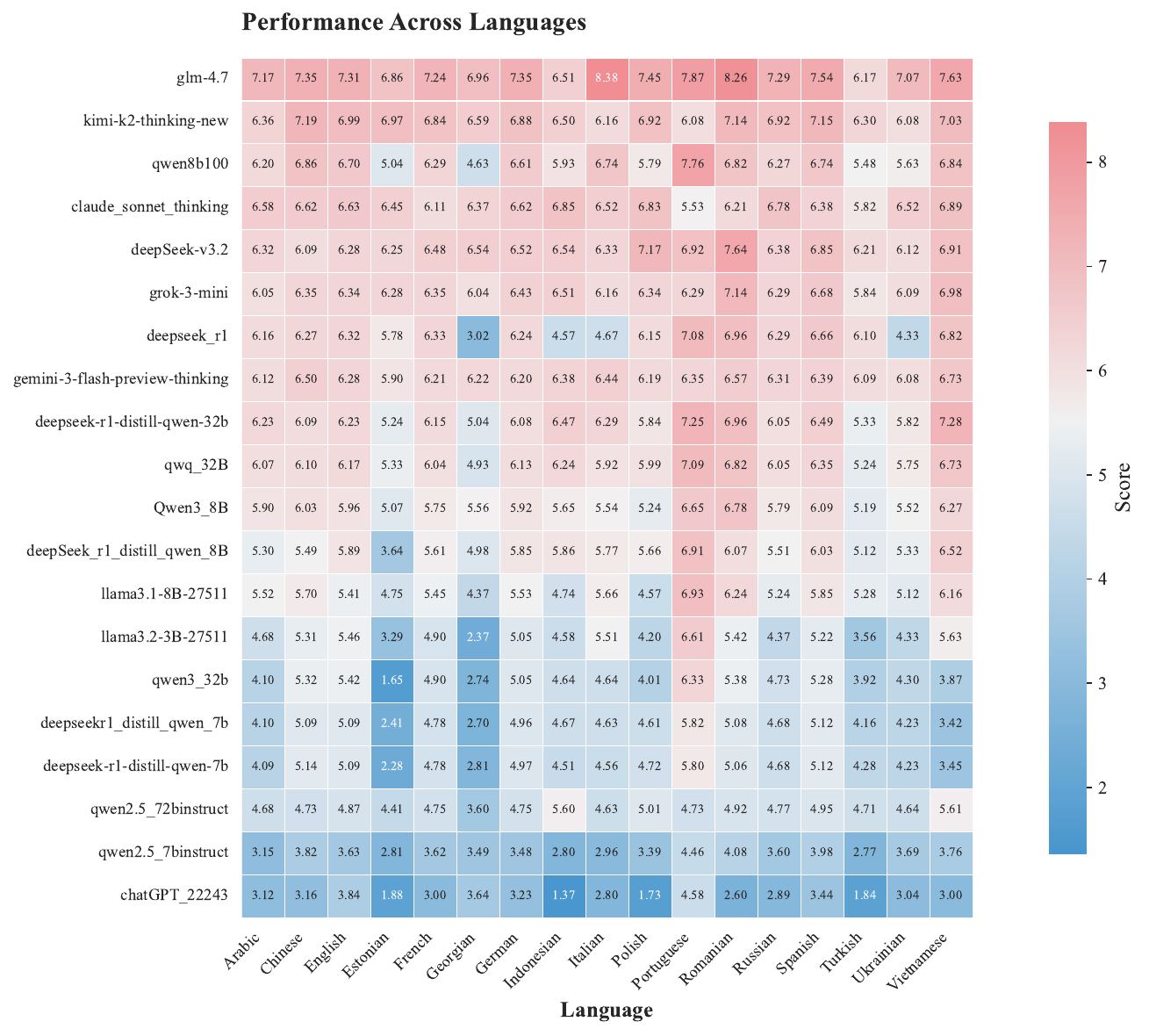}
  \caption{Heatmap showing model capabilities across multilingual settings}
  \label{fig:app_language_heatmap}
\end{figure}

\begin{figure}[htbp]
  \centering
  \includegraphics[width=1.0\textwidth]{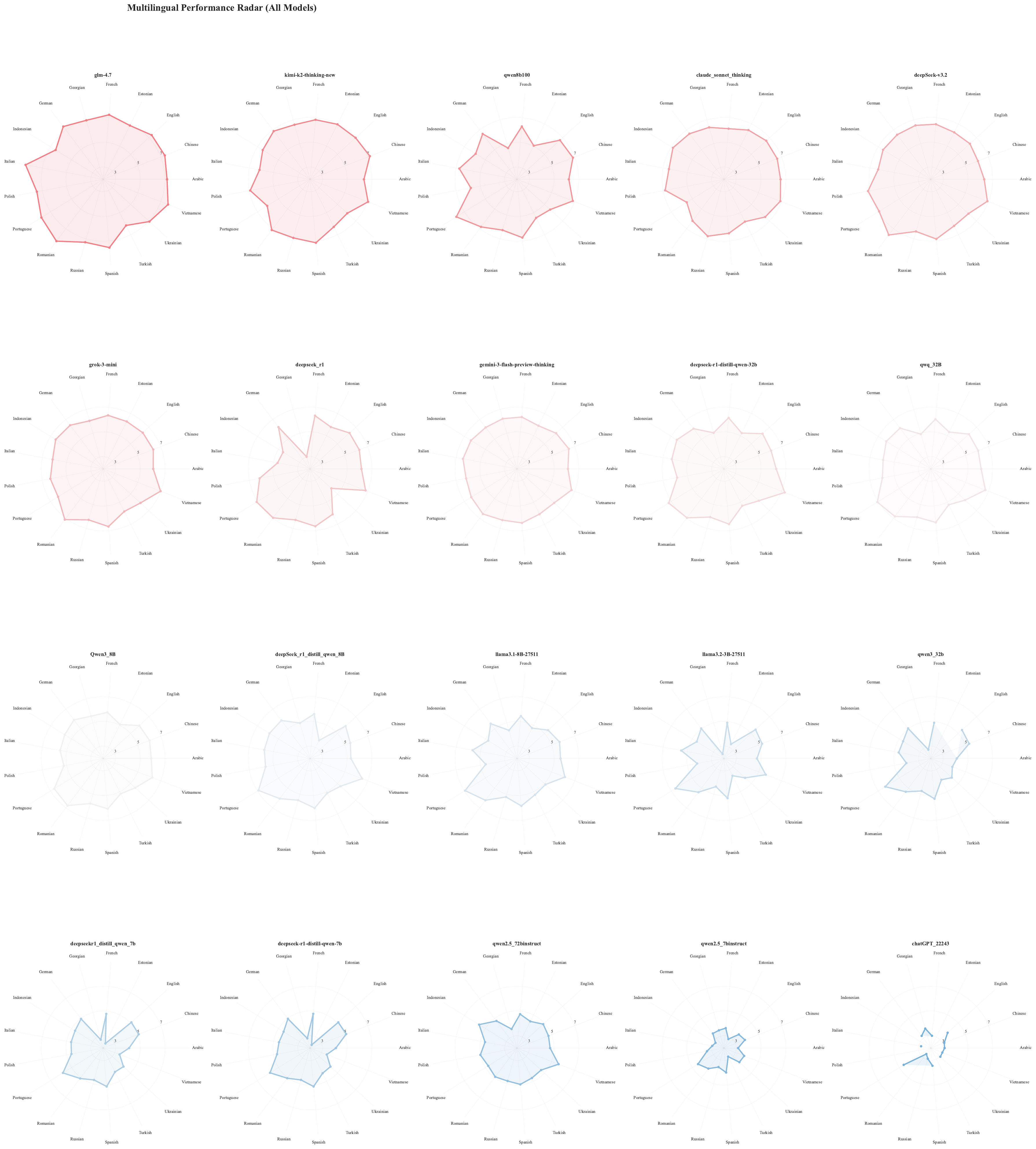}
  \caption{Heatmap showing model capabilities across multilingual settings}
  \label{fig:app_language_radar}
\end{figure}

\end{document}